\newtheorem*{claim}{Claim}
\newcommand{\red}[1]{{\color{red}#1}}
\newcommand{\orange}[1]{{\color{orange}#1}}
\definecolor{darkgreen}{rgb}{0.0, 0.2, 0.13}
\newcommand{\green}[1]{{\color{darkgreen}#1}}
\definecolor{lightgreen}{rgb}{0.0, 0.4, 0.26}
\newcommand{\lgreen}[1]{{\color{lightgreen}#1}}
\newcommand{\pflem}[1]{\begin{proof}[Proof of Lemma \ref{#1}] \label{#1p}}
\newcommand{\pfthm}[1]{\begin{proof}[Proof of Theorem \ref{#1}] \label{#1p}}
\newcommand{\epf}{\end{proof}}
\newcommand{\xmark}{\ding{55}}%
\newcommand{\cin}{n_{\text{in}}}
\newcommand{\cout}{n_{\text{out}}}
\newcommand{\win}{W_{\text{b}}}
\newcommand{\wout}{W_{\text{f}}}
\newcommand{\figscale}{0.3in}
\newcommand{\figspacescale}{-1.0in}
\newcommand{\cmark}{\ding{51}}%
\icmltitlerunning{Beyond Signal Propagation: Is Feature Diversity Necessary in Deep Neural Network Initialization?}
\begin{document}

\twocolumn[
\icmltitle{Beyond Signal Propagation:\\ Is Feature Diversity Necessary in Deep Neural Network Initialization?}



\icmlsetsymbol{equal}{*}

\begin{icmlauthorlist}
\icmlauthor{Yaniv Blumenfeld}{tec}
\icmlauthor{Dar Gilboa}{col}
\icmlauthor{Daniel Soudry}{tec}
\end{icmlauthorlist}

\icmlaffiliation{tec}{Technion, Israel}
\icmlaffiliation{col}{Columbia University}

\icmlcorrespondingauthor{Yaniv Blumenfeld}{yanivblm6@gmail.com}

\icmlkeywords{Machine Learning, ICML, Deel Learning, Signal Propagation, Random Features, Lottery Ticket, Mean field}

\vskip 0.3in
]



\printAffiliationsAndNotice{} 

\begin{abstract}
Deep neural networks are typically initialized with random weights, with variances chosen to facilitate signal propagation and stable gradients. It is also believed that diversity of features is an important property of these initializations. We construct a deep convolutional network with identical features by initializing almost all the weights to $0$. The architecture also enables perfect signal propagation and stable gradients, and achieves high accuracy on standard benchmarks. This indicates that random, diverse initializations are \textit{not} necessary for training neural networks. An essential element in training this network is a mechanism of symmetry breaking; we study this phenomenon and find that standard GPU operations, which are non-deterministic, can serve as a sufficient source of symmetry breaking to enable training.
\end{abstract}


\section{Introduction}







Random, independent initialization of weights in deep neural networks is a common practice across numerous architectures and machine learning tasks \cite{he2016deep, vaswani2017attention, szegedy2015going}. When backpropagation was first proposed \citep{Rumelhart1986}, neural networks were initialized randomly with the goal of breaking symmetry between the learned features. Empirically, it had been established that the method used to initialize the weights of each layer can have a significant effect on the accuracy of the trained model, and common initialization schemes \cite{glorot2010understanding, he2015delving} are generally motivated by ensuring that the variance of the neurons does not grow rapidly with depth at initialization.

Neural networks with random features \cite{rahimi2008random} have been thoroughly studied, and such models admit a detailed analysis of their dynamics and generalization properties. Unlike the features in a neural network, the randomly initialized features in these models are not trained. While random features models are known to be limited compared to ones with learned or hand-designed features \cite{yehudai2019power}, numerous studies have shown that classifiers can be trained to good accuracy relying solely on random, untrained features \cite{louart2018random,mei2019generalization}. 

A possible explanation for the role of random initialization was implied by the recently proposed "lottery ticket hypothesis" \cite{frankle2018lottery}. After showing that sparse networks could be trained as effectively as dense networks if certain subsets of the weights were initialized with the same values, the authors suggest that neural networks contain trainable sub-networks, characterized by their unique architecture and initialization. These sparse sub-network can be extracted by first training dense neural networks and selecting weights with large magnitudes. Additionally, \cite{ramanujan2019s} has shown that randomly initialized neural network contain sub-networks that achieve relatively high accuracy without any training. However, more recent studies of this topic \cite{frankle2019lottery,frankle2019linear} has put the significance of the initial sub-networks back into question. It was suggested that in the case of deeper models, the Lottery Ticket method is more effective when relying on the values of the weights in a dense network at advanced training epochs, and not the randomly initialization values.



While this may indicate that feature diversity is an important property of the network initialization, there is evidence to the contrary as well. Calculations of signal propagation in random neural networks \cite{poole2016exponential,schoenholz2016deep}, applied in \cite{xiao2018dynamical} to convolutional networks, led the authors to suggest the Delta Orthogonal initialization, which they have used to successfully train a 10,000 layer convolutional network without skip connections on the CIFAR-10 dataset. The resulting initialization is relatively sparse, with only a single non-zero entry per convolution filter. When studying residual neural networks with a similar approach, as was done in \cite{yang2017mean}, it is apparent that signal propagation is optimized when the entire signal passes through the residual connection at initialization, which can be achieved by simply initializing all weights that can be bypassed to zero. This approach is supported, to some extent by \cite{zhang2019fixup}, where the authors suggest the Fixup initialization in which the final layer of the residual block (a block which can be bypassed by a single skip connection) is initialized to zero. 

The inevitable side-effect of these "sparse" initializations is that the variety of the sub-networks at initialization is limited, as zero-initialized parameters do not contribute new sub-networks to the grand total.

The initialization schemes proposed above suggest that feature diversity may not be necessary. In this work, we address the fundamental question: \textbf{Does deviating from the standard of independent, random initialization of weights have negative effects on training?} We do so by taking the idea of feature diversity to the extreme, and design networks where all the initial features are identical, while the requirements of signal propagation are maintained. We present surprising evidence that some networks are capable of fully recovering from these naive initializations during training, given some trivial requirements. We characterize the process where the features' symmetry is broken, distinguish between different levels of symmetry, and suggest criteria for the overall feature diversity in the network, which is shown to be tied with the success of the model.

\section{Feature Diversity}

When considering the function implemented by a neural network, the hidden state at every layer can be seen as a collection of \textit{features}, extracted from the input by the preceding computational logic in the network. In a classification task, the subsequent logic in the network uses these features to classify the input to the target label. Features are a function of the inputs and therefore two features will be considered identical only if their respected neurons are equal for \textbf{all} possible inputs. As mentioned in the introduction, there are reasons to believe that identical features at initialization will be detrimental to training. Symmetries between parameters will, in general, induce symmetries on the features at initialization, though the manner in which the two are related will depend on the architecture. In a fully-connected layer for example, a sufficient condition for equality between two features is equality of the two corresponding rows of the weight matrix. 

\subsection{Shallow Networks with Identical Features} \label{sec:single_hidden_layer}

Before delving into deep models, it worthwhile to start by examining the effect of identical features in a simple toy model. For this task, we use a fully-connected single hidden layer with ReLU activations. Additional details are provided in Appendix \ref{sup:replicated features}.

The initial symmetry between features in this model can theoretically be broken by back-propagation alone. We test it by training the neural network over the MNIST dataset, while changing the width of the network $n$ and the number of copies (replicas) we initialized for each unique row/feature. 

As shown in figure \ref{fig:replicated_features}, the test accuracy of the network degrades as the number of unique features at initialization decreases. Additional results, shown in the Appendix in figure \ref{fig:replicated_features_errorbars}, even suggest that initializing features with replicas of existing features may result in worse accuracy than removing those features altogether. Nevertheless, the negative effect of feature replication can be ameliorated by the addition of a small random independent 'noise' to the initial values of each replica. This suggests that the diversity of features at initialization can have long-term implications on the success of training, yet this effect appears to vanish quickly when minor stochastic elements are introduced.



\begin{figure}
    \centering
    \begin{subfigure}
        \centering
        \includegraphics[width=3.0in]{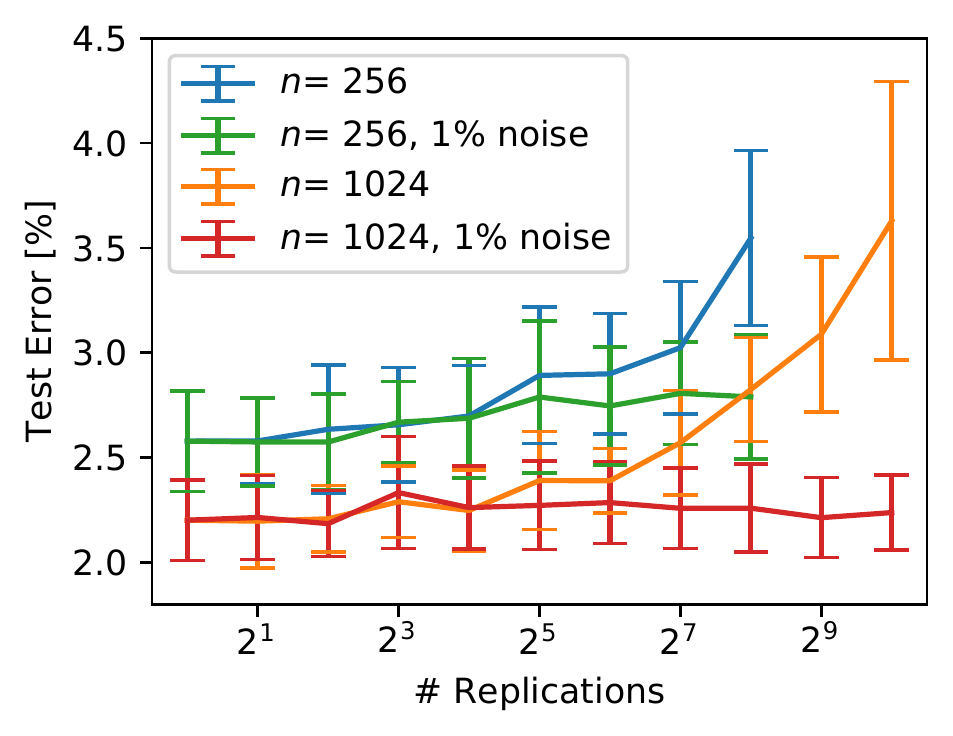}
    \end{subfigure}
    \vspace{-0.5cm}
    \caption{\small Test error of a 2-layers fully connected network, for networks with hidden layer of width $n$ initialized with replicated features. The network performance degrades as the number of \textbf{unique} features declines, unless a small amount of noise is introduced during initialization.} 
    \label{fig:replicated_features}
\end{figure}

\subsection{ConstNet - A Convolutional Network with Identical Features at Initialization}


In order to explore whether networks with identical features can be trained on more challenging tasks, we introduce ConstNet, an architecture based on the Wide-ResNet model \cite{zagoruyko2016wide}. Apart of standard computation operations, residual networks contain \textit{skip connections}, which can be described as an addition of identity operators to the operation performed by one or more layers. 
We provide full details of the design of ConstNet, in Appendix \ref{sup:constnet_detailed}. The ConstNet network, as used in the experiments, includes:

\begin{itemize}
    \item An initial convolutional layer, initialized to average the input channels to identical copies.
    \item A variable number of "skip-able" convolutions, with 3 layers where the number of features is increased (referred to as \textit{widening layers}). See figure \ref{fig:constnet_block} for illustration.
    \item All skip connections bypass a single ReLU + convolution block, \textit{with its weights initialized to $0$}.
    \item Values of the residual convolutions, used for network widening, are initialized to a constant.
    \item Skip section ends with a 2D pooling operation, followed by a fully connected layer initialized to zero.
    \item Optional batch-norm before each activation layer.
\end{itemize}

\begin{figure}[!ht]
    \centering
    \begin{subfigure}
        \centering
                \includegraphics[width=3.30in]{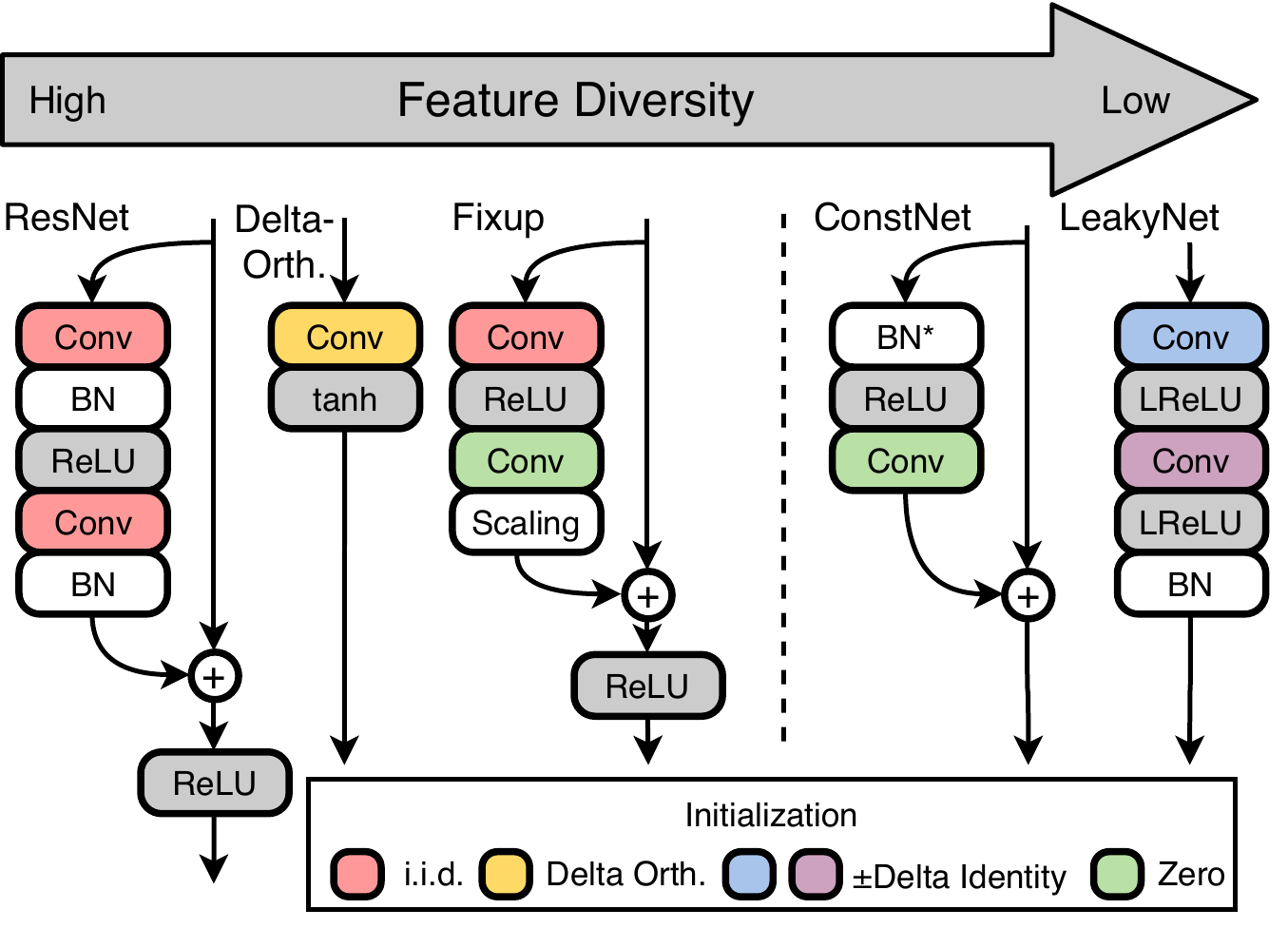}
    \end{subfigure}
    \vspace{-1cm}

    \caption{\small Convolutional architectures with reduced feature diversity. While a standard ResNet \cite{he2016deep} is initialized with i.i.d. random weights, the Fixup initialization \cite{zhang2019fixup} initializes one convolutional layer at $0$ while the Delta-Orthogonal initialization \cite{xiao2018dynamical} factorized the weight tensors, initializing the filter matrix deterministically and hidden state matrix with a random orthogonal initialization. ConstNet and LeakyNet by constrast are \textit{completely deterministic} at initialization with \textit{symmetrical features} as a result. In
    ConstNet all convolutional layers are initialized at $0$ while the LeakyNet initialization is identical to Delta-Orthogonal, with the random orthogonal matrix replaced by an identity that is multiplied by a negative factor at every odd layer. Both ConstNet and LeakyNet blocks implement an identity function at initialization. Batch-norm in ConstNet is optional, and we omit additional layers in Fixup that have no effect at initialization.}
    \label{fig:constnet_block}
\end{figure}

In order to motivate this architecture, we first show that due to the structure of the blocks in ConstNet, signals can propagate in a depth-independent manner from inputs to outputs, and gradients do not grow exponentially with depth. These are generally considered necessary  conditions for trainabiliy of a neural network. 


\section{Signal Propagation at Initialization} \label{sec:sig_prop}

Signal propagation in wide neural networks has been the subject of recent work for fully-connected \cite{poole2016exponential, schoenholz2016deep, pennington2017resurrecting, yang2017mean}, convolutional \cite{xiao2018dynamical} and recurrent architectures \cite{chen2018dynamical, gilboa2019dynamical}. These works study the evolution of covariances between the hidden states of the network and the stability of the gradients. At the wide-networks limit, the covariance evolution depends only on the leading moments of the weight distributions and the nonlinearities at the infinite width limit, greatly simplifying analysis. They identify critical initialization schemes that allow training of very deep networks (or recurrent networks on long time sequence tasks) without performing costly hyperparameter searches.

We briefly review standard approaches to the study of signal propagation in neural networks, before specializing to ConstNet. To begin, we consider a fully-connected feed-forward network $f(x)$ given by 
\[
\begin{array}{c}
\alpha^{(0)}(x)=x\\
\tilde{\alpha}^{(\ell)}(x)=W^{(\ell)}\alpha^{(\ell-1)}(x)+b^{(\ell)}\quad l=1,\dots,L-1\\
\alpha^{(\ell)}(x)=\phi(\tilde{\alpha}^{(\ell)}(x))\quad l=1,\dots,L-1\\
f(x)=W^{(L)}\alpha^{(L-1)}(x)+b^{(L-1)}.
\end{array}
\]
with $W^{(\ell)} \in \mathbb{R}^{n_{\ell } \times n_{\ell - 1}}, b^{(\ell)} \in \mathbb{R}^{n_l}$ initialized according to $W^{(\ell)}_{ij} \sim \mathcal{N}(0,\sigma_\ell^2)$ i.i.d., $b^{(\ell)}_i=0$, and $\phi$ is a nonlinearity that acts element-wise on vectors. Here $\alpha^{(\ell)}(x) \in \mathbb{R}^{n_\ell}$ is a vector of \textit{forward features} at layer $\ell$. We denote the scalar loss by $\mathcal{L }(f(x))$.

Common initialization schemes are motivated by guaranteeing stability of signals propagating from the inputs to the outputs by ensuring that the variance of the forward features does not change with depth \cite{he2016deep, vaswani2017attention, szegedy2015going}. For example, if we take $\phi$ to be the ReLU activation function, since  $\underset{j}{\sum}W_{ij}^{(\ell)}\alpha_{j}^{(\ell-1)}|\alpha^{(\ell-1)}$ is a Gaussian with variance $\sigma_{\ell}^2\left\Vert \alpha^{(\ell-1)}\right\Vert _{2}^{2}$ we obtain 
\[
\underset{W^{(\ell)}}{\mathbb{E}}\left(\alpha_{i}^{(\ell)}\right)^{2}|\alpha^{(\ell-1)}=\underset{W^{(\ell)}}{\mathbb{E}}\phi\left(\underset{j=1}{\overset{n_{\ell - 1}}{\sum}}W_{ij}^{(\ell)}\alpha_{j}^{(\ell-1)}\right)^{2}|\alpha^{(\ell-1)}
\]
\[
=\sigma_\ell^{2}\left\Vert \alpha^{(\ell-1)}\right\Vert _{2}^{2}\underset{0}{\overset{\infty}{\int}}g^{2}\mathcal{D}g=\frac{\sigma_\ell^{2}\left\Vert \alpha^{(\ell-1)}\right\Vert _{2}^{2}}{2}
\]
where $\mathcal{D}g$ is a standard Gaussian measure. The variance is preserved if $\mathbb{E}\left(\alpha_{i}^{(\ell)}\right)^{2}|\alpha^{(\ell-1)}=\frac{\left\Vert \alpha^{(\ell-1)}\right\Vert _{2}^{2}}{n_{\ell - 1}}$ which is guaranteed by choosing $\sigma_\ell^{2}=\frac{2}{n_{\ell - 1}}$.

Additionally, one can ensure that not only the variances of features are insensitive to depth but also the covariances between features given different inputs \cite{poole2016exponential,schoenholz2016deep, xiao2018dynamical}. For the forward features, these are given by
\begin{equation} \label{eq:covs}
    \Sigma_j^{(\ell)}(x,x')=\underset{\substack{\{ W^{(i)},\\b^{(i)} \} }}{\mathbb{E}}\left(\!\!\begin{array}{cc}
\left(\alpha_{j}^{(\ell)}(x)\right)^{2} & \alpha_{j}^{(\ell)}(x)\alpha_{j}^{(\ell)}(x')\\
\alpha_{j}^{(\ell)}(x)\alpha_{j}^{(\ell)}(x') & \left(\alpha_{j}^{(\ell)}\!\!\right)^{2}
\end{array}\!\!\right)
\end{equation}

where the expectations are taken over the weight distribution. At the wide network limit with random weights under weak moment assumptions, the above equation reduces to a deterministic dynamical system since the pre-activations are jointly Gaussian as a consequence of the Central Limit Theorem \cite{poole2016exponential, schoenholz2016deep, matthews2018gaussian}. As a result, at this limit the signals propagated through the network can be described completely in terms of these covariances, which are also independent of the neuron index $j$. Additionally, for a sufficiently wide network (typically once the width is few hundred neurons) the ensemble average above has proven to be predictive of the behavior of individual networks.

The covariances evolve according to 
\[
\Sigma^{(\ell+1)}(x,x')=\!\!\!\!\!\underset{\substack{
(u_{1},u_{2})\sim\\
\mathcal{N}(0,n_{\ell }\sigma_{\ell }^{2}\Sigma^{(\ell)}(x,x'))
}}{\mathbb{E}}\!\!\!\!\!\left(\!\!\!\begin{array}{cc}
\phi(u_{1})^{2} & \phi(u_{1})\phi(u_{2})\\
\phi(u_{1})\phi(u_{2}) & \phi(u_{2})^{2}
\end{array}\!\!\!\right).
\]
By studying this dynamical system one can obtain initialization schemes that allow signals to propagate stably, even in very deep networks, enabling them to be trained \cite{schoenholz2016deep, xiao2018dynamical}\footnote{These works in fact study closely related covariances defined for the pre-activations instead since these are jointly Gaussian at the infinite width limit and thus the evolution of the covariances obeys a simple closed form equation. See Appendix B of \cite{blumenfeld2019mean} for details of the relation between the two.}. 

In order to facilitate trainability with gradient descent, one can also study the variance of the \textit{backward features} 
\[
\beta_{j}^{(\ell)}(x)=\frac{\partial\mathcal{L }}{\partial\tilde{\alpha}_{j}^{(\ell)}(x)}
\]
which are of interest, since the gradients take the form 
\begin{equation} \label{eq:grad_fc}
\frac{\partial\mathcal{L }(x)}{\partial W_{ij}^{(\ell)}}=\beta_{i}^{(\ell)}(x)\alpha_{j}^{(\ell-1)}(x).
\end{equation}
In the case of convolutional networks, where $W^{(\ell)}\in\mathbb{R}^{K\times n_{\ell}\times n_{\ell-1}},b^{(\ell)}\in\mathbb{R}^{n_{\ell}}$, the backward and forwards features are tensors $\alpha^{(\ell)}(x)\in\mathbb{R}^{S^{(\ell)}\times n_{\ell}},\beta^{(\ell)}(x)\in\mathbb{R}^{S^{(\ell)}\times n_{\ell}}$ where $S^{(\ell)}$ is the space of spatial dimensions (pixels) at layer $\ell$. We will denote by $\gamma$ (or other Greek letters) a vector denoting the spatial location and $K$ denotes the dimensions of the kernel. We denote the convolution with respect to the kernel and a summation over the feature index by $\hat{*}$. The features are updated according to 
\[
\begin{array}{c}
\alpha^{(0)}(x)=x,\\
\begin{array}{c}
\tilde{\alpha}_{\gamma j}^{(\ell)}(x)=\left[W^{(\ell)}\hat{*}\alpha^{(\ell-1)}(x)\right]_{\gamma j}+b_{j}^{(\ell)}\\
=\underset{i=1}{\overset{n_{\ell-1}}{\sum}}\underset{\kappa\in K}{\sum}W_{\kappa ij}^{(\ell)}\alpha_{\gamma+\kappa, i}^{(\ell-1)}(x)+b_{j}^{(\ell)},
\end{array}\\
\alpha^{(\ell)}(x)=\phi(\tilde{\alpha}^{(\ell)}(x)),\\
f(x)=P(\alpha^{(L-1)}(x)).
\end{array}
\]
where $P$ is a function independent of depth (typically a composition of a pooling operation and an affine map). For simplicity of exposition we also assume periodic boundary conditions in the spatial dimensions. The covariances above can then be generalized to a tensor $\Sigma_{\gamma\gamma'jj'}^{(\ell)}(x,x')$ that can be analyzed in a similar manner to the fully-connected case \cite{xiao2018dynamical}. Analogously to eq. \ref{eq:grad_fc}, the gradients are related to the features by 
\begin{equation} \label{eq:grad_conv}
    \frac{\partial\mathcal{L}(x)}{\partial W_{\kappa ij}^{(\ell)}}=\underset{\gamma}{\sum}\beta_{\gamma i}^{(\ell)}(x)\alpha_{\gamma+\kappa,j}^{(\ell-1)}(x).
\end{equation}
At the infinite width it was shown that using smooth activations, stable gradients at initialization can be obtained with careful hyper-parameter tuning, inducing a dependence between the weight variance and the depth \cite{pennington2017resurrecting}. In \cite{burkholz2019initialization} it was recently shown that weight sharing at initialization enables signal propagation in feed-forward networks with ReLU activations. In Appendix \ref{app:rand_sigprop} we show that this approach can also be extended to convolutional networks by constructing a random initialization that guarantees signal propagation surely (and not just in expectation over the weights). As a consequence, the result is applicable to networks of arbitrary width.

 We also note that the FixUp initialization \cite{zhang2019fixup} does not exhibit stable backwards signal propagation since the gradients to certain layers within each block are zero at initialization (see figure \ref{fig:constnet_block} for details). 

\subsection{Depth-independent signal propagation with constant weights and skip connections}\label{sec:signal_prop_skips}
We now consider a class of neural networks that exhibit perfect forward signal propagation to arbitrary depth at initialization, and depth-independent backwards signal propagation by dispensing with randomness at initialization and utilizing skip connections. ConstNet is a member of this class.

Given a hidden state tensor $\alpha\in\mathbb{R}^{h\times w\times n}$ where $h,w$ are spatial dimensions and $n$ is the number of filters, we define for some integer $s$ such that $h\mod s=w\mod s=0$ a \textit{widening block} by 
\[
\text{WB}_{W,b,s,n}:\mathbb{R}^{h\times w\times n}\rightarrow\mathbb{R}^{h/s\times w/s\times sn},
\]
\[
\left[\text{WB}_{W,b,s,n}(\tilde{\alpha})\right]_{\gamma i}=\frac{1}{n}\underset{j=1}{\overset{n}{\sum}}\tilde{\alpha}_{s\gamma,j}+\left[W\hat{*}g(\tilde{\alpha})\right]_{\gamma i}+b_{i}.
\]
where $g:\mathbb{R}^{h\times w\times n}\rightarrow\mathbb{R}^{h\times w\times n}$ is a differentiable function such as a composition of a non-linearity and batch-normalization, and $W \in \mathbb{R}^{K \times sn\times n}, b \in \mathbb{R}^{sn}$\textit{are initialized as $0$}. 
This is equivalent at initialization to a convolution with a $1 \times 1$ identity filter, stride $s$ and a constant matrix acting on the channels. 

We define a \textit{ConstNet block function} as a map 
\[
\text{CB}_{W,b,n}:\mathbb{R}^{h\times w\times n}\rightarrow\mathbb{R}^{h\times w\times n},
\]
\[
\left[\text{CB}_{W,b,n}(\tilde{\alpha})\right]_{\gamma i}=\tilde{\alpha}_{\gamma i}+\left[W\hat{*}g(\tilde{\alpha})\right]_{\gamma i}+b_{i}
\]
where $W \in \mathbb{R}^{K \times n\times n}, b \in \mathbb{R}^{n}$ \textit{are initialized as $0$}. 

Considering inputs $x\in\mathbb{R}^{S^{(0)}\times n_{d}}$, we define a depth $L$ \textit{ConstNet function} by
\begin{equation} \label{eq:constnetfunc}
    f(x,B)=P( O_{L-1}( O_{L-2}(\dots O_{1}(\tilde{\alpha}^{(0)}(x)) \dots )))
\end{equation}
where $O_i$ is either a ConstNet block or a widening block, $P$ is a differentiable operation (typically a composition of pooling and an affine map) and $B$ is a batch of datapoints containing $x$. We define $\tilde{\alpha}^{(0)}(x,B)\in\mathbb{R}^{S^{(0)}\times n_{0}}$ to be a function of $B$ (but dropping the $B$ dependence to lighten notation), that obeys $\tilde{\alpha}_{\gamma i}^{(0)}(x)=\tilde{\alpha}_{\gamma j}^{(0)}(x)$ and normalized so that $\underset{\gamma\in S^{(0)}}{\sum}\underset{j\in B}{\sum}\tilde{\alpha}_{\gamma i}^{(0)}(x_{j})=0$ and $\underset{\gamma\in S^{(0)}}{\sum}\underset{j\in B}{\sum}\left(\tilde{\alpha}_{\gamma i}^{(0)}(x_{j})\right)^{2}=1$ (which can be achieved by applying a convolution and batch normalization operation to $x$ for instance). We assume the stride parameter $s$ and $S^{(0)}$ are chosen such that the spatial dimension remains larger than $0$ throughout. 

We consider translation invariant inputs, meaning for any $x,x'\in\mathbb{R}^{S^{(0)}\times n_d}$ drawn from the data distribution we have 
\[
x_{\gamma k}\overset{d}{=}x'_{\gamma ' k}
\]
where $\overset{d}{=}$ denotes equality in distribution and $\gamma,\gamma'$ are vectors denoting spatial location.
A local invariance to translation is a well-studied property of natural images and is believed to be key to the widespread use of convolutional networks in image classification tasks. 


\begin{claim}
Let $f$ be an $L$-layer ConstNet function as in eq. \ref{eq:constnetfunc} and denote the scalar loss by $\mathcal{L}$. Then for any $0 \leq \ell \leq L-1$ we have 
\[
\frac{\left\langle \beta^{(\ell)}(x),\beta^{(\ell)}(x')\right\rangle }{n_{\ell}}=\frac{\left\langle \beta^{(L-1)}(x),\beta^{(L-1)}(x')\right\rangle }{n_{L-1}}
\]
\[
\begin{array}{c}
\frac{\partial\mathcal{L}(x)}{\partial W_{\kappa ij}^{(\ell)}}=C_{ij}\cos(\theta_{\kappa,\ell}) \\
\frac{\partial\mathcal{L}(x)}{\partial b_{i}^{(\ell)}}=C_{i}'
\end{array}
\]
where $C_{ij},C_{i}'>0$ are constants that are independent of $L$ (but depend on the functions $P,g$ in the definition of the ConstNet function and on $\mathcal{L}$). $\theta_{\kappa,\ell}$ are constants that can depend on $L$.

Additionally, for translation invariant inputs, if we denote the spatial dimensions at layer $\ell$ by $S^{(\ell)}$ we have for any $\gamma,\gamma' \in S^{(\ell)}$ 
\[
\tilde{\alpha}_{\gamma i}^{(\ell)}(x)\overset{d}{=}\tilde{\alpha}_{\gamma'1}^{(0)}(x)
\]
\[
\frac{\left\langle \tilde{\alpha}^{(\ell)}(x),\tilde{\alpha}^{(\ell)}(x')\right\rangle }{\left\vert S^{(\ell)}\right\vert n_{\ell}}\overset{d}{=}\frac{\left\langle \tilde{\alpha}^{(0)}(x),\tilde{\alpha}^{(0)}(x')\right\rangle }{\left\vert S^{(0)}\right\vert n_{0}}
\]
\end{claim}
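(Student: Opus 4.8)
The plan is to exploit the fact that at initialization every learnable weight is zero, so each block reduces to an explicit linear map, and then to track both the forward features and the backpropagated signal through these maps. Setting $W=b=0$ in the definitions, a ConstNet block $\mathrm{CB}$ becomes the identity $\tilde{\alpha}^{(\ell+1)}_{\gamma i}=\tilde{\alpha}^{(\ell)}_{\gamma i}$, while a widening block $\mathrm{WB}$ becomes the channel-average followed by a spatial subsampling, $[\mathrm{WB}(\tilde{\alpha})]_{\gamma i}=\frac{1}{n}\sum_{j}\tilde{\alpha}^{(\ell)}_{s\gamma,j}$, whose output does not depend on the channel index $i$. Since $\tilde{\alpha}^{(0)}$ is channel-identical by hypothesis, an induction over the blocks shows that $\tilde{\alpha}^{(\ell)}_{\gamma i}(x)=a^{(\ell)}_{\gamma}(x)$ is channel-identical at every layer and that each widening block merely composes another spatial subsampling, so $a^{(\ell)}_{\gamma}(x)=\tilde{\alpha}^{(0)}_{\sigma_{\ell}\gamma,1}(x)$, where $\sigma_{\ell}$ is the cumulative stride up to layer $\ell$. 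In particular the value at layer $\ell$ is literally a subsampled copy of the layer-$0$ field, independent of the total depth $L$.

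From this I would read off the two distributional identities. Because convolution (with periodic boundary conditions), batch normalization and subsampling are translation-equivariant, the translation invariance of the inputs is inherited by $\tilde{\alpha}^{(0)}$, so $\tilde{\alpha}^{(0)}_{\mu 1}(x)\overset{d}{=}\tilde{\alpha}^{(0)}_{\gamma'1}(x)$ for all $\mu,\gamma'$; combined with the subsampling identity this gives $\tilde{\alpha}^{(\ell)}_{\gamma i}(x)\overset{d}{=}\tilde{\alpha}^{(0)}_{\gamma'1}(x)$ at once. For the covariance, using channel-identity the normalized inner product collapses to the spatial average $\frac{1}{|S^{(\ell)}|}\sum_{\gamma\in S^{(\ell)}}a^{(0)}_{\sigma_{\ell}\gamma}(x)\,a^{(0)}_{\sigma_{\ell}\gamma}(x')$ of the product field $Y_{\mu}=a^{(0)}_{\mu}(x)\,a^{(0)}_{\mu}(x')$, whereas the right-hand side is the full-grid average of the same $Y$. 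I would then argue that the joint translation invariance of the pair $(x,x')$ makes $Y$ a stationary field, so that the subsampled-grid average and the full-grid average share the same law, which is precisely the claimed equality in distribution.

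For the backward pass I would differentiate the same explicit maps. The Jacobian of a ConstNet block is the identity, since the residual branch $W\hat{*}g$ contributes a term proportional to $W=0$; hence $\beta^{(\ell)}=\beta^{(\ell+1)}$ across every ConstNet block, with $n_{\ell}=n_{\ell+1}$ and $S^{(\ell)}=S^{(\ell+1)}$, so $\langle\beta^{(\ell)},\beta^{(\ell)}\rangle/n_{\ell}$ is left unchanged. Because increasing $L$ only inserts such identity blocks, this quantity is depth-independent, which is the first displayed identity, the fixed number of widening blocks only rescaling and upsampling $\beta$ by a constant that does not grow with $L$. The gradient formulas then follow from eq.~\ref{eq:grad_conv}: $\partial\mathcal{L}/\partial W^{(\ell)}_{\kappa ij}$ equals the spatial cross-correlation $\sum_{\gamma}\beta^{(\ell)}_{\gamma i}(x)\,g(\tilde{\alpha}^{(\ell-1)})_{\gamma+\kappa,j}(x)$ between the channel-$i$ backward field and the channel-$j$ activation field shifted by $\kappa$. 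Periodicity makes the norm of the shifted activation field independent of $\kappa$, so writing the correlation as a product of norms times a cosine yields a magnitude $C_{ij}$ (the product of the two field norms) and a shift-dependent cosine $\cos(\theta_{\kappa,\ell})$; both fields being $L$-independent gives $C_{ij}>0$ independent of $L$, while the bias gradient is the $L$-independent spatial sum $\sum_{\gamma}\beta^{(\ell)}_{\gamma i}(x)=C_{i}'$.

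The main obstacle is twofold. On the backward side the entire argument rests on the ConstNet block being an exact identity in both passes, that is, on the residual-branch Jacobian vanishing at $W=0$; this is what decouples the signal from the depth, and I would verify it carefully, since any surviving first-order term would reintroduce depth dependence. On the forward side the delicate point is the covariance identity: reducing it to a spatial average is routine, but establishing that the subsampled-grid average equals the full-grid average \emph{in distribution} genuinely requires the joint translation invariance of the input pair, and I would isolate exactly which stationarity property of $Y$ is used. A secondary subtlety is the clean factorization $C_{ij}\cos(\theta_{\kappa,\ell})$, which needs all channels to share a common normalized spatial profile with the per-channel magnitudes supplying $C_{ij}$; I would derive this from the channel-identical forward features together with the per-channel structure introduced by $g$ and $P$.
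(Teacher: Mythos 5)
Your proposal is correct and follows essentially the same route as the paper's proof: the same induction showing the pre-activations at initialization are channel-identical, subsampled copies of the layer-$0$ field, the same appeal to translation invariance for the two distributional identities, the same observation that the residual branch contributes nothing to the Jacobian at $W=0$ so the backward signal passes only through identity/averaging maps, and the same norms-times-cosine factorization of eq.~\ref{eq:grad_conv} for the gradient formulas. The only cosmetic difference is that you propagate the backward features block-by-block while the paper composes the map from layer $\ell$ directly to layer $L-1$; your explicit caveat that widening blocks rescale $\beta$ by an $L$-independent constant matches the paper's own closing remark that any residual depth dependence comes from the widening layers rather than the ConstNet blocks.
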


\textit{Proof:} See Appendix \ref{app:constnet_sigprop}

The claim shows that angles between forward and backward features are preserved by ConstNet (the former only in distribution), and that the gradients cannot grow exponentially with depth. The stability of angles between inputs is known to be predictive of trainability and generalization in many architectures \cite{schoenholz2016deep,pennington2017resurrecting,xiao2018dynamical}. In networks where angles are not preserved, training tends to fail.
Note that the stability conditions in the above claim hold for arbitrary width, unlike similar results that only apply to wide networks. 

One can ask whether the depth-independent signal propagation in ConstNet is a sufficient condition for it to be trainable. Indeed, this property is often considered a necessary but insufficient condition for trainability. The answer turns out to be negative due to the symmetry between features at all layers (which is also not surprising, given the results in Section \ref{sec:single_hidden_layer}), yet there happens to be a simple solution to this issue.


\section{Feature Symmetry and Symmetry Breaking}\label{sec:featureSym}





Two different features of the same hidden layer, which were initialized to present an identical function of the input, are still expected to diverge during back-propagation if their connection to the output is weighted differently. However, in the case when the their respective connection to the following layer is also symmetrical, the two features become interchangeable, and are expected to get the same updates in a deterministic process. This is the case for all the initial features in ConstNet as well. It stands to reason that some form of stochasticity must be introduced into the training process, in order for the initial symmetry to be broken. This could be achieved by a deliberate injection of noise to the parameters or gradients, or by relying on existing training mechanisms, such as Dropout.

One additional source of stochasticity that is often overlooked is the computation process itself. For example, the order of execution when parallelizing GEMM (General Matrix Multiplication) operations is often non-deterministic, resulting a stochastic output for non-commutative operations (including the addition of floating point values). It is therefore one of the mechanisms that may enable training with gradient descent to break the symmetry between the features at initialization. In our experiments, we used Nvidia GTX-1080 GPUs and the cudNN library, which is non-deterministic by default\footnote{See \url{https://github.com/NVIDIA/tensorflow-determinism} for additional details.}. 




\begin{table}[t]
\begin{center}
\begin{tabular}{| c  | c | c  | c |}
\hline
Model           & Init            & Variant         &  Accuracy [\%]   \\ [1.0ex]
\hline
Wide-ResNet     & He              &  --              &       $95.77 \pm 0.05$       \\ [1ex] 
\hline
ConstNet        & He             &  --    &         $95.40 \pm 0.07$         \\ [1ex] 
\hline
ConstNet        & $0$             &  1\% Dropout    &         $95.46 \pm 0.13$         \\ [1ex] 
\hline
ConstNet        & $0$             &  --              &       $95.37 \pm 0.06$ \\ [1ex] 
\hline
ConstNet        & $0$             & Deterministic            &     \red{24.79}  $ \pm 0.58$             \\ [1ex] 
\hline

\end{tabular}
\end{center}
\caption{Stochastic GPU computations enables training with identical features. Test accuracy on CIFAR-10, with 300 epochs. A minimal mechanism of symmetry breaking (e.g small dropout or non-deterministic GPU operations) is sufficient for ConstNet with almost all weights initialized at $0$ to train, matching the test accuracy achieved with standard random initialization. Training fails without such a mechanism (the "Deterministic" variant).} 
\vspace{-0.2cm}

\label{tbl:constnet_results}
\end{table}

Results of training ConstNet with different symmetry breaking mechanisms, as well as comparison with the baseline of He random initialization, can be seen in figure \ref{tbl:constnet_results}. Surprisingly, the effect of identical features at initialization appears to be minor whenever any form of symmetry breaking was allowed, and even negligible when non-deterministic computation noise is allowed. Not only was the training process capable of breaking features symmetry, but the overall test accuracy was not affected when compared to runs where the initial features were diverse (when 'He' initialization was used). Our immediate conclusion is that the fixed point where networks features are identical is extremely unstable. When no mechanism of symmetry breaking is present (the deterministic run with no Dropout) training fails.

It should be noted that '$0$' initialization does have its flaws, which were not captured in this experiment. The loss/accuracy curves of training when '$0$'-init was used, were typically few epochs behind the curves of its random initialization counterparts. On some occasions, depending on the hardware specifications, we have identified cases where the '$0$'-init runs relying only on hardware noise were stuck on the initial stages of training (10\% accuracy) for a random number of epochs. This phenomenon can be avoided by adding dropout as a complementary symmetry breaking mechanism, and did not significantly affect the final accuracy, due to the high number of epochs in this experiment. More results regarding the effect of different symmetry breaking mechanisms are detailed in appendix \ref{sup:comparing_symbreak} and appendix \ref{sup:lim}. Furthermore, $0$-initialized network were less robust to learning rate changes, and would fail completely for high learning rates, which could be handled by 'He' initialized networks.

\subsection{Evolution of Weight Correlations}\label{sec:symmetryBreak}


To analyze the phenomena of symmetry breaking, we will first define macro parameters which can represent the degree of symmetry in our network. Given a convolution with the weights $W\in \mathbb{R}^{k_x \times k_y  \times \cout \times \cin }$, we can split the weight tensor $W$ to either $\cout$ tensors that defines each of our output channels as $\wout^i \in \mathbb{R}^{k_x \times k_y \times \cin  }, 0 \le i < \cout $, or split it to $\cin$ tensors, where each tensor, $\win^i \in \mathbb{R}^{k_x \times k_y \times \cout}, 0 \le i < \cin $, contain the weights that operate on a single input channel. For our macro-parameter, we will look at the mean \textit{correlation} between the different tensors ($\win$ or $\wout$), and denote them as the \textit{forward correlations} $C_f$ and \textit{backward correlations} $C_b$ as:
\begin{align}\label{eq:correlationDef}
    C_f &\equiv \frac{1}{\cout\left(\cout-1\right)}\sum_{i}^{\cout}\sum_{i\ne j}^{\cout} \frac{\wout^{i}\cdot\wout^{j}}{\left\Vert \wout^{i}\right\Vert _{F}\left\Vert \wout^{j}\right\Vert _{F}},\\
    C_b &\equiv \frac{1}{\cin\left(\cin-1\right)}\sum_{i}^{\cin}\sum_{i\ne j}^{\cin}\frac{\win^{i}\cdot\win^{j}}{\left\Vert \win^{i}\right\Vert _{F}\left\Vert \win^{j}\right\Vert _{F}}
\end{align}
where $A \cdot B$ is an element-wise dot product operations, and $\left\Vert \cdot \right\Vert _{F}$ is Frobenius norm. For the case of a fully-connected layer, the operation can be viewed as a convolution where each of the input's and output's channels is of size $1\times1$, with a kernel of the same size. 





The definitions of backward and forward correlations allow us to go back and examine the different initializations presented in section \ref{sec:featureSym}:
\begin{itemize}
    \item For a weight tensor initialized with random, unbiased initialization, $C_f=0, C_b=0$ in the limit where the number of input/output channels or kernel dimensions goes to infinity.
    \item For an orthogonal initialized weight tensor in the channel dimensions, $C_f=0, C_b=0$.
    \item When the identity operation is expanded to also replicate $\cin$ channels to $\cout=P\times\cin$,  $C_f=\frac{P-1}{\cout-1}, C_b=0$.
    \item When all the filters are initialized to a constant $C_f=1, C_b=1$.
    \item Replicating features affects the forward correlation only. For $d$ sized output and $R$ replication per feature, $C_f=\frac{R-1}{d-1} , C_b=0$.
    \item For zero initialization, the fraction is undefined, but will be considered as fully correlated, $C_f=1, C_b=1$.
\end{itemize}

In the case of ConstNet with '$0$' init, all layers had been initialized with $C_f=1, C_b=1$. In contrast, standard random initialization will result in near-zero correlations. Correlations at a specific layer will be denoted by $C_f(\ell), C_b(\ell)$. 

A characteristic example for the evolution of the forward and backward correlations of a single tensor in ConstNet, during training, can be seen in figure \ref{fig:Constnet_correlations_full_run}, and the full results can be see in figure \ref{fig:full_run_constnet}. An interesting finding when comparing random and constant initialization is that in almost all layers, the forward correlations appear to converge to similar values for both initializations (He or ConstNet). Due to the highly nonconvex optimization landscape, one might expect the properties of the solutions that gradient descent finds to depend on the initialization in some way. At least with respect to feature correlations this appears to not be the case. 

Since perfectly correlated features during inference are equivalent to a single neuron, it is tempting to interpret the dissimilarity $1+(n_\ell-1)\sqrt{1-C_f(\ell)^2}$ as an "effective width" of a layer. The convergence of the correlations in figure \ref{fig:Constnet_correlations_full_run} despite different initializations indicate that gradient descent is biased towards solutions with a certain effective width regardless of the initial degree of feature diversity, as long as there is a symmetry breaking mechanism present. 

\begin{figure}
    \centering
    \begin{subfigure}
        \centering
        \includegraphics[width=3.0in]{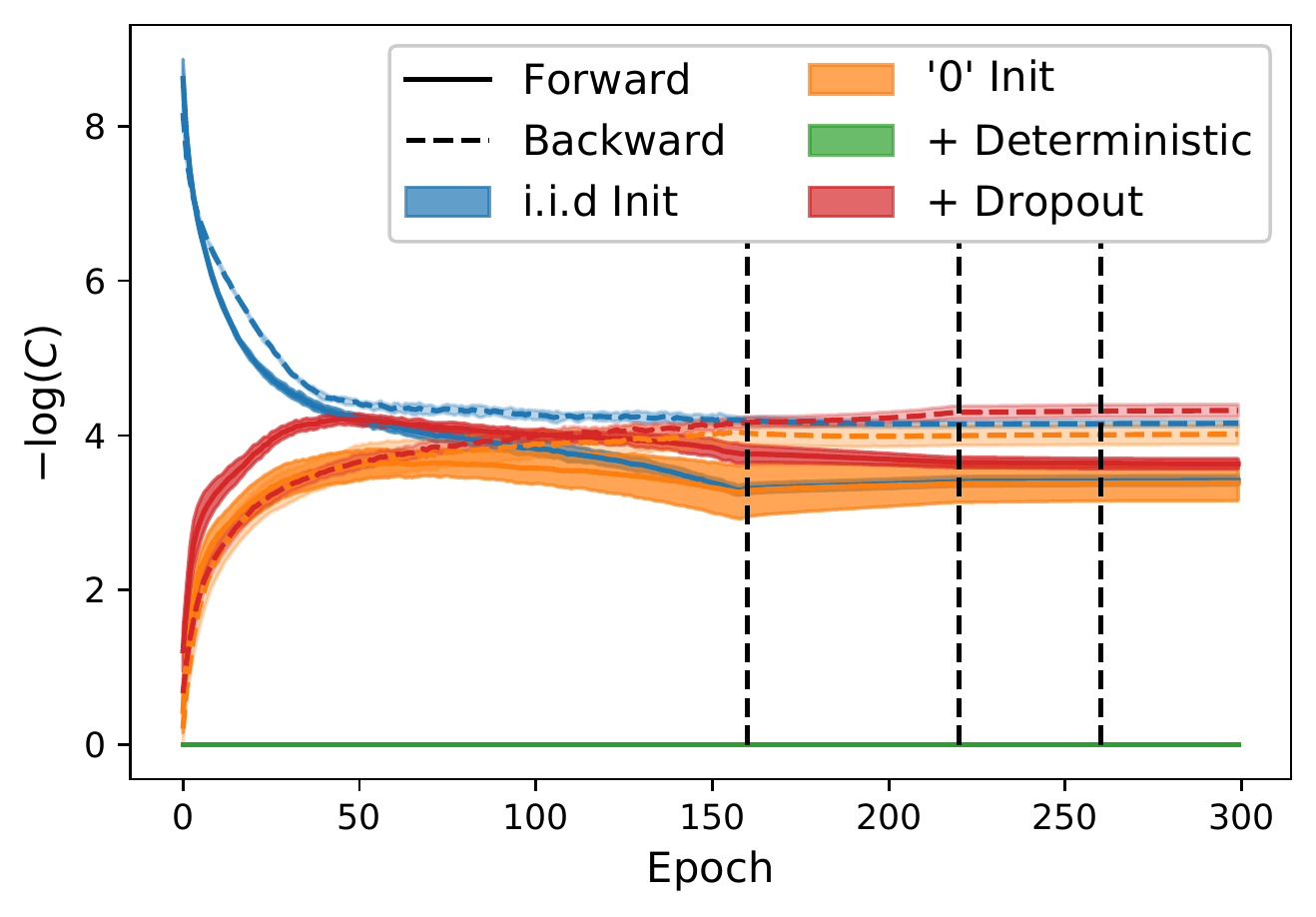}
    \end{subfigure}
    \caption{Weight correlations converge to similar values despite different initializations. 
    The forward correlation (solid) of a model with zero initialization and full feature symmetry eventually matches that of a model initialized without feature symmetry (He). Backward correlations (dashed) also converge to relatively similar values. When no mechanism for breaking symmetry was introduced (Green), the tensor remained maximally correlated, and training fails. Similar behavior is observed in almost all convolutional layers.} 
    \label{fig:Constnet_correlations_full_run}
\end{figure}

\subsection{Group behaviour of symmetry breaking} \label{subsec:group}

When observing the the behaviour of the correlations in different layers in ConstNet at the initial stages of training, we identify a group behaviour. The forward and backward correlations of the members of each group appears to behave in a similar manner. We identify several distinct groups in ConstNet, corresponding to the different widths of the hidden layers. We hypothesize that the layers between the the different widths, where we have a highly correlated averaging operation, are responsible for this phenomena. An example of this can be seen in figure \ref{fig:groups}. While the number of those groups in ConstNet is limited and depth independent (such operations are done only when the network width is increased), we suspect that an intensive use of this initialization, and thus a greater amount of individual groups required to break symmetry, could result a failure to train, despite the naive conservation of signal in depth.


\begin{figure}
    \centering
    \begin{subfigure}
        \centering
        \includegraphics[width=3.0in]{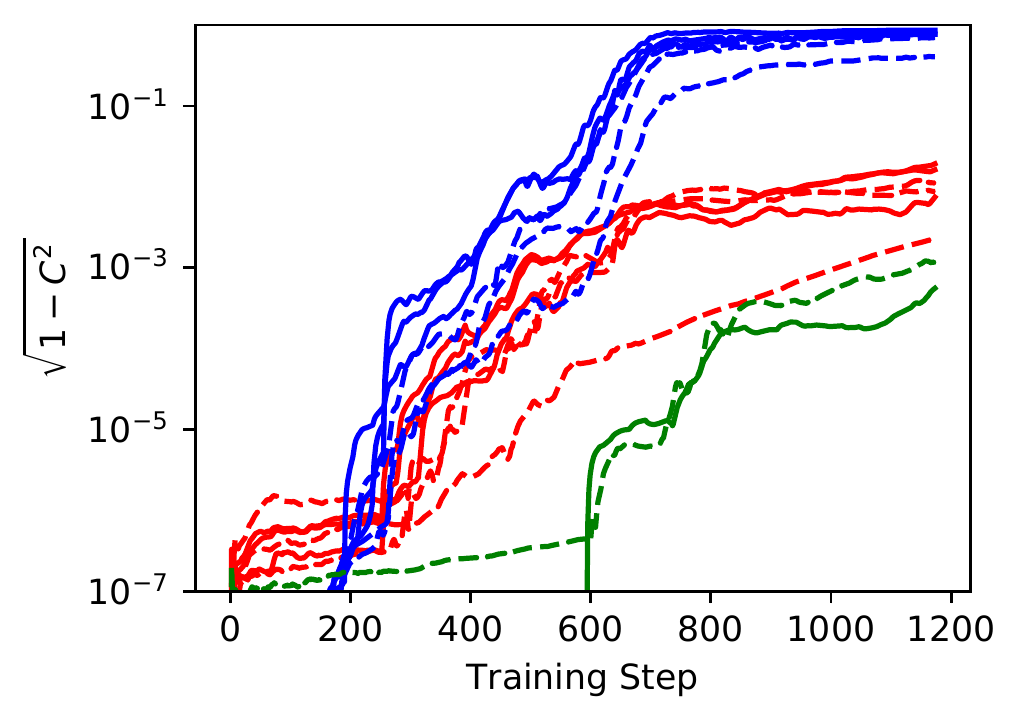}
    \end{subfigure}
    
    \caption{The early evolution of backward and forward correlations is group dependent (larger values are less correlated). This figure describe the forward correlations (solid) and backward correlations (dashed) members of different groups in ConstNet with '$0$' init, at the first 3 epochs of training (1 Epoch = 391 Steps). Different colors indicate different groups, as explained in section \ref{subsec:group}. A more detailed view of the individual correlations is available on figure \ref{fig:fullSymBreak_AtInit} in the Appendix.}
    \label{fig:groups}
\end{figure}


 

\section{Propagation of Signals that Break Symmetry}\label{sec:leakyNet}

In the previous sections, we have shown that networks with identical features can be trained if they i) enable signal propagation and ii) possess a mechanism for symmetry breaking. It appears that even when these conditions are satisfied, \textit{symmetry breaking can still be hindered if the signals that break symmetry cannot propagate} through the network. This is a novel type of signal that is not considered in standard analyses of signal propagation. We illustrate this phenomenon by designing a convolutional network that satisfies the above conditions (as shown in Appendix \ref{app:leaky_signal_prop}), yet still cannot be trained without additionally ensuring symmetry breaking signal propagation. 

\subsection{LeakyNet}

Rather than using the skip connection as a means of signal propagation, we can suggest alternative methods to initialize networks to represent identity at initialization, for the signal to be preserved. The main obstacle is the activation functions, being non-linear. We can overcome this limitation by replacing the \textit{ReLU} activation with the \textit{Leaky ReLU}: $\sigma^{\rho}(x)\equiv \text{LReLU}(x,\rho) = \rho x + (1-\rho)\text{ReLU}(x)$. In the limit $\rho \to 0$, both activations are identical, and we can use invertibility of leaky ReLU to form an identity. More specifically, we will use the equality: $\sigma^{\rho}(-\frac1{\rho}I \sigma^{\rho}(Ix))=-x$. Simply put, we initialize all layers so that every neuron swaps sign after each activation, making the composition of two nonlinear computation blocks linear (see figure \ref{fig:constnet_block}). 

To enforce identical features, it is sufficient to ensure that the first layer's features are identical, which we do by averaging the incoming channels, as in ConstNet. On instances when the network is widened, we map each feature to several identical copies. Apart of the computational block, all other components of network are similar to these presented in ConstNet. We train LeakyNet with the same configuration used for training ConstNet, and a default parameter $\rho=0.01$ for the leaky ReLUs. In all runs, the final layer was initialized to zero. 

The benefit of this architecture is that after factorizing the weight tensors into kernel matrices and matrices that act on the channels, we can choose between initializing these as an identity (denoted by '$I$'), and a matrix $\frac{1}{n}\bm{1}\bm{1}^T$ which we denote by '$\mathbb{1}$'. As seen in section \ref{sec:symmetryBreak}, the two different initializations will have a dramatically different effect on feature correlations, even though they both enable perfect signal propagation at initialization. Since the '$\mathbb{1}$' initialization averages over all channels, its output will be multiple copies of a single feature, hence they will be perfectly correlated. During training, as the symmetry between features is gradually broken due to some source of non-determinism, these symmetry breaking signals will decay when passing through a layer with a '$\mathbb{1}$' initialization, but not when passing through a layer with '$I$' initialization. We discuss this further in Appendix \ref{app:leaky_pert}.

\subsection{Controlling the Effectiveness of Symmetry Breaking}

Consequently, if we hold the total depth fixed and begin with a network that uses the '$I$' initialization at all layers, \textit{we can progressively hinder symmetry breaking by initializing more of layers with the '$\mathbb{1}$' initialization} instead. We perform a series of experiments of this nature. For reference, we also train this network with random (He) initialization, adjusted for the gain of the leaky ReLU initialization.




When random initialization was used, the correlations have frequently converged to a \textbf{higher value}, indicating some degree of features co-adaptation. The relative success of random initialization, compared with the orthogonal initialization, implies that feature diversity (in the sense of maximal forward correlations) have a negative effect as well, when examining the network in advanced stages of training. It is possible that it was, in fact, the orthogonal initializations that failed in the task of features co-adaptation. The full results regarding forward correlations are presented in figure \ref{fig:leakyNet_Correlations} in the appendix.


\begin{table}[h!]
\begin{center}\small
 \begin{tabular}{ | c | c | c | c  | c| c | c|}
 \hline

\multicolumn{4}{|c|}{\vspace{-0.3cm}  \makecell{Initialization\\ ($\#$ Conv Layers)}} & \makecell{Test\\Acc. [\%]\\ (4 seeds)}    &  \makecell{Max \\ $C_f$ \vspace{-0.3cm}} &  \makecell{Mean \\ $\sqrt{1-C_f^2}$ \vspace{-0.6cm}}       \\ 
\cline{1-4}
   Total  &     He   &    '$I$'    &   '$\mathbb{1}$'     &      &     &          \\ [1.0ex]
\hline
      13  &   13      &      -        &    -       &     $95.57\pm0.12$  &    0.21  &   0.97   \\ [1ex] 
\hline
      13  &   -      &      13        &    0       &    $94.05\pm0.23$    &   0.27  &   0.97       \\   [1ex] 
\hline
      13  &   -      &      12        &    1       &     $94.34\pm0.12$   &   0.25  &   0.97      \\ [1ex] 
\hline
      13  &   -      &      10        &    3       &    $93.47\pm0.32$  &  0.63  &   0.94 \\ [1ex] 
\hline
      13  &   -      &      8        &    5       &     $87.85\pm0.52$  &     \red{1.00}  &   0.79  \\  [1ex] 
\hline
      13  &   -      &      6        &    7       &   $44.89\pm2.52$   &   \red{1.00}    &   0.52\\ [1ex] 
\hline
      13  &   -      &      0        &    13       &    $29.77\pm2.47$   &   \red{1.00}  &   0.21 \\ [1ex] 
\hline

\end{tabular}
\end{center}
\caption{Test Accuracy of LeakyNet is correlated with dissimilarity between features. Results are on the CIFAR-10 dataset. Each network is initialized with a different number of '$\mathbb{1}$' and '$I$' initialization. The final performance degrades as more '$\mathbb{1}$' layers are present. The mean dissimilarity between features $\frac{1}{L}\underset{\ell}{\sum}\sqrt{1-C_{f}^{2}(\ell)}$ is indicative of final test accuracy, and networks where the maximal feature correlation $\underset{\ell}{\max}C_{f}(\ell)$ was close to $1$ performed poorly.} 
\label{tbl:leakynet_results}
\vspace{-0.5cm}

\end{table}

\section{Discussion}

In this work, we have shown that random initialization is not a necessary condition for deep convolutional networks to be trainable. Depth-independent propagation of signals through the network can be achieved without recourse to random initialization through the use of skip connections, or through artificial initialization methods. In one case, we show that the initial symmetry forced on a model is so fragile during training, that non-deterministic computation is sufficient to bring an otherwise untrainable model to a final accuracy of ~95\% on the CIFAR-10 benchmark.

By experimenting with radically uniform, naive initializations, we identify cases where the lack of feature diversity and overall symmetry in the network lead to failure of deep neural networks at classification tasks. Nonetheless, our ultimate conclusion is that feature diversity should \textbf{not} be a major factor, when initializing a deep neural network. The extent of symmetry we had to enforce over the network at initialization to disrupt the training process to a distinguishable degree, indicates that there is no significant penalty from subtle adjustments that sacrifice random initialization to improve the network's dynamics. Any negative effect we did encounter, could be easily negated by a small addition of independent, random values to break the initial symmetry.

While our symmetrically initialized networks are not intended for practical training, their simplicity can prove useful for the purpose of theoretical analysis. ConstNet, for example, implements a trivial mapping at initialization (as shown in Appendix \ref{app:ntk}). On the other hand, 
there is a growing theoretical literature analyzing the behavior of deep networks by linearizing near initialization \cite{Jacot2018-dv, arora2019exact, lee2019wide}. Our work suggests that understanding the behaviour of typical networks will likely require analysis of the nonlinear process of feature learning from data that happens in later stages of training. This of course is far from a novel conclusion, yet is made more palpable when considering that the features in the ConstNet model at initialization are identical to features of a linear model, as shown in Appendix \ref{app:ntk}. A relatively complex task like CIFAR-10 classification cannot be solved using kernel regression with such features. This is a clear indication that after training, the network is far from the linear regime around initialization. 


\subsection*{Acknowledgements}

The research of DS was supported by the Israel Science Foundation (grant No. 31/1031), by the Israel Innovation Authority (the Avatar Consortium), and by the Taub Foundation. A Titan Xp used for this research was donated by the NVIDIA Corporation. The work of YB is supported by Leonard and Diane Sherman Interdisciplinary Graduate School Fellowship. This work was partially done while the DS and DG were visiting the Simons Institute for the Theory of Computing.  The work of DG was supported by the NSF NeuroNex Award DBI-1707398, and the Gatsby Charitable Foundation.

\bibliography{bibliography}
\bibliographystyle{icml2020}

\onecolumn
\newpage
\appendix


{\Large \textbf{Supplementary Materials}}

\section{Detailed ConstNet Design and Considerations}\label{sup:constnet_detailed}
In this section, we provide technical details for ConstNet architectures, and explain the motivations behind each architecture decision. The code used for this experiment is available in: \url{https://github.com/yanivbl6/BeyondSigProp}.


\textbf{Zero initialization:} To ensure that only the skip connections are taken into account for signal propagation, we initialize all weights in computation blocks bypassed by skip connection to zero. In gradient descent, two sequential blocks initialized to zero will not allow gradient propagation: the update of each weights will remain zero as long as the other operations are $0$. To counter this, we replace the commonly used ResNet block which contains two chained operations, and use a single operation block instead, as seen in figure \ref{fig:constnet_block}.

We note that the forward signal propagation can also be maintained by only initializing the last chained block to zero, as was done in fixup \cite{zhang2019fixup}, but this method does not align with our goal of examining initializations with identical features. Additionally, we adjust the position of the nonlinearities to allow a linear path for the backward signal, moving from the 'output' to the input. 


\textbf{Forcing identical features:} For our goal of identical features to be reached, we initialize the first convolutional layer's filters $F=\mathrm{Mat}(W_{ij})\in\mathbb{R}^{3\times3}$ to $F_{kl} = \frac1{n_{\text{in}}}\delta_{k1}\delta_{l1}, k,l\in{0,1,2}$, where $n_{\text{in}}$ is the number of input channels, so all there channels are being averaged. As we advance in a residual network from input to output, it is commonplace to increase the number of channels (a.k.a widening) while trimming their spatial dimensions (using strides of size larger than 1). The widening of a neural network does not allow simply using skips, as the expected output of the computational block has additional channels, and 1x1 convolution operations are used instead, so each output channel is a linear combination of the input channels. We initialize these filters to the constant $\frac{1}{n_{\text{in}}}$, so they perform as an 'averaging operations' as well. Generally speaking, repeated averaging operations do not conserve signal propagation, and we take advantage of the fact that the number of those operations in the network is limited, and that only the first averaging operation causes a loss of information. One more possible issue is the usage of stride, as it may disturb the signal propagation. However, we show, in section \ref{sec:signal_prop_skips}, that under the assumption of spatial invariance, stride convolutions do not change the angles between features in expectation. Finally, we initialize the final fully connected layer to zero.

\textbf{Depth and width selection:} As mentioned, our channel sizes in the networks are based on Wide-Resnet, which has a similar number of parameters and operations. Since the number of channels is large, we expect the effects of feature diversity to be at a peak. Specifically, a randomly initialized Wide Res-Net has high number of features: the number of channels in the layers of the network increases from $16$ to $640$ at the final residual layers. A noteworthy design detail in ConstNet is that the activation in each skip is located before the convolution operation. This was done in order to have a non-linearity between the initial convolution and the following skippable convolution.

\textbf{Batch Normalization:} While the fixup initialization allow training neural networks without batch normalization, our default setup utilize batch-norm operations, positioned before the nonlinearities, as a tool for regularization. To avoid using batchnorm on the linear path, a scalar scale operator is added \cite{zhang2019fixup} and a Mixup \cite{zhang2017mixup} regularization has to be applied to prevent generalization loss. This method therefore requires an hyper-parameter search over the values of the Mixup parameter $\alpha$ and the learning rate. Notwithstanding the results \cite{yang2019mean}, who has shown that batch-normalization is incompatible with our goal of maintaining signal propagation, batch-normalization of the spatial dimensions (per channel) will not have an effect over the channels in an initialized network, since the first Batch-norm operation is the only one that will have an effect, at the exact point of initialization.

\textbf{Hyper-parameters:} All runs with ConstNet, and the baseline runs with Wide-ResNet, were done for a model with 12 skip-able layers. We used SGD with momentum ($0.9$), batch size $128\times 2$ GPUs, and with cross-entropy loss over the CIFAR-10 dataset. Unless specified otherwise, we trained the network for 200 epochs, with a learning rate decay at epochs $60,120,160$. Unless mentioned otherwise, the default learning rate was $0.03$, dropout and mixup were disabled, Batchnorm was used and cudNN was non-deterministic. All parameters where chosen based on the default Wide-ResNet setup, with the exception of the learning rate, which was picked in consideration of network pruning results \cite{frankle2018lottery}  (Lottery ticket method was shown to only work for residual networks with low learning rates). Results were averaged over $6$ seeds. The number of parameters in the model was $\sim17$M, and data was represented with $32$-bit floating point.

\section{Depth-Independent Signal Propagation in Convolutional Networks with Random Initialiation} \label{app:rand_sigprop}

Stability of backward features depends on the conditioning of the Jacobian between network states $J^{(\ell)}= \frac{\partial\tilde{\alpha}^{(\ell)}}{\partial\tilde{\alpha}^{(\ell-1)}}$, since the backward features take the form $\beta^{(\ell)}=\frac{\partial\mathcal{L }}{\partial\tilde{\alpha}^{(L-1)}}\frac{\partial\tilde{\alpha}^{(L-1)}}{\partial\tilde{\alpha}^{(\ell)}}$. In essence - stability of the backward features is ensured if $||J^{(\ell)}||=1$ and the variance of the singular values of $J^{(\ell)}$ is zero, where $||\cdot||$ denotes operator norm. Works that analyze signal propagation at the infinite width limit define \textit{dynamical isometry} conditions that ensure the singular values of $J^{(\ell)}$ depend weakly on depth and all have absolute value close to $1$.

In this section we briefly describe how to construct a convolutional network with random initialization and depth-independent forward and backward signal propagation (dynamical isometry). We combine elements of the approaches in \cite{xiao2018dynamical, burkholz2019initialization}. For simplicity consider a network that is given by a composition of convolutions, and we assume periodic boundary conditions and that the number of channels $n$ is constant throughout. Given an input tensor $x\in\mathbb{R}^{S\times n/2}$ with $S$ denoting a set of spatial dimensions, we define an augmented input 
\[
\overline{x}=\left(\begin{array}{cc}
x & 0\end{array}\right)\in\mathbb{R}^{S\times n}
\]
If we assume the kernel is of size $K=(2k+1)\times(2k+1)$ and index these coordinates from $-k$ to $k$, we define a kernel by 
\begin{equation} \label{eq:delta_kernel}
   \Delta\in\mathbb{R}^{2k+1\times2k+1},\Delta_{ij}=\delta_{i0}\delta_{j0}. 
\end{equation}
This definition can be generalized trivially to settings where $K$ has order different from $2$. Initializing the biases at $0$, the parameters of the $\ell$-th  convolutional layer are given by $W^{(\ell)}\in\mathbb{R}^{K\times n \times n}$, which we factorize as 
\begin{equation} \label{eq:delta_orth_weight}
    W^{(\ell)}=\Delta\otimes\left(\begin{array}{cc}
U^{(\ell)} & -U^{(\ell)}\\
-U^{(\ell)} & U^{(\ell)}
\end{array}\right)
\end{equation}
where $U^{(\ell)}\in\mathbb{R}^{n/2\times n/2}$ is an orthogonal matrix drawn from a uniform distribution over $O(n/2)$ and $\Delta^{(\ell)}\in\mathbb{R}^{K}$ is the kernel defined above (we assume $n$ is even). As before, we denote $\tilde{\alpha}^{(1)}_{\gamma,i}=\left[W^{(1)}\hat{*}\overline{x}\right]_{\gamma i}=\underset{\kappa\in K}{\sum}\underset{j=1}{\overset{n}{\sum}}W_{\kappa ij}^{(1)}x_{\gamma+\kappa,j}$. The pre-activations at a spatial location $\gamma$, arranged in a column vector $\left[W^{(1)}\hat{*}\overline{x}\right]_{\gamma}^{T}$, are thus given by
\[
\left[W^{(1)}\hat{*}\overline{x}\right]_{\gamma}^{T}=\underset{\kappa\in K}{\sum}\Delta_{\kappa}\left(\begin{array}{cc}
U^{(1)} & -U^{(1)}\\
-U^{(1)} & U^{(1)}
\end{array}\right)\overline{x}_{\kappa+\gamma}^{T}
\]
\[
=\left(\begin{array}{cc}
U^{(1)} & -U^{(1)}\\
-U^{(1)} & U^{(1)}
\end{array}\right)\overline{x}_{\gamma}^{T}=\left(\begin{array}{cc}
U^{(1)} & -U^{(1)}\\
-U^{(1)} & U^{(1)}
\end{array}\right)\left(\begin{array}{c}
x_{\gamma}^{T}\\
0
\end{array}\right)
\]
\[
=\left(\begin{array}{c}
U^{(1)}x_{\gamma}^{T}\\
-U^{(1)}x_{\gamma}^{T}
\end{array}\right).
\]

Applying a ReLU nonlinearity and another convolutional layer gives 
\[
\left[W^{(2)}\hat{*}\phi\left(W^{(1)}\hat{*}\overline{x}\right)\right]_{\gamma}^{T}
\]
\[
=\underset{\kappa\in K}{\sum}\Delta_{\kappa}\left(\begin{array}{cc}
U^{(2)} & -U^{(2)}\\
-U^{(2)} & U^{(2)}
\end{array}\right)\phi\left(W^{(1)}\hat{*}\overline{x}\right)_{\kappa+\gamma}^{T}
\]
\[
=\left(\begin{array}{cc}
U^{(2)} & -U^{(2)}\\
-U^{(2)} & U^{(2)}
\end{array}\right)\phi\left(\left(\begin{array}{c}
U^{(1)}x_{\gamma}^{T}\\
-U^{(1)}x_{\gamma}^{T}
\end{array}\right)\right)
\]
\[
=\left(\begin{array}{cc}
U^{(2)} & -U^{(2)}\\
-U^{(2)} & U^{(2)}
\end{array}\right)\left(\begin{array}{c}
U^{(1)}x_{\gamma}^{T}\circ\left[U^{(1)}x_{\gamma}^{T}>0\right]\\
-U^{(1)}x_{\gamma}^{T}\circ\left[U^{(1)}x_{\gamma}^{T}<0\right]
\end{array}\right)
\]
\[
=\left(\begin{array}{c}
U^{(2)}\left(U^{(1)}x_{\gamma}^{T}\circ\left[U^{(1)}x_{\gamma}^{T}>0\right]+U^{(1)}x_{\gamma}^{T}\circ\left[U^{(1)}x_{\gamma}^{T}<0\right]\right)\\
-U^{(2)}\left(U^{(1)}x_{\gamma}^{T}\circ\left[U^{(1)}x_{\gamma}^{T}>0\right]+U^{(1)}x_{\gamma}^{T}\circ\left[U^{(1)}x_{\gamma}^{T}<0\right]\right)
\end{array}\right)
\]
\[
=\left(\begin{array}{c}
U^{(2)}U^{(1)}x_{\gamma}^{T}\\
-U^{(2)}U^{(1)}x_{\gamma}^{T}
\end{array}\right).
\]
We thus obtain that for any two layers the pre-activations are simply related by a rotation:
\begin{equation} \label{eq:preac_orth}
\left(\tilde{\alpha}_{\gamma}^{(\ell+1)}\right)^{T}=\left(\begin{array}{cc}
U^{(\ell+1)} & 0\\
0 & U^{(\ell+1)}
\end{array}\right)\left(\tilde{\alpha}_{\gamma}^{(\ell)}\right)^{T}.
\end{equation}
This preserves both norms and angles, and thus the covariance between pre-activations is invariant of depth, meaning 
\[
\left\langle \tilde{\alpha}_{\gamma}^{(\ell)}(x),\tilde{\alpha}_{\gamma'}^{(\ell)}(x')\right\rangle =\left\langle \tilde{\alpha}_{\gamma}^{(1)}(x),\tilde{\alpha}_{\gamma'}^{(1)}(x')\right\rangle.
\]
Note that this holds surely, and not only in expectation over the weights as in the main text. As a result, the covariance between the hidden states themselves is also independent of depth. 

We now consider backwards signal propagation. Since 
at a spatial location $\gamma$ the pre-activation $\left(\tilde{\alpha}_{\gamma}^{(\ell)}\right)^{T}$ is a concatenation of two identical vectors with opposite sign, there are only $n/2$ independent degrees of freedom. The backward features are thus given by 
\[
\beta_{\eta j}^{(\ell)}(x)=\frac{\partial\mathcal{L}}{\partial\tilde{\alpha}_{\eta j}^{(\ell)}(x)}=\underset{\gamma\in S}{\sum}\underset{i=1}{\overset{n/2}{\sum}}\frac{\partial\mathcal{L}}{\partial\tilde{\alpha}_{\gamma i}^{(\ell+1)}(x)}\frac{\partial\tilde{\alpha}_{\gamma i}^{(\ell+1)}(x)}{\partial\tilde{\alpha}_{\eta j}^{(\ell)}(x)}
\]
\[
=\underset{\gamma\in S}{\sum}\underset{i=1}{\overset{n/2}{\sum}}\beta_{\gamma i}^{(\ell+1)}(x)\frac{\partial\tilde{\alpha}_{\gamma i}^{(\ell+1)}(x)}{\partial\tilde{\alpha}_{\eta j}^{(\ell)}(x)}
\]
\[
=\beta^{(L-1)}(x)J_{n/2}^{(L-1)}J_{n/2}^{(L-2)}\dots\frac{\partial\tilde{\alpha}^{(\ell+1)}(x)}{\partial\tilde{\alpha}_{\eta j}^{(\ell)}(x)}
\]
where 
\[J_{n/2}^{(\ell)}\in\mathbb{R}^{S\times S\times n/2\times n/2},\left[J_{n/2}^{(\ell)}\right]_{\gamma\eta ij}=\frac{\partial\tilde{\alpha}_{\gamma i}^{(\ell)}(x)}{\partial\tilde{\alpha}_{\eta j}^{(\ell-1)}(x)}
\]
(without this structure in the pre-activations the Jacobians would be defined as matrices in $\mathbb{R}^{S\times S\times n\times n}$). Since $\frac{\partial\tilde{\alpha}_{\gamma i}^{(\ell)}}{\partial\tilde{\alpha}_{\eta j}^{(\ell-1)}}=W_{\eta-\gamma,ij}^{(\ell)}\dot{\phi}(\tilde{\alpha}^{(\ell-1)})_{\eta,j}\mathbb{1}_{\eta-\gamma\in K}$, plugging in the form of the weight tensor from equation \ref{eq:delta_orth_weight} gives 
\[
\frac{\partial\tilde{\alpha}_{\gamma i}^{(\ell)}}{\partial\tilde{\alpha}_{\eta j}^{(\ell-1)}}=\delta_{\gamma\eta}U_{ij}^{(\ell)}\dot{\phi}(\tilde{\alpha}^{(\ell-1)})_{\eta,j}
\]
hence over the spatial dimensions $S \times S$, the tensor $J_{n/2}^{(\ell)}$ is simply a delta function, and since at a given spatial location the pre-activations are related according to eq. \ref{eq:preac_orth} we obtain 
\[
J_{n/2}^{(\ell)}=I_{S\times S}\otimes U^{(\ell)}
\]
where $I_{S\times S}$ is shorthand for a product of delta functions over every spatial dimension. Since the $U^{(\ell)}$ are orthogonal, $J_{n/2}^{(\ell)}$ obeys the dynamical isometry conditions (all its singular values over the non-trivial dimensions have magnitude 1). Thus norms and angles between backward features are also independent of depth with this initialization. 

Note that both forwards and backward features are stable surely, and not just in expectation over the weights which is the usual form of dynamical isometry results that study networks at the infinite width limit (which is predictive of the behavior of reasonably wide networks)\cite{schoenholz2016deep,pennington2017resurrecting}. Therefore, a network initialized in this way will exhibit stable signal propagation at any width. 

\section{Depth-Independent Signal Propagation in ConstNet} \label{app:constnet_sigprop}

\begin{claim}
Let $f$ be an $L$-layer ConstNet function as in eq. \ref{eq:constnetfunc} and denote the scalar loss by $\mathcal{L}$. Then for any $0 \leq \ell \leq L-1$ we have 
\[
\frac{\left\langle \beta^{(\ell)}(x),\beta^{(\ell)}(x')\right\rangle }{n_{\ell}}=\frac{\left\langle \beta^{(L-1)}(x),\beta^{(L-1)}(x')\right\rangle }{n_{L-1}}
\]
\[
\begin{array}{c}
\frac{\partial\mathcal{L}(x)}{\partial W_{\kappa ij}^{(\ell)}}=C_{ij}\cos(\theta_{\kappa,\ell}) \\
\frac{\partial\mathcal{L}(x)}{\partial b_{i}^{(\ell)}}=C_{i}'
\end{array}
\]
where $C_{ij},C_{i}'>0$ are constants that are independent of $L$ (but depend on the functions $P,g$ in the definition of the ConstNet function and on $\mathcal{L}$). $\theta_{\kappa,\ell}$ are constants that can depend on $L$. 

Additionally, for translation invariant inputs, if we denote the spatial dimensions at layer $\ell$ by $S^{(\ell)}$ we have for any $\gamma,\gamma' \in S^{(\ell)}$ 
\[
\tilde{\alpha}_{\gamma i}^{(\ell)}(x)\overset{d}{=}\tilde{\alpha}_{\gamma'1}^{(0)}(x)
\]
\[
\frac{\left\langle \tilde{\alpha}^{(\ell)}(x),\tilde{\alpha}^{(\ell)}(x')\right\rangle }{\left\vert S^{(\ell)}\right\vert n_{\ell}}\overset{d}{=}\frac{\left\langle \tilde{\alpha}^{(0)}(x),\tilde{\alpha}^{(0)}(x')\right\rangle }{\left\vert S^{(0)}\right\vert n_{0}}
\]
\end{claim}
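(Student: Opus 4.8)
The plan is to reduce everything to two structural facts about ConstNet at initialization, where every convolution weight $W$ and bias $b$ inside a block is zero: (i) each ConstNet block $\mathrm{CB}$ acts as the identity, $\tilde\alpha^{(\ell)}=\tilde\alpha^{(\ell-1)}$, and (ii) each widening block $\mathrm{WB}$ acts as the pure spatial-subsampling-and-channel-averaging map $\tilde\alpha^{(\ell)}_{\gamma i}=\frac{1}{n_{\ell-1}}\sum_j\tilde\alpha^{(\ell-1)}_{s\gamma,j}$, since the $W\hat{*}g(\cdot)$ term and its derivative both vanish. First I would prove by induction on $\ell$ that the forward features are channel-symmetric, i.e. $\tilde\alpha^{(\ell)}_{\gamma i}(x)$ is independent of $i$, and that there is an injective spatial map $\sigma_\ell$ (a composition of the strides) with $\tilde\alpha^{(\ell)}_{\gamma i}(x)=\tilde\alpha^{(0)}_{\sigma_\ell(\gamma),1}(x)$. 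The base case is the assumption $\tilde\alpha^{(0)}_{\gamma i}=\tilde\alpha^{(0)}_{\gamma j}$; for the step, $\mathrm{CB}$ copies the feature verbatim while $\mathrm{WB}$ averages over a set of already-identical channels, so channel symmetry is preserved and $\sigma_\ell$ gains one subsampling factor. The first distributional statement $\tilde\alpha^{(\ell)}_{\gamma i}(x)\overset{d}{=}\tilde\alpha^{(0)}_{\gamma'1}(x)$ then follows immediately, because marginal translation invariance of $x$ is preserved by the initial convolution and batch-normalization (periodic boundary conditions make these spatially equivariant), so $\tilde\alpha^{(0)}_{\sigma_\ell(\gamma),1}(x)\overset{d}{=}\tilde\alpha^{(0)}_{\gamma'1}(x)$.

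For the second, covariance statement I would use channel symmetry to collapse the channel index, writing $\langle\tilde\alpha^{(\ell)}(x),\tilde\alpha^{(\ell)}(x')\rangle=n_\ell\sum_{\gamma\in S^{(\ell)}}\tilde\alpha^{(0)}_{\sigma_\ell(\gamma),1}(x)\,\tilde\alpha^{(0)}_{\sigma_\ell(\gamma),1}(x')$, so that both normalized inner products reduce to empirical spatial averages of the same pointwise product field, one over the subsampled lattice $\sigma_\ell(S^{(\ell)})$ and the other over the full lattice $S^{(0)}$. I expect this to be the main obstacle: equality in distribution of these two averages does not follow from the marginal translation invariance stated in the excerpt (an i.i.d. field is a counterexample), so the step requires interpreting translation invariance as joint stationarity of the field $\gamma\mapsto\tilde\alpha^{(0)}_{\gamma1}$ and arguing that subsampling over a full period followed by normalization leaves the distribution of the spatial average invariant. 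Identifying exactly which stationarity or exchangeability hypothesis on the data is needed, and checking that it survives convolution and stride subsampling, is the delicate part of the proof and is where I would spend most of the effort.

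Next I would treat the backward features. Differentiating through a $\mathrm{CB}$ block at initialization gives the identity Jacobian (the derivative of $W\hat{*}g$ vanishes because $W=0$), so back-propagation leaves the backward feature unchanged, $\beta^{(\ell)}=\beta^{(\ell+1)}$; since a $\mathrm{CB}$ block changes neither $n_\ell$ nor $S^{(\ell)}$, the normalized quantity $\langle\beta^{(\ell)}(x),\beta^{(\ell)}(x')\rangle/n_\ell$ is preserved exactly across every non-widening layer, which is the bulk of the depth. Across a $\mathrm{WB}$ block the Jacobian of the averaging term is $\partial\tilde\alpha^{(\ell)}_{\gamma i}/\partial\tilde\alpha^{(\ell-1)}_{\eta k}=\tfrac{1}{n_{\ell-1}}\mathbb{1}[\eta=s\gamma]$, independent of $i,k$; transposing it shows that the lower backward feature is supported on the subsampled lattice, is channel-symmetric there, and equals $\tfrac{1}{n_{\ell-1}}\sum_i\beta^{(\ell)}_{\gamma i}$. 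I would then verify that the channel-widening factor $n_\ell/n_{\ell-1}=s$ and the spatial-subsampling factor cancel against the normalization by $n_\ell$, so that the normalized inner product is again preserved; this bookkeeping, together with the channel symmetry of the backward features injected at each widening interface, is the one place where constants must be tracked with care.

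Finally I would assemble the gradient statement from eq. \ref{eq:grad_conv}. Since the forward features feeding the $\ell$-th convolution are channel-symmetric, $g(\tilde\alpha^{(\ell-1)})_{\gamma+\kappa,j}$ is independent of $j$; writing it as $\bar g^{(\ell-1)}_{\gamma+\kappa}(x)$ and the channel-symmetric backward feature as $\bar\beta^{(\ell)}_\gamma(x)$, the gradient $\partial\mathcal{L}/\partial W^{(\ell)}_{\kappa ij}=\sum_\gamma\bar\beta^{(\ell)}_\gamma(x)\,\bar g^{(\ell-1)}_{\gamma+\kappa}(x)$ factorizes as a magnitude $C_{ij}=\|\bar\beta^{(\ell)}\|\,\|\bar g^{(\ell-1)}\|$ times the cosine $\cos\theta_{\kappa,\ell}$ of the angle between the backward field and the $\kappa$-shifted forward field (periodic boundary conditions make the shift norm-preserving, so this is a genuine cosine). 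The magnitude is $L$-independent by the forward and backward depth-independence just established, while $\theta_{\kappa,\ell}$ may depend on $L$ through $\sigma_\ell$. The bias gradient is $\partial\mathcal{L}/\partial b^{(\ell)}_i=\sum_\gamma\bar\beta^{(\ell)}_\gamma(x)=C_i'$, again $L$-independent, with positivity of $C_{ij},C_i'$ following from the sign structure of the backward features fixed by $P,g$ and $\mathcal{L}$. The only genuinely hard step remains the distributional/stationarity argument of the second paragraph; the rest is a controlled propagation of the two Jacobians through a depth-independent number of widening blocks.
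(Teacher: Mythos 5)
Your proposal retraces the paper's own proof (Appendix \ref{app:constnet_sigprop}): the same reduction to block-level facts at initialization (ConstNet blocks act as the identity, widening blocks as channel-averaging plus spatial subsampling), the same forward induction giving $\tilde{\alpha}_{\gamma i}^{(\ell)}(x)=\tilde{\alpha}_{s^{p}\gamma,1}^{(0)}(x)$, the same appeal to translation invariance for the distributional statements, and the same factorization of the gradient into an $L$-independent magnitude times a cosine. The two places where you are more careful than the paper are exactly the places where the paper is loose, so they deserve a concrete answer.

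On the backward bookkeeping: your Jacobian for the widening block is correct, but the cancelation you promise does not occur. With $n_{\ell}=sn_{\ell-1}$ and channel-symmetric $\beta^{(\ell)}_{\gamma i}=b_{\gamma}$, your formula $\beta_{s\gamma,k}^{(\ell-1)}=\frac{1}{n_{\ell-1}}\sum_{i}\beta_{\gamma i}^{(\ell)}$ gives $\beta_{s\gamma,k}^{(\ell-1)}=s\,b_{\gamma}$: values are amplified by $s$, while the support shrinks to $\vert S^{(\ell)}\vert$ positions with $n_{\ell-1}$ channels. Hence $\langle\beta^{(\ell-1)},\beta^{(\ell-1)}\rangle/n_{\ell-1}=s^{2}\,\langle\beta^{(\ell)},\beta^{(\ell)}\rangle/n_{\ell}$, i.e.\ each widening block contributes a factor $s^{2}$ to the normalized inner product; the quantity that is exactly depth-independent is $n_{\ell}\langle\beta^{(\ell)},\beta^{(\ell)}\rangle$, not $\langle\beta^{(\ell)},\beta^{(\ell)}\rangle/n_{\ell}$. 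The paper's proof sidesteps this by asserting $\beta_{\gamma j}^{(\ell)}=\beta_{\gamma/s^{q},1}^{(L-1)}$ with no factor, which is inconsistent with the very widening Jacobian you both compute: the factor $n_{L-1}/n_{\ell}=s^{q}$, and the replacement of $\beta^{(L-1)}$ by its channel average, are silently dropped. Since ConstNet has a fixed, $L$-independent number of widening layers, the qualitative conclusion (gradients and backward features cannot grow with $L$) survives, but your plan to ``verify the constants cancel'' will, if carried out honestly, refute the displayed equality rather than confirm it.

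On the distributional step: your flag is warranted, and neither your argument nor the paper's resolves it. Both reduce the normalized forward inner product at layer $\ell$ to the average of the product field $Z_{\gamma}=\tilde{\alpha}_{\gamma 1}^{(0)}(x)\tilde{\alpha}_{\gamma 1}^{(0)}(x')$ over the sublattice $s^{p}\mathbb{Z}^{d}\cap S^{(0)}$, and the claim is that this equals in distribution the average of $Z$ over all of $S^{(0)}$; the paper simply writes $\overset{d}{=}$ between the two. As your i.i.d.\ example shows, this follows neither from the paper's marginal invariance hypothesis nor from joint stationarity (an i.i.d.\ field is stationary, yet averages over lattices of different sizes have different variances); stationarity yields only equality of expectations. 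So the ``delicate part'' you isolated is a genuine gap in the statement as written, not a deficiency of your strategy. The pointwise statement $\tilde{\alpha}_{\gamma i}^{(\ell)}(x)\overset{d}{=}\tilde{\alpha}_{\gamma'1}^{(0)}(x)$, by contrast, needs only spatial equivariance of the input map plus marginal invariance, exactly as you argue. (A smaller shared looseness: strict positivity of $C_{ij},C_{i}'$ is asserted by both you and the paper, yet the paper itself notes afterwards that these constants vanish for an appropriate $P$, e.g.\ with the zero-initialized final layer.)
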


\begin{proof}
We assume that at the first layer we have 
\[
\tilde{\alpha}_{\gamma i}^{(0)}=\tilde{\alpha}_{\gamma j}^{(0)}
\]
where $\tilde{\alpha}_{\gamma}^{(0)}$ is obtained by some affine transformation of the data. 
At any layer $\ell$, at initialization either $\alpha^{(\ell)}=\text{CB}_{0,0}(\alpha^{(\ell-1)})$ or $\alpha^{(\ell)}=\text{WB}_{0,0,s}(\alpha^{(\ell-1)})$. In the former case we have 
\[
\tilde{\alpha}_{\gamma j}^{(\ell)}(x)=\tilde{\alpha}_{\gamma j}^{(\ell-1)}(x)=\tilde{\alpha}_{\gamma 1}^{(\ell-1)}(x)
\]
for all $\gamma,j$, while in the latter case we have 
\[
\tilde{\alpha}_{\gamma j}^{(\ell)}(x)=\frac{1}{n}\underset{j=1}{\overset{n}{\sum}}\tilde{\alpha}_{\gamma s,j}^{(\ell-1)}(x)=\tilde{\alpha}_{\gamma s,1}^{(\ell-1)}(x).
\]
 If we assume that there are $p$ narrowing layers between $1$ and $\ell$, repeated application of the above equations gives  
 \[
\tilde{\alpha}_{\gamma i}^{(\ell)}(x)=\tilde{\alpha}_{\gamma s^{p},1}^{(0)}(x).
 \]
 If we denote the spatial dimensions at layer $\ell$ by $S^{(\ell)}$, since tranlation invariance of the inputs implies the same invariance of the pre-activations, we have
 \[
\tilde{\alpha}_{\gamma i}^{(\ell)}(x)=\tilde{\alpha}_{\gamma s^{p},i}^{(0)}(x)=\tilde{\alpha}_{\gamma s^{p},1}^{(0)}(x)\overset{d}{=}\tilde{\alpha}_{\gamma'1}^{(0)}(x)
 \]
 for any $\gamma,\gamma'$ and 
 \[
 \frac{\left\langle \tilde{\alpha}^{(\ell)}(x),\tilde{\alpha}^{(\ell)}(x')\right\rangle }{\left\vert S^{(\ell)}\right\vert n_\ell}=\frac{1}{\left\vert S^{(\ell)}\right\vert n_\ell}\underset{\gamma\in S^{(\ell)}}{\sum}\underset{i=1}{\overset{n_\ell}{\sum}}\tilde{\alpha}_{\gamma i}^{(\ell)}(x)\tilde{\alpha}_{\gamma i}^{(\ell)}(x')
 \]
 \[
 =\frac{1}{\left\vert S^{(\ell)}\right\vert }\underset{\gamma\in S^{(\ell)}}{\sum}\tilde{\alpha}_{\gamma s,1}^{(0)}(x)\tilde{\alpha}_{\gamma s,1}^{(0)}(x')
 \]
 \[
\overset{d}{=}\frac{1}{\left\vert S^{(0)}\right\vert }\underset{\gamma\in S^{(0)}}{\sum}\tilde{\alpha}_{\gamma,1}^{(0)}(x)\tilde{\alpha}_{\gamma,1}^{(0)}(x')=\frac{\left\langle \tilde{\alpha}^{(0)}(x),\tilde{\alpha}^{(0)}(x')\right\rangle }{\left\vert S^{(0)}\right\vert n_0}.
 \]

We now consider a layer $\ell$ such that there are $p$ narrowing layers between $1$ and $\ell$ and $q$ narrowing layers between $\ell$ and $L-1$. If we choose $\gamma$ such that $\gamma/s^q$ is a vector of integers (which we denote by $\gamma/s^q\in\mathbb{Z}^{d}$), we have 
\[
\beta_{\gamma j}^{(\ell)}(x)=\frac{\partial\mathcal{L}}{\partial\tilde{\alpha}_{\gamma j}^{(\ell)}(x)}=\frac{\partial\mathcal{L}}{\partial\tilde{\alpha}_{\gamma/s^{q},1}^{(L-1)}(x)}=\beta_{\gamma/s^{q},1}^{(L-1)}(x).
\]
from which it follows that 
\[
\frac{\left\langle \beta^{(\ell)}(x),\beta^{(\ell)}(x')\right\rangle }{n_{\ell}}=\frac{1}{n_{\ell}}\underset{\gamma\in S^{(\ell)}}{\sum}\underset{i=1}{\overset{n_{\ell}}{\sum}}\beta_{\gamma i}^{(\ell)}(x)\beta_{\gamma i}^{(\ell)}(x')
\]
\[
=\underset{\gamma\in S^{(\ell)},\gamma/s^{q}\in\mathbb{Z}^{d}}{\sum}\beta_{\gamma/s^{q}1}^{(L-1)}(x)\beta_{\gamma/s^{q}1}^{(L-1)}(x')=\underset{\gamma\in S^{(L-1)}}{\sum}\beta_{\gamma1}^{(L-1)}(x)\beta_{\gamma1}^{(L-1)}(x')
\]
\[
=\frac{\left\langle \beta^{(L-1)}(x),\beta^{(L-1)}(x')\right\rangle }{n_{L-1}}.
\]
Additionally, 
\[
\frac{\partial\mathcal{L}}{\partial\tilde{\alpha}_{\gamma/s^{q},1}^{(L-1)}(x)}=\frac{\partial\mathcal{L}}{\partial f(\tilde{\alpha}_{\gamma/s^{q},1}^{(L-1)}(x))}\frac{\partial f}{\partial\tilde{\alpha}_{\gamma/s^{q},1}^{(L-1)}(x)}
\]
\[
=\frac{\partial\mathcal{L}}{\partial P(\tilde{\alpha}_{\gamma/s^{q},1}^{(L-1)}(x))}\frac{\partial P(\tilde{\alpha}_{\gamma/s^{q},1}^{(L-1)}(x))}{\partial\tilde{\alpha}_{\gamma/s^{q},1}^{(L-1)}(x)}
\]
\[
=\dot{\mathcal{L}}\left(P(\tilde{\alpha}_{\gamma s^{p},1}^{(0)}(x))\right)\dot{P}(\tilde{\alpha}_{\gamma s^{p},1}^{(0)}(x)).
\]
Note that if $\gamma/s^q$ is not a vector of integers, the location $\gamma$ will not contribute to the loss and hence $\beta_{\gamma j}^{(\ell)}(x)=0$. We have thus shown that the non-zero elements of $\alpha_{i}^{(\ell)}(x),\beta_{i}^{(\ell)}(x)$ can be written in terms of quantities that are independent of depth.  

The gradients are given by 
\[
\frac{\partial\mathcal{L}(x)}{\partial W_{\kappa ij}^{(\ell)}}=\underset{\gamma\in S^{(\ell)}}{\sum}\beta_{\gamma i}^{(\ell)}(x)\alpha_{\gamma+\kappa,j}^{(\ell-1)}(x)
\]
\[
=\underset{\gamma\in S^{(\ell)},\gamma/s^{q}\in\mathbb{Z}^{d}}{\sum}\begin{array}{c}
\dot{\mathcal{L}}\left(P(\tilde{\alpha}_{\gamma s^{p},1}^{(0)}(x))\right)\dot{P}(\tilde{\alpha}_{\gamma s^{p},1}^{(0)}(x))*\\
g\left(\tilde{\alpha}_{(\gamma+\kappa)s^{p},1}^{(0)}(x)\right)
\end{array}
\]
\[
\frac{\partial\mathcal{L}(x)}{\partial b_{i}^{(\ell)}}=\underset{\gamma\in S^{(\ell)},\gamma/s^{q}\in S^{(L-1)}}{\sum}\beta_{\gamma i}^{(\ell)}(x).
\]
Note that the number of terms in this summation is $\left\vert S^{(L-1)}\right\vert $ and is thus independent of $\ell$. The gradients depend on $\ell$ only through the relative shift between $\alpha_{j}^{(\ell-1)}(x)$ and $\beta_{i}^{(\ell)}(x)$, which is $\kappa s^p$, and is thus bounded by product of the norms of these vectors which are independent of $\ell$. It follows that we can write 
\[
\frac{\partial\mathcal{L}(x)}{\partial W_{\kappa ij}^{(\ell)}}=C_{ij}\cos(\theta_{\kappa,\ell}),\frac{\partial\mathcal{L}(x)}{\partial b_{i}^{(\ell)}}=C_{i}'.
\]

Where $C_{ij},C_{i}'$ are independent of depth. 
\end{proof}

Note that if $C_{ij},C_{i}'$ are equal to $0$, which can be ensured with an appropriate choice of $P$, the gradient at initialization for all layers aside from the last will be $0$ as well. This result ensures that during early stages of training, once gradients take non-zero values, they will behave in a stable manner even if the network is very deep. There is also no reason to expect $\cos(\theta_{\kappa,\ell})$ to decay with depth and lead to vanishing gradients. 

If batch-norm parameters are also trained, or the map from the input to $\tilde{\alpha}^{(0)}$ is parametrized by trainable parameters, their gradients will also be insensitive to depth due to the insensitivity of the forward and backward features. Batch-norm will have no effect at initialization due to the symmetries of $\tilde{\alpha}^{(0)}$. Note also that for a reasonable batch size, if $\tilde{\alpha}^{(0)}(x)$ applies a convolution to $x$ with a $W=\frac{1}{n_{d}}\mathbb{1}_{n_{0}}\mathbb{1}_{n_{d}}^{T}\times\Delta$ where $\Delta$ is the delta kernel defined in Appendix \ref{app:constnet_sigprop}, we expect the angles between the inputs averaged over the input channels to be preserved in the sense $\angle(\ensuremath{\tilde{\alpha}^{(0)}}(x),\ensuremath{\tilde{\alpha}^{(0)}}(x'))\approx\angle(\underset{\gamma}{\sum}x_{\gamma},\underset{\gamma'}{\sum}x'_{\gamma})$. 

We also note that if the number of features was held constant at all layers (as in \cite{xiao2018dynamical} for instance) the norm of the backward features would be preserved as well, and the magnitude of the gradient elements would be completely independent of depth. The dependence is only a result of the widening layers and not of the ConstNet blocks themselves.

\begin{figure*}[t!]
    \centering
    \begin{subfigure}
        \centering
        \includegraphics[width=5.5in]{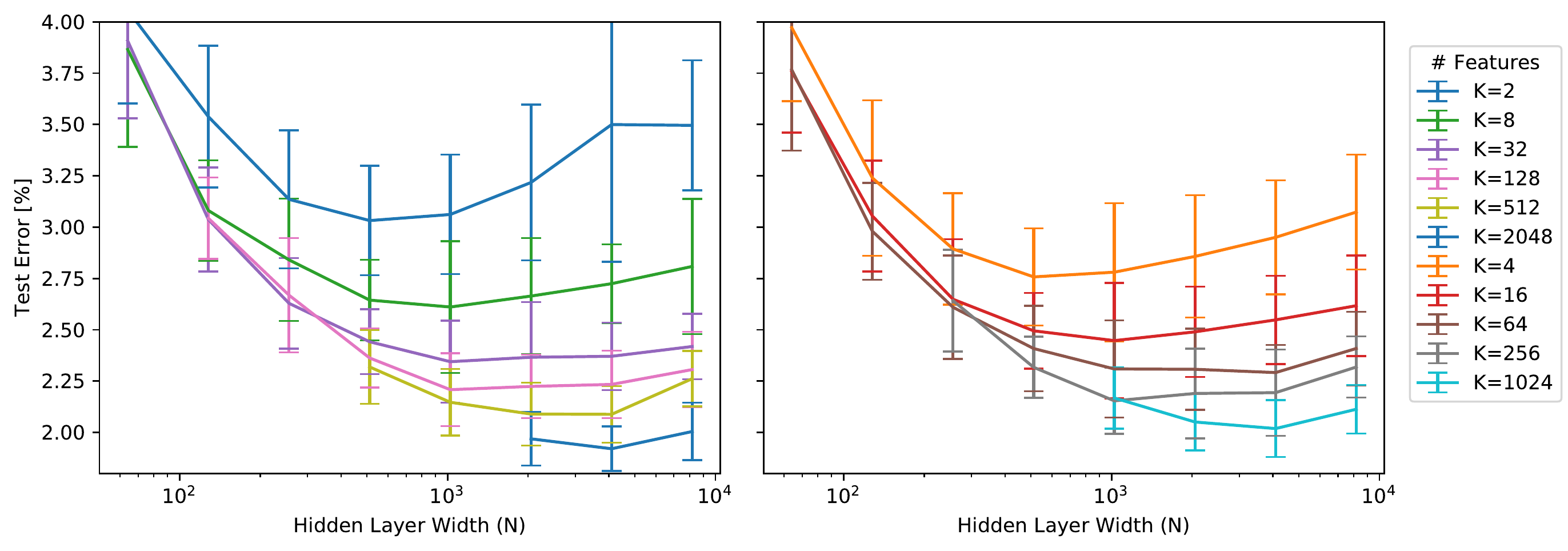}
    \end{subfigure}
    \caption{Full Test error results for a 2-layers fully connected network. 
    The hidden layer features were initialized so the same random feature is represented $N/K$ times.}
    \label{fig:replicated_features_errorbars}
\end{figure*}


\section{ConstNet Features at Initialization are Equivalent to a Shallow Network} \label{app:ntk}
We consider the evolution of the network function $f$ during full-batch gradient flow over a dataset of size $d$. Assuming $f(x) \in \mathbb{R}^{n_c}$, individual elements will evolve according to
\[
\frac{\partial f_{k}(x)}{\partial t}=\underset{i}{\sum}\frac{\partial f_{k}(x)}{\partial\theta_{i}}\frac{\partial\theta_{i}}{\partial t}=-\underset{ij}{\sum}\frac{\partial f_{k}(x)}{\partial\theta_{i}}\frac{\partial\mathcal{L}(x_{j})}{\partial\theta_{i}}
\]
\[
=-\underset{jl}{\sum}\left[\underset{i}{\sum}\frac{\partial f_{k}(x)}{\partial\theta_{i}}\frac{\partial f_{l}(x_{j})}{\partial\theta_{i}}\right]\frac{\partial\mathcal{L}(x_{j})}{\partial f_{l}(x_{j})}\equiv\underset{jl}{\sum}\Theta_{kl}(x,x_{j})\frac{\partial\mathcal{L}(x_{j})}{\partial f_{l}(x_{j})}.
\]
where the tensor $\Theta\in\mathbb{R}^{n_{c}\times n_{c}\times d\times d}$ is known as the Neural Tangent Kernel (NTK). It is of interest because a sufficiently overparametrized neural network has the capacity to train with the NTK remaining essentially constant during training, enabling a detailed analysis of the dynamics of learning in this regime \cite{Jacot2018-dv,lee2019wide,arora2019exact}. Training is thus essentially equivalent to kernel regression with respect to the kernel $\Theta\in\mathbb{R}^{n_{c}\times n_{c}\times d\times d}$, and if a standard random initialization is used this is a random feature kernel. While this is interesting from a theoretical perspective, it is also apparent that the real strength of deep neural networks is in learning features from data, and the standard operational regime of modern neural networks is one where $\Theta$ changes considerably during training \cite{chizat2018note,ghorbani2019limitations}. 

The ConstNet architecture is interesting in this regard because the features it implements at initialization are particularly weak. They correspond to the features of a linear classifier. To show this, we consider a ConstNet function where the final layers of the network implement average pooling and an affine map, namely 
\[
f(x)=P(\alpha^{(L-1)}(x))=W^{(L)}\frac{1}{\left\vert S^{(L-1)}\right\vert }\underset{\gamma\in S^{(L-1)}}{\sum}\alpha^{(L-1)}(x)+b^{(L)}
\]
and all the previous layers are either convolutions or narrowing layers as described in section \ref{sec:signal_prop_skips}. The NTK takes the form 
\[
\Theta_{ij}(x,x')=\underset{k}{\sum}\frac{\partial f_{i}(x)}{\partial\theta_{k}}\frac{\partial f_{j}(x')}{\partial\theta_{k}}
\]
\[
=\underset{\ell=1}{\overset{L-1}{\sum}}\underset{i_{\ell}j_{\ell}\gamma_{\ell}\gamma'_{\ell}\kappa}{\sum}\beta_{\gamma_{\ell}i_{\ell}}^{(\ell)}(x)\alpha_{\gamma_{\ell}+\kappa,j_{\ell}}^{(\ell-1)}(x)\beta_{\gamma'_{\ell}i_{\ell}}^{(\ell)}(x')\alpha_{\gamma'_{\ell}+\kappa,j_{\ell}}^{(\ell-1)}(x')
\]
\[
+\underset{\gamma\gamma'}{\sum}\beta_{\gamma_{\ell}i_{\ell}}^{(\ell)}(x)\beta_{\gamma'_{\ell}i_{\ell}}^{(\ell)}(x')
\]
\[
+\delta_{ij}\left(\left\langle \frac{\underset{\gamma\in S^{(L-1)}}{\sum}\alpha_{\gamma}^{(L-1)}(x)}{\left\vert S^{(L-1)}\right\vert },\frac{\underset{\gamma'\in S^{(L-1)}}{\sum}\alpha_{\gamma'}^{(L-1)}(x')}{\left\vert S^{(L-1)}\right\vert }\right\rangle +1\right)
\]
where the last term is the contribution from the final layer. Note that since we initialize the final layer weights to zero, we have $\beta^{(\ell)}(x)=0$ for $\ell < L$. In Appendix \ref{app:constnet_sigprop} we noted that if there are $p$ narrowing layers in the network $ \alpha_{\gamma i}^{(L-1)}(x)=\alpha_{\gamma s^{p},i}^{(0)}(x)$ hence 
\[
\Theta_{ij}(x,x')=\delta_{ij}\left(\left\langle \frac{\underset{\gamma\in S^{(L-1)}}{\sum}\alpha_{\gamma s^{p}}^{(0)}(x)}{\left\vert S^{(L-1)}\right\vert },\frac{\underset{\gamma'\in S^{(L-1)}}{\sum}\alpha_{\gamma's^{p}}^{(0)}(x')}{\left\vert S^{(L-1)}\right\vert }\right\rangle +1\right).
\]
If we consider ConstNet without batch-normalization, which achieves a test accuracy of $95\%$ on CIFAR-10 classification, we have $\tilde{\alpha}_{\gamma i}^{(0)}(x)=\underset{j=1}{\overset{3}{\sum}}x_{\gamma j}$. Therefore the kernel above is that of a linear model (since if the model was $f(x)=\theta^{T}x$, we would obtain $\Theta(x,x')=\left\langle x,x'\right\rangle $).  
The fact that ConstNet reaches this level of performance implies that there will be a massive performance gap between training ConstNet in the linear regime as in \cite{lee2019wide,arora2019exact} and full nonlinear training. If we choose a more complex form for $P$ the resulting NTK will still be identical to that of a shallow model (though not necessarily a linear one). 

\section{Replicating Features in a Single Hidden Layer}\label{sup:replicated features}

It is clear that having more than a single neuron representing the same feature in our final trained model is redundant --- a group of neurons that are identical for all inputs will not contribute to a successful classification. We therefore ask the question: given that our network was initialized so the same neurons in a layer represent the same feature, will they diverge and contribute to the model accuracy during the training process?

To answer this question, we consider a 2-layers fully connected neural network:
\begin{equation}\label{eq:mnist_model}
\begin{array}{c}
X\in\mathbb{R}^{d}, W_{1}\in\mathbb{R}^{ N \times d}, W_{2}\in\mathbb{R}^{ \text{\#classes}  \times N} \\
h(X,W_1) = \text{Relu}\left(W_{1}X\right) \\
f(X,W_1,W_2) = \text{SoftMax}\left(W_{2}h(x,W_1)\right)
\end{array}
\end{equation}
where we initialize the neural network as follow: the weight matrix $W_2$ is initialized using the standard He initialization \cite{he2015delving}, and for each number of features $K$, we initialize the temporary matrices $\tilde{W}_1\in\mathbb{R}^{ K \times d}, \widehat{W}_1\in\mathbb{R}^{ N \times d}$ using the He initialization, and initialize $W_1$ using $r=\frac{N}{K}$ replications of $\tilde{W}_1$, so 
\begin{equation}
\forall i, W_1[i,:]=\tilde{W}_1[ i\text{ mod }k,:](1-\lambda) + \lambda \widehat{W}_1[i,:] .
\end{equation}
where $\lambda$ is a parameter simulating noise. For $\lambda=0, K>1$, this would result in each row in $W_1$ being repeated $r$ times at initialization (and consequently, the same also applies to each hidden layer neuron). We identify two main possible causes for initially identical neurons to diverge during training: First, stochastic operations can result similar rows in $W_1$ receiving different gradient updates. The most commonly used operation that would achieve this is dropout, which will randomly mask neurons, so some neurons may freeze while their "replicas" change. The second cause is back-propagation itself: Even when not using dropout, the gradient of $W_1$ will be: $\frac{dL}{dW_{i,j}}=\frac{dL}{dh_i}\frac{dh_i}{dW_{i,j}}=\frac{dL}{dh_i}X_j$ and since $\frac{dL}{dh_i}$ depends on the corresponding column $(W^{T}_{2})_i$ (which is random at initialization), the update gradient may be different for each row.

The training was done with standard SGD, learning rate of 0.1, with test accuracy measured after 7500 training steps and 30 seeds per sample.

\section{Features Symmetry and Sub-networks}\label{sup:subnetworks}

Network with identical features has inherently less unique sub-networks at initialization, but the there could be different levels of symmetries. In the case where all features have all-to-all connection with neighbouring layers (as was done in ConstNet and LeakyNet with '1' Init), all features at each layer are interchangeable. Therefore, when considering possible way to mask \textit{neurons}, all sub-networks that mask the same amount of neurons at all layers are equivalent. For example, a randomly initialized neural network with $L$ layers of width $d$ can have up to $2^{dL}$ possible unique sub-networks, while a symmetrical features, all-to-all initialization ensures no more than $d^L=2^{\log_2(d)L}$. 

If the features are initialized to have a 1:1 connection with the following layer, as was done in the case LeakyNet with identity initialization, given a network of $L$ layers if width $d$, we have $2^L$ ways to mask each feature, when only features with the same masks at all layers are interchangeable. We can roughly approximate the number of sub-networks in this case by:
\begin{equation*}
\left(\begin{array}{c}
2^{L}+d-1\\
d
\end{array}\right)\propto \frac{\left(2^{L}-d\right)^{2^{L}}}{d^{d}\left(2^{L}\right)^{2^{L}}}\approx\frac{\left(2^{L}+d\right)^{2^{L}+d}}{d^{d}\left(2^{L}\right)^{2^{L}}}\stackrel{d\ll L}{\to}\frac{2^{dL}}{d^{d}}.
\end{equation*}{}

\renewcommand{\figscale}{2.2in}
\renewcommand{\figspacescale}{-0.3in}

\begin{figure*}[ht!]
    
    \centering

    \begin{subfigure}{}
        \centering
    	\includegraphics[width=\figscale]{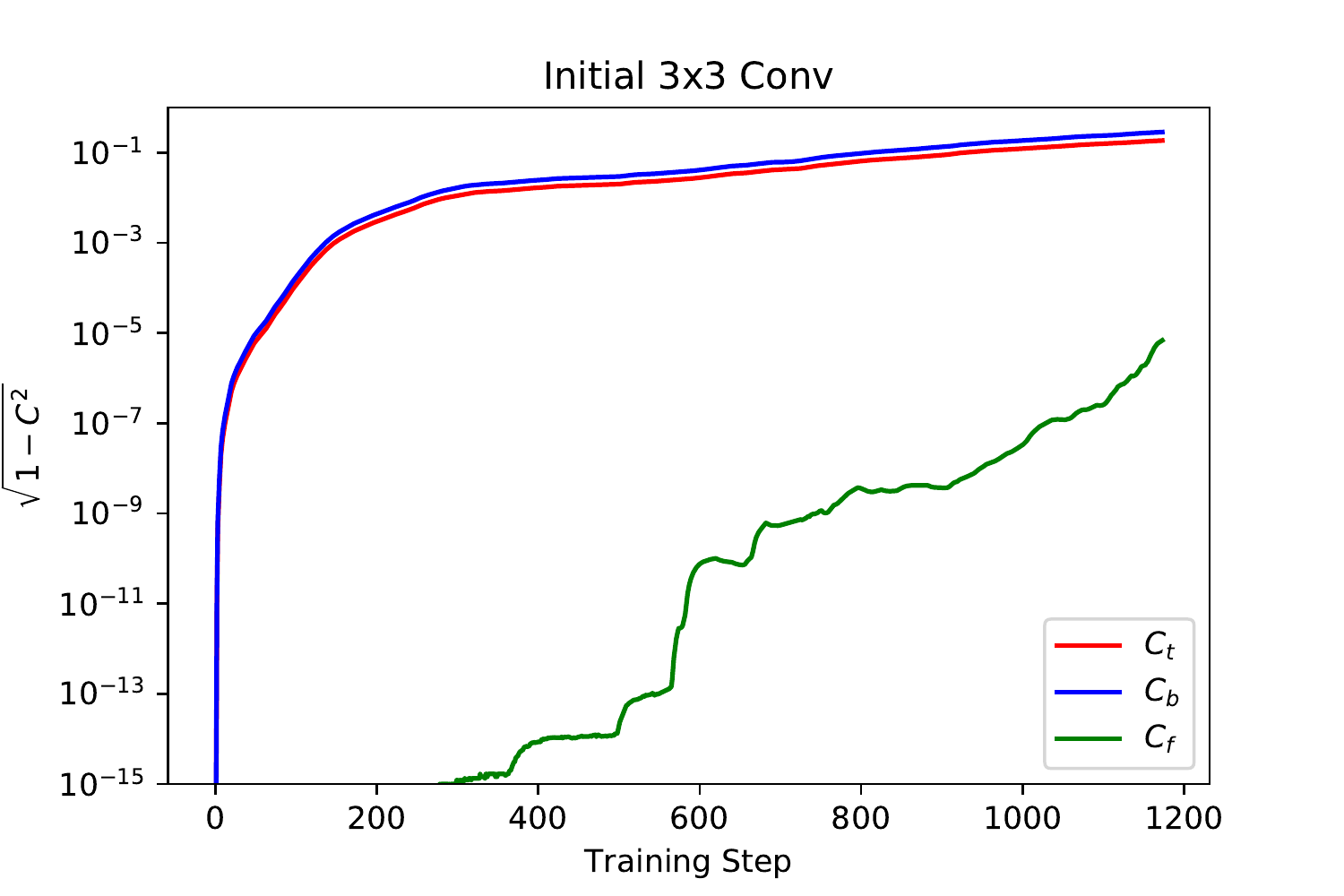}
    \end{subfigure}
    \hspace{0.5in}
    \begin{subfigure}{}
        \centering
    	\includegraphics[width=\figscale]{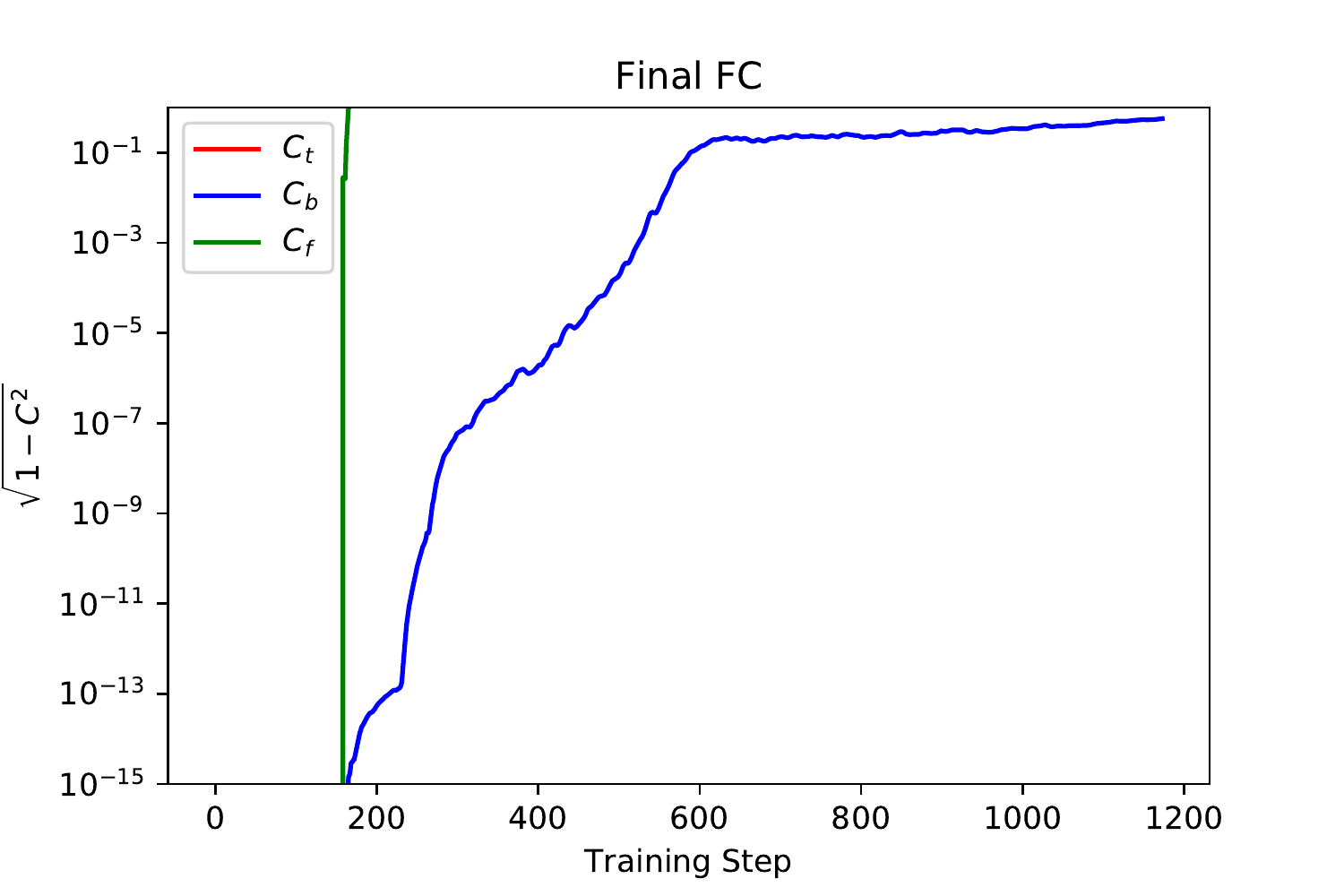}
    \end{subfigure}%
    \\
    \begin{subfigure}{}
        \centering
    	\includegraphics[width=\figscale]{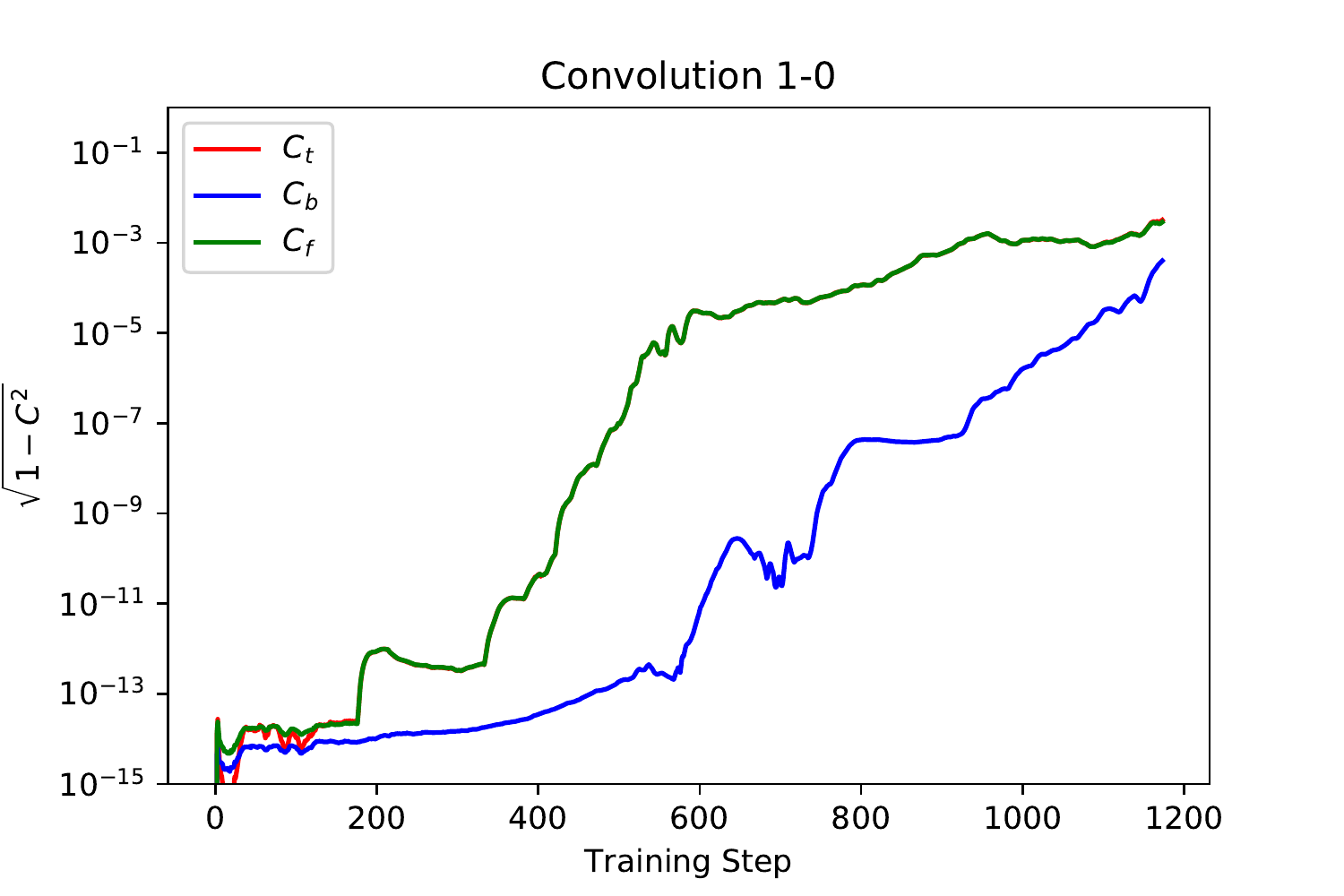}
    \end{subfigure}
    \hspace{\figspacescale}
    \begin{subfigure}{}
        \centering
    	\includegraphics[width=\figscale]{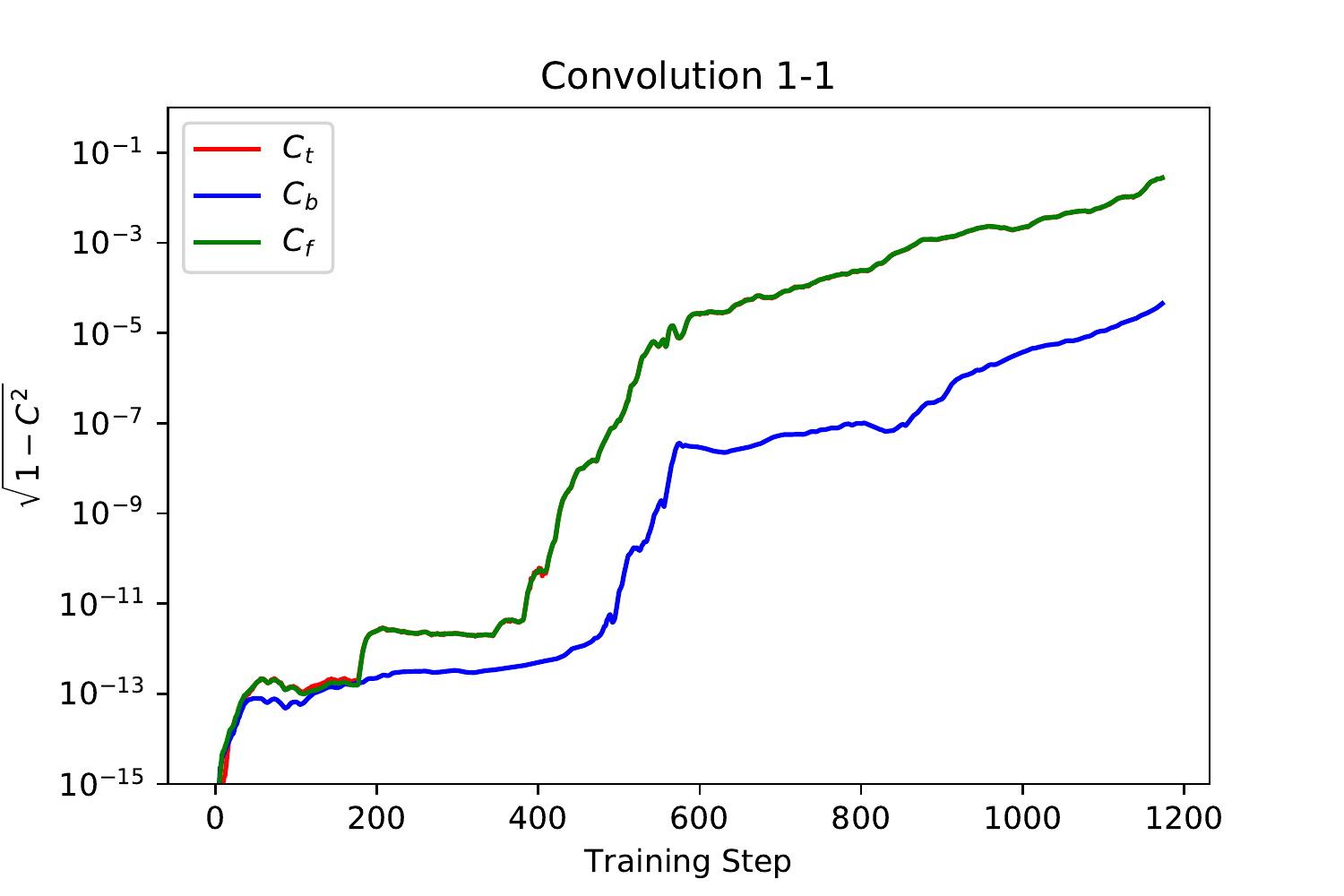}
    \end{subfigure}
    \hspace{\figspacescale}
    \begin{subfigure}{}
        \centering
    	\includegraphics[width=\figscale]{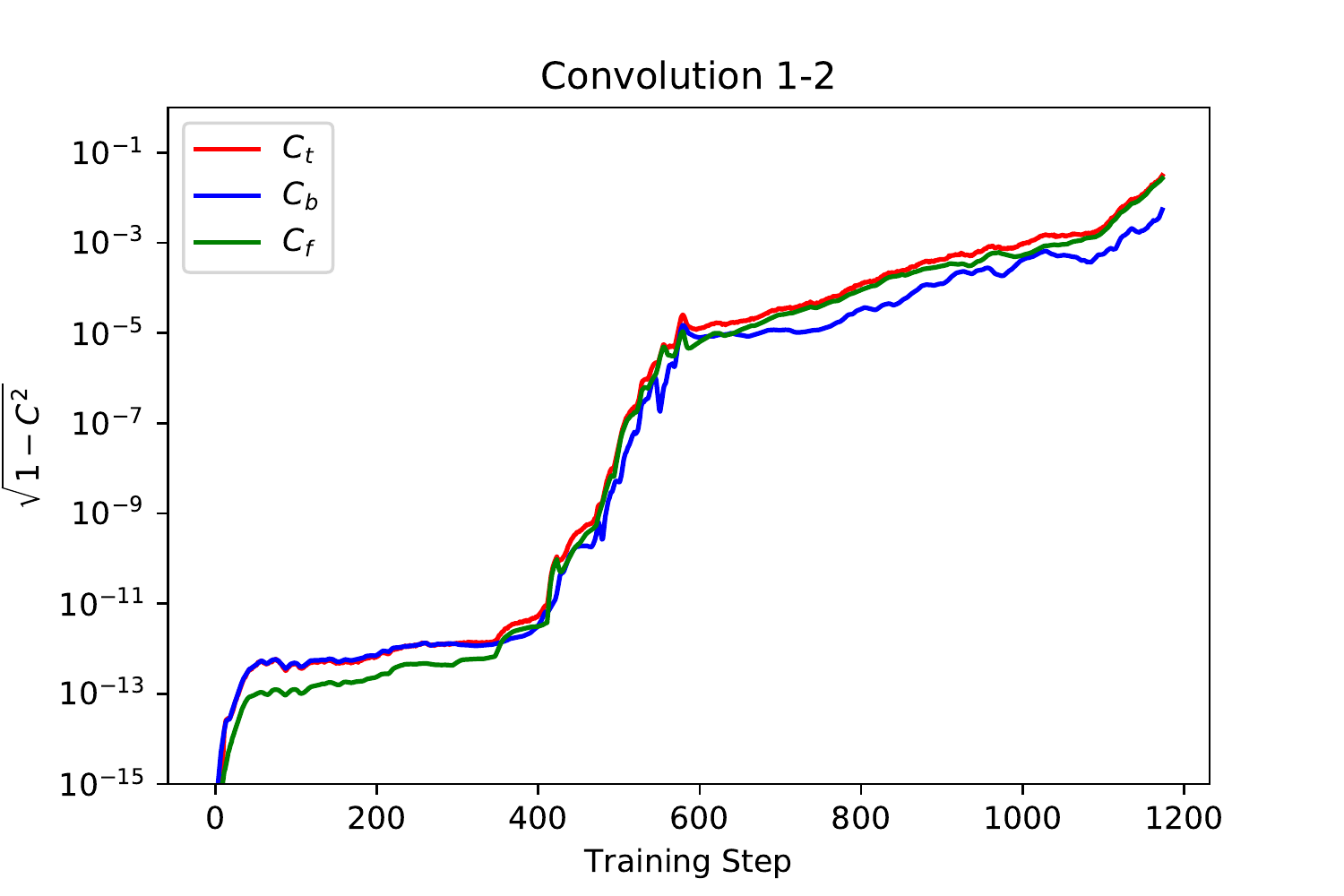}
    \end{subfigure}%
    \\
    \begin{subfigure}{}
        \centering
    	\includegraphics[width=\figscale]{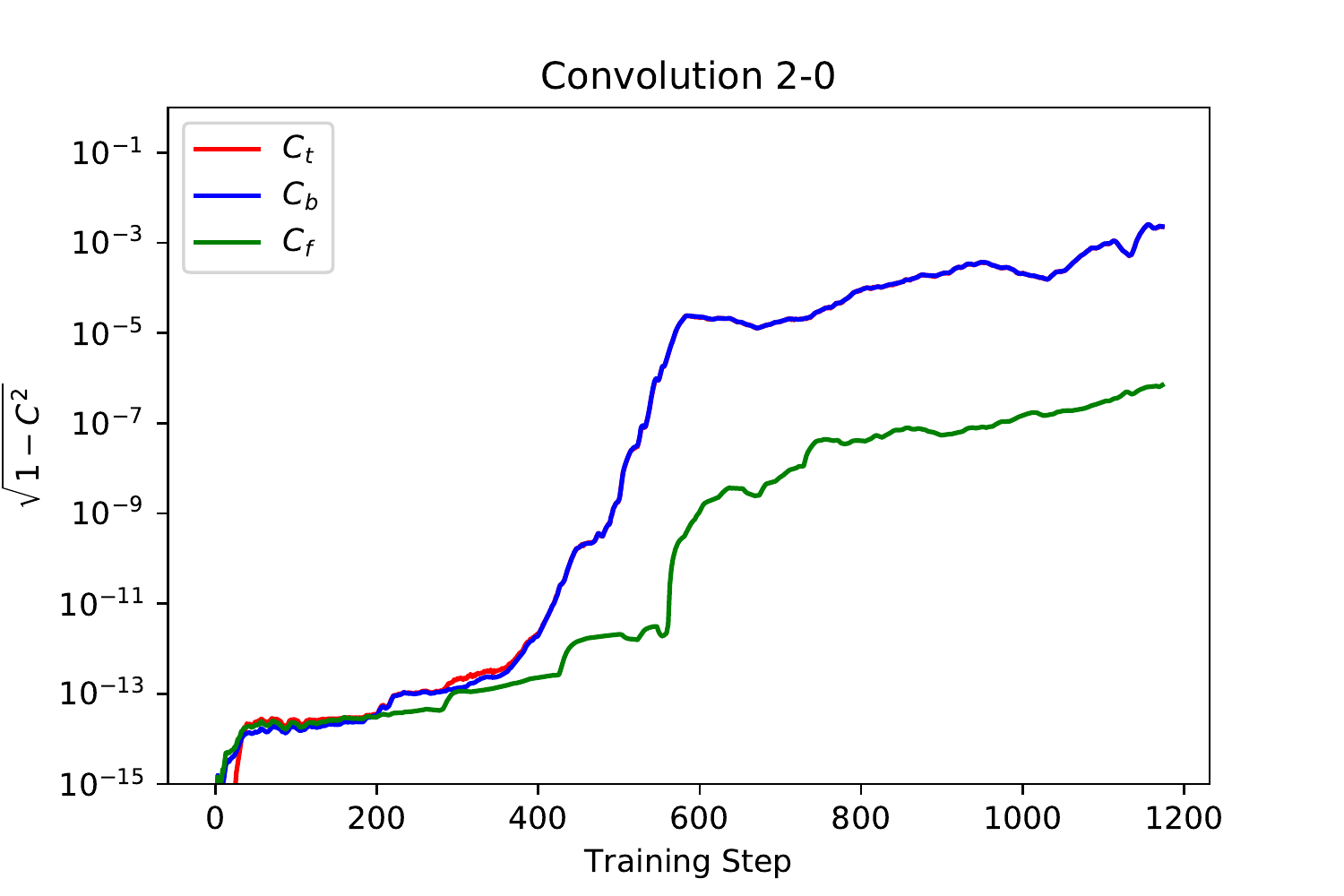}
    \end{subfigure}%
        \hspace{\figspacescale}
    \begin{subfigure}{}
        \centering
    	\includegraphics[width=\figscale]{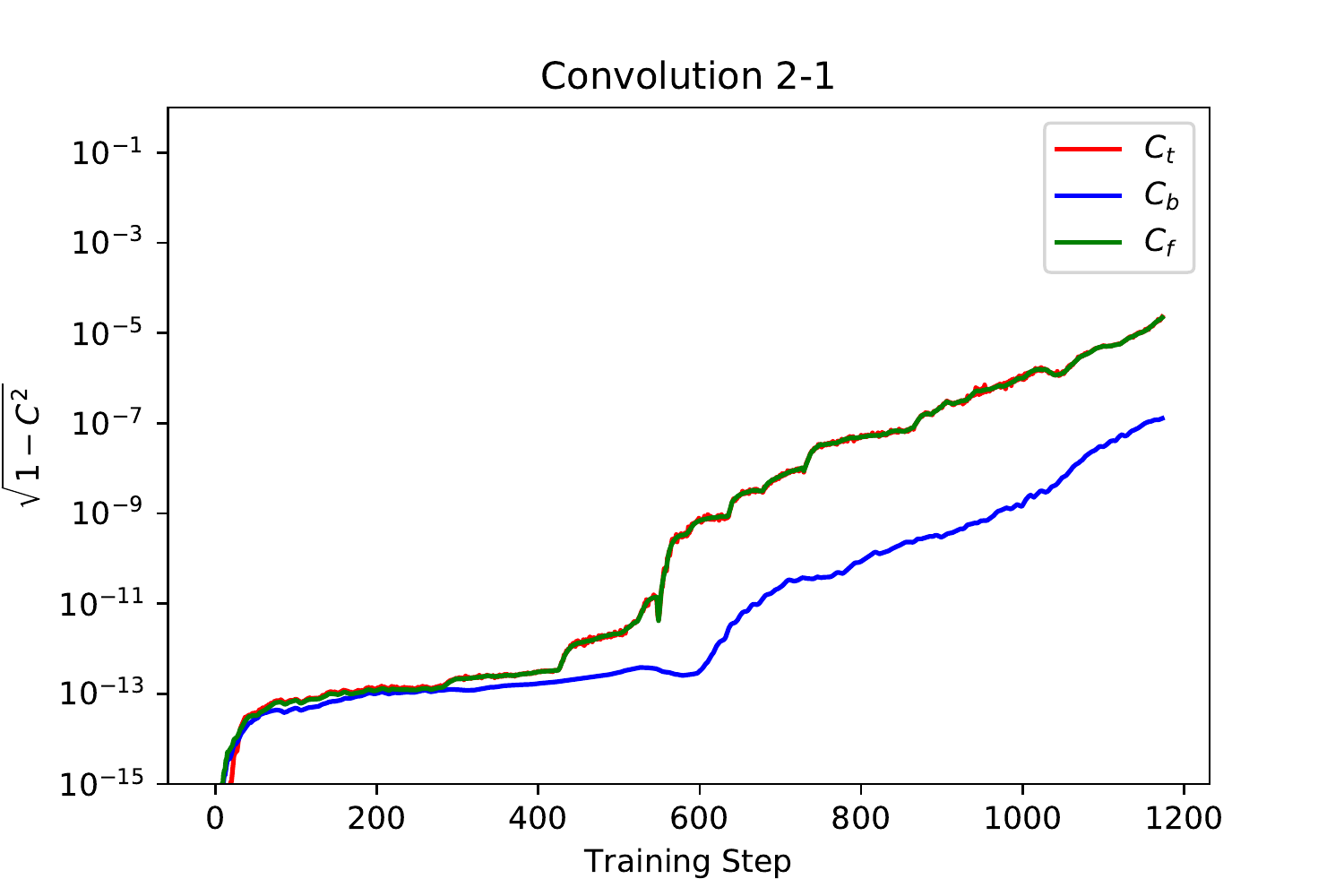}
    \end{subfigure}%
        \hspace{\figspacescale}
    \begin{subfigure}{}
        \centering
    	\includegraphics[width=\figscale]{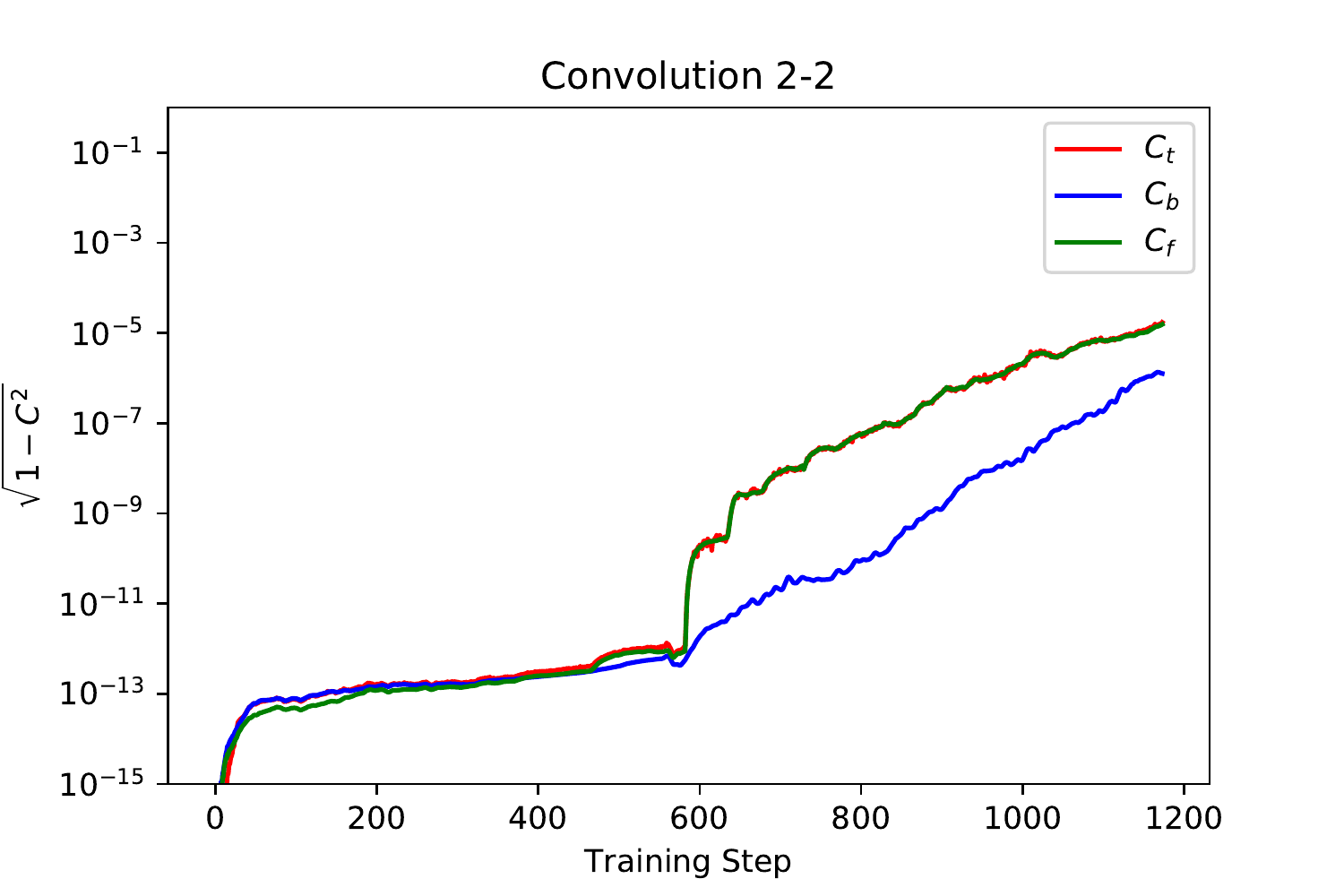}
    \end{subfigure}%
    \\
    \begin{subfigure}{}
        \centering
    	\includegraphics[width=\figscale]{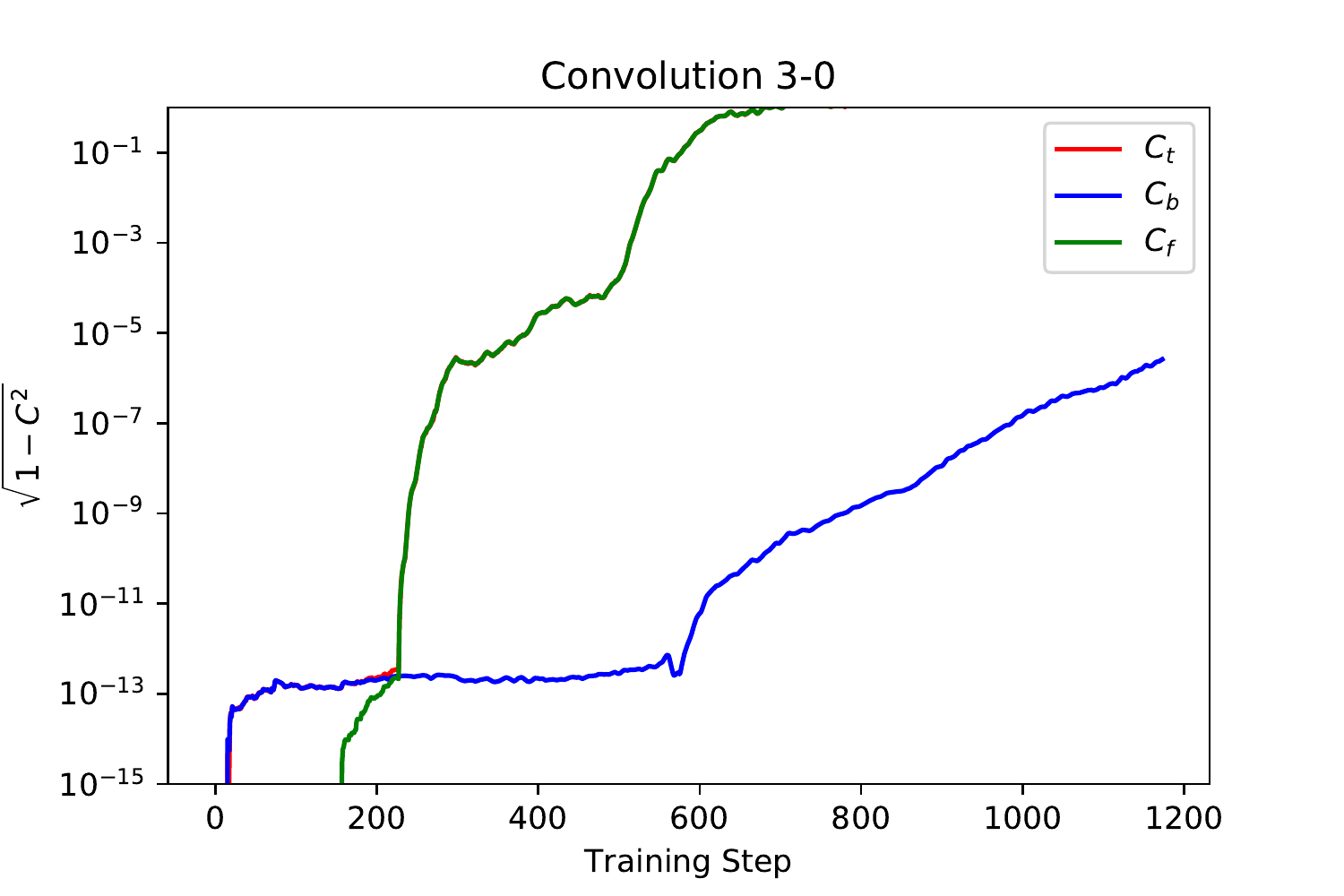}
    \end{subfigure}
    \hspace{\figspacescale}
    \begin{subfigure}{}
        \centering
    	\includegraphics[width=\figscale]{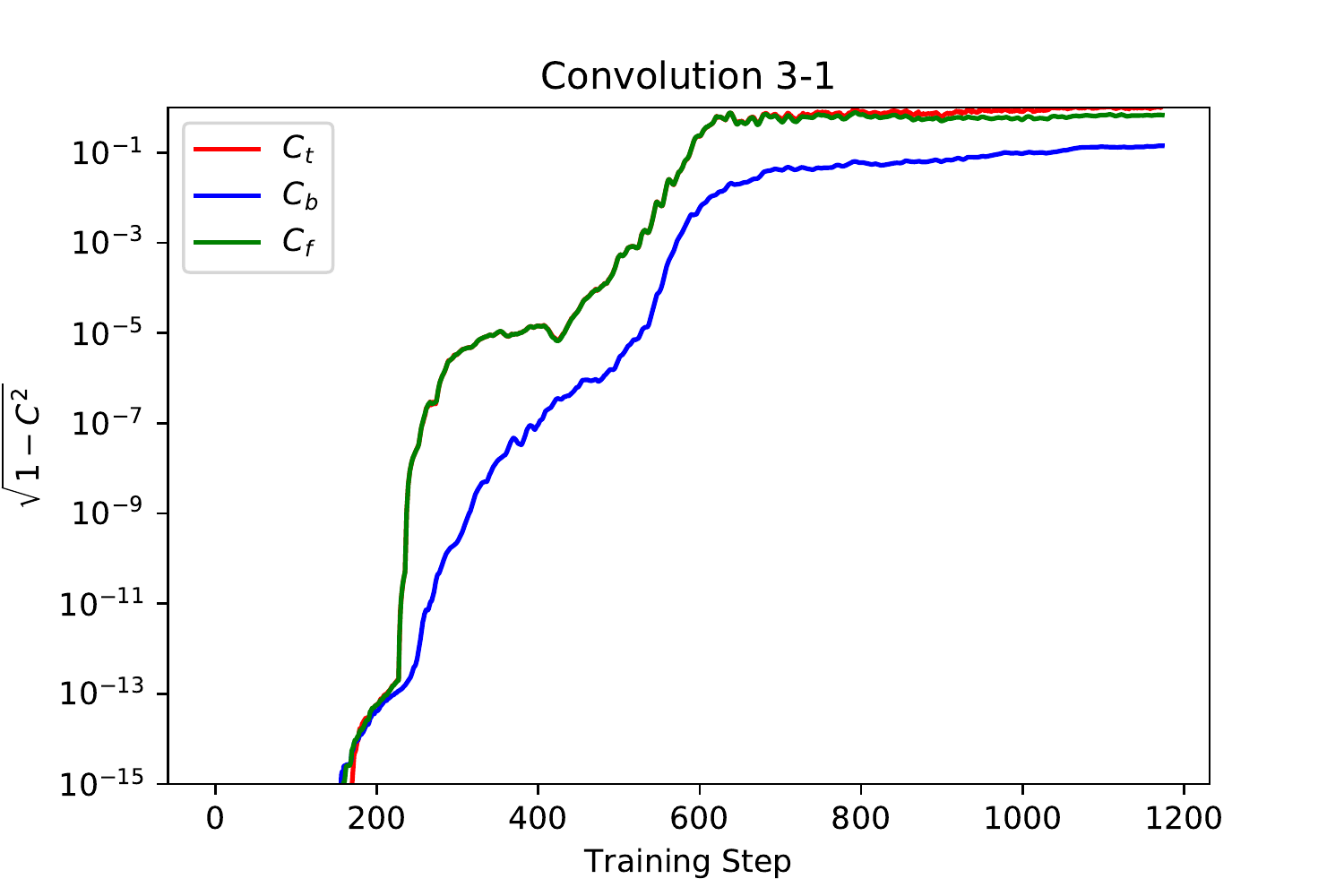}
    \end{subfigure}
    \hspace{\figspacescale}
    \begin{subfigure}{}
        \centering
    	\includegraphics[width=\figscale]{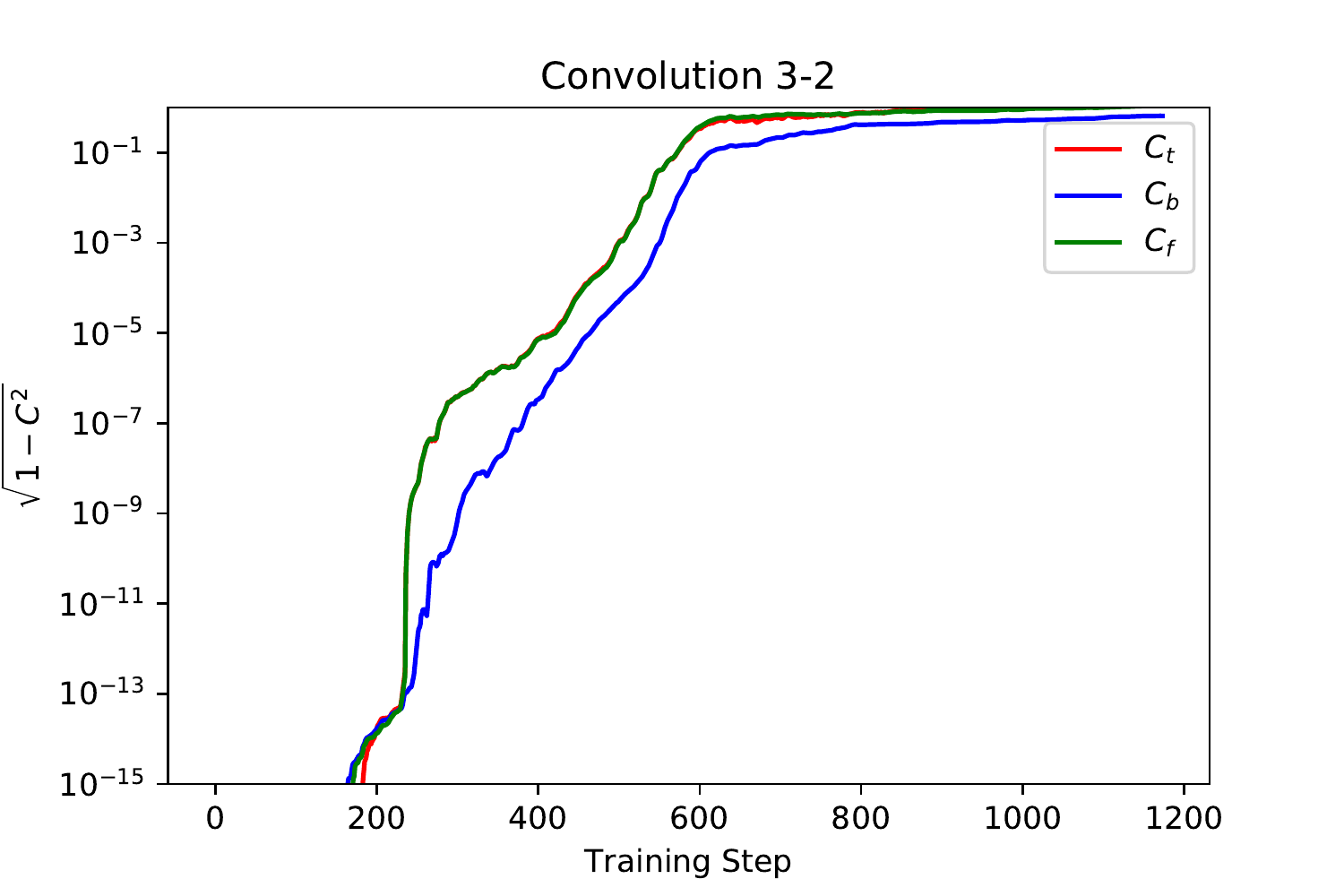}
    \end{subfigure}%
    \\
    \begin{subfigure}{}
        \centering
    	\includegraphics[width=\figscale]{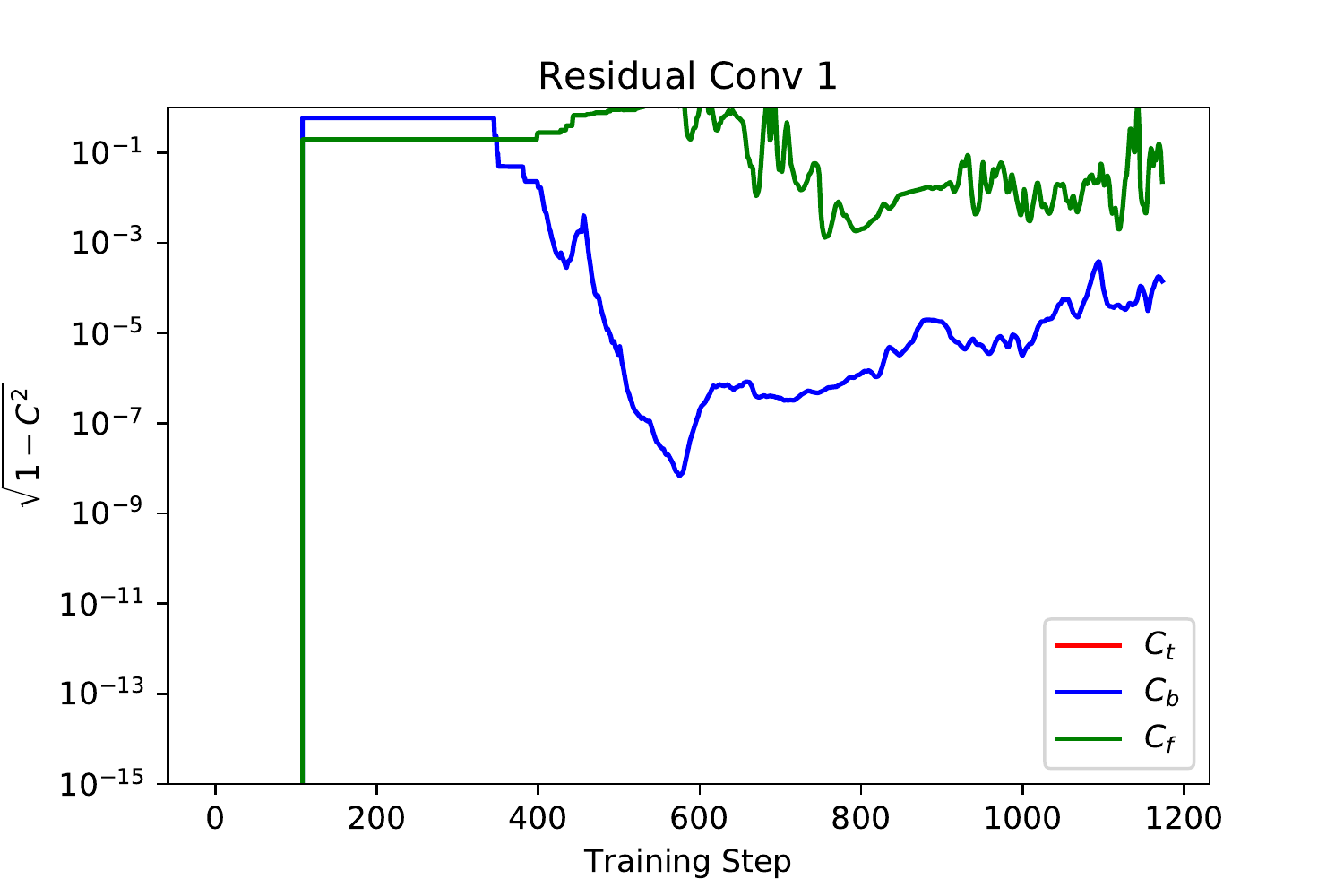}
    \end{subfigure}
    \hspace{\figspacescale}
    \begin{subfigure}{}
        \centering
    	\includegraphics[width=\figscale]{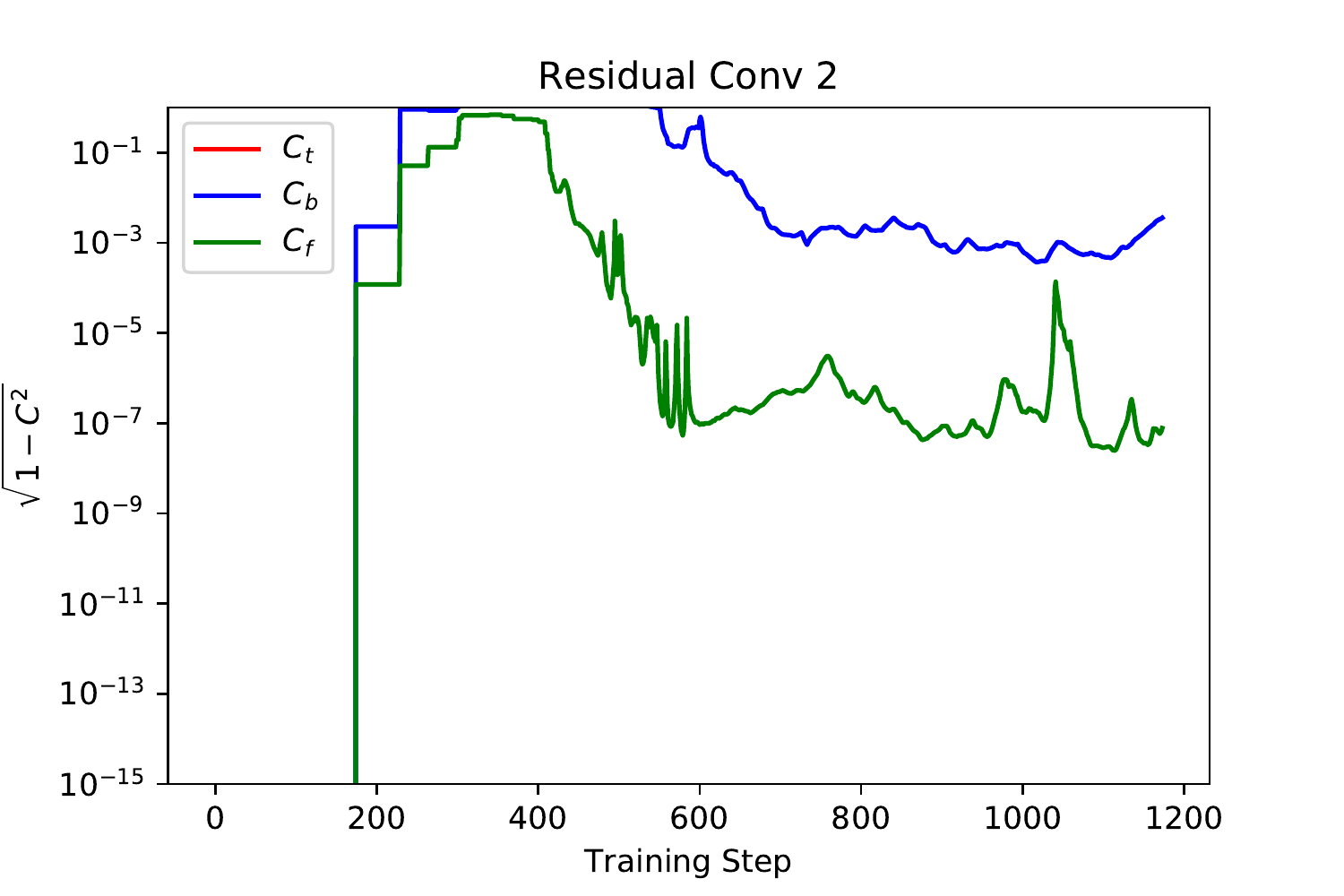}
    \end{subfigure}
    \hspace{\figspacescale}
    \begin{subfigure}{}
        \centering
    	\includegraphics[width=\figscale]{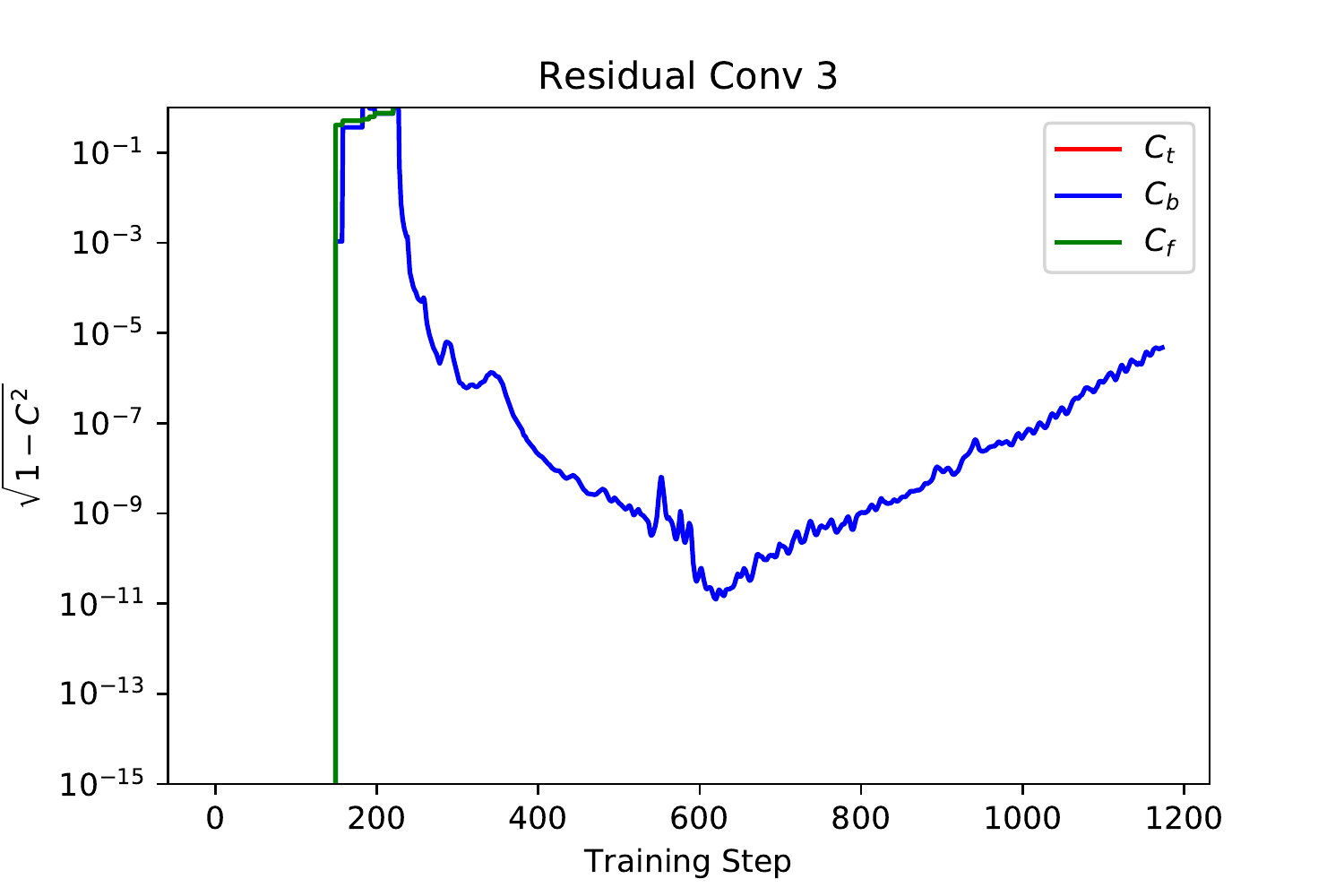}
    \end{subfigure}%
    \\
    \caption{Feature symmetry in each of the layers ConstNet ('$0$'-init, $9$ skip-able layers), on the first 3 epochs of training. The symmetry is measured using forward correlation $C_f$, backward correlation $C_b$, and total correlation $C_t$ (Correlation between all filters), in all of the weight tensors in the network. Skip-able operations are numbered based on their groups (shown in figure \ref{fig:groups}), and position within the group: Tensors belong to the same groups, if the output of their respective operations is summed together (with or without passing through an activation operator). It is apparent that the forward correlations behave in a similar manner across operations for the $3\times3$ convolutions, while the residual convolutions tend to fluctuate and are less predictable. The backward correlations also behave in accordance to their group (though its group can be different from the group of the forward correlation in the same layer). e.g, the forward correlation of 'Convolution 3-0' is on group 3, while it's backward correlation, matches the behaviour of correlations group 2, to which it's input is connected.}
    \label{fig:fullSymBreak_AtInit}
\end{figure*}

\renewcommand{\figscale}{2.0in}
\renewcommand{\figspacescale}{-0.0in}

\begin{figure*}[ht!]

    \centering
    \begin{subfigure}{}
        \centering
    	\includegraphics[width=3.00in, height=1.45in]{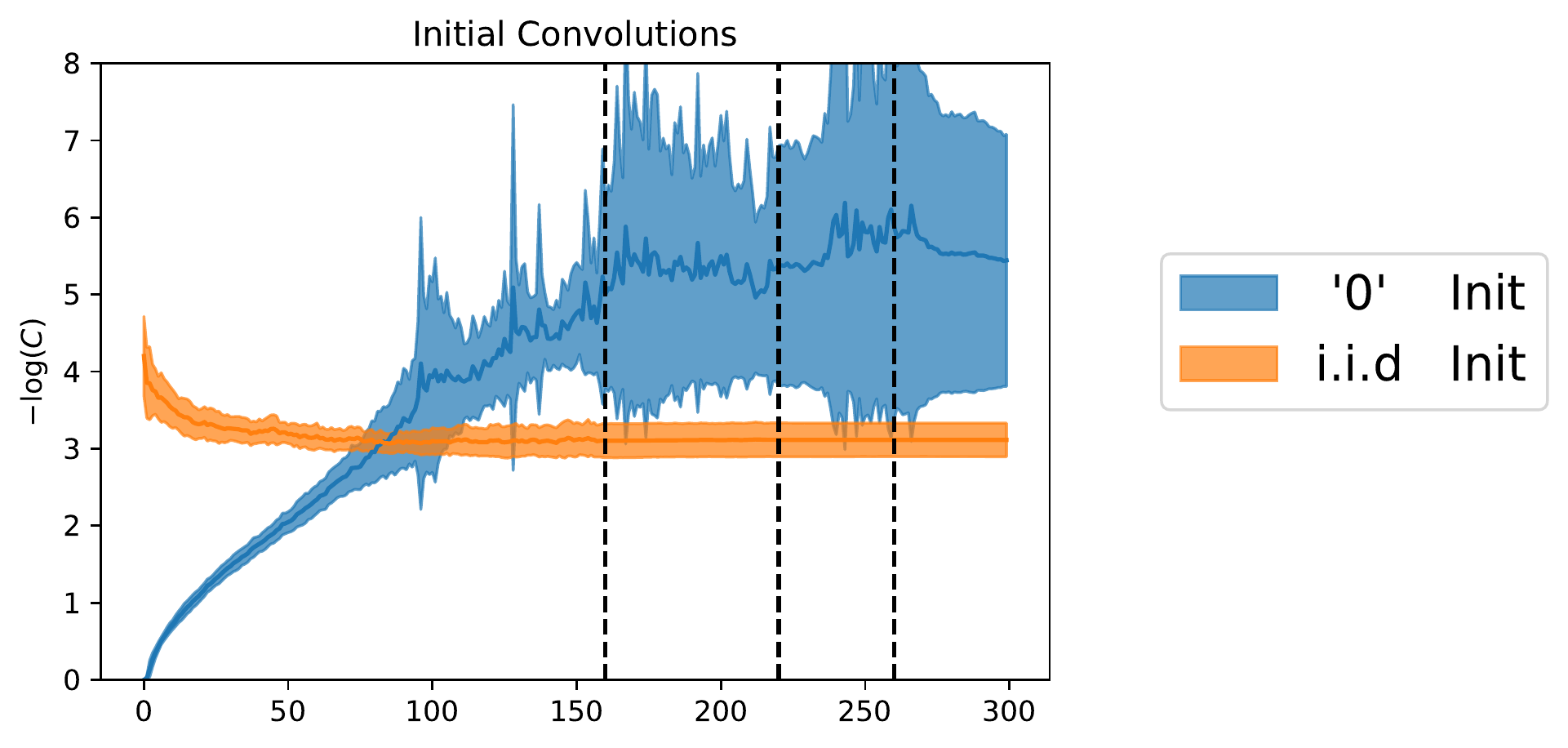}
    \end{subfigure}
    \begin{subfigure}{}
        \centering
    	\includegraphics[width=\figscale, height=1.45in]{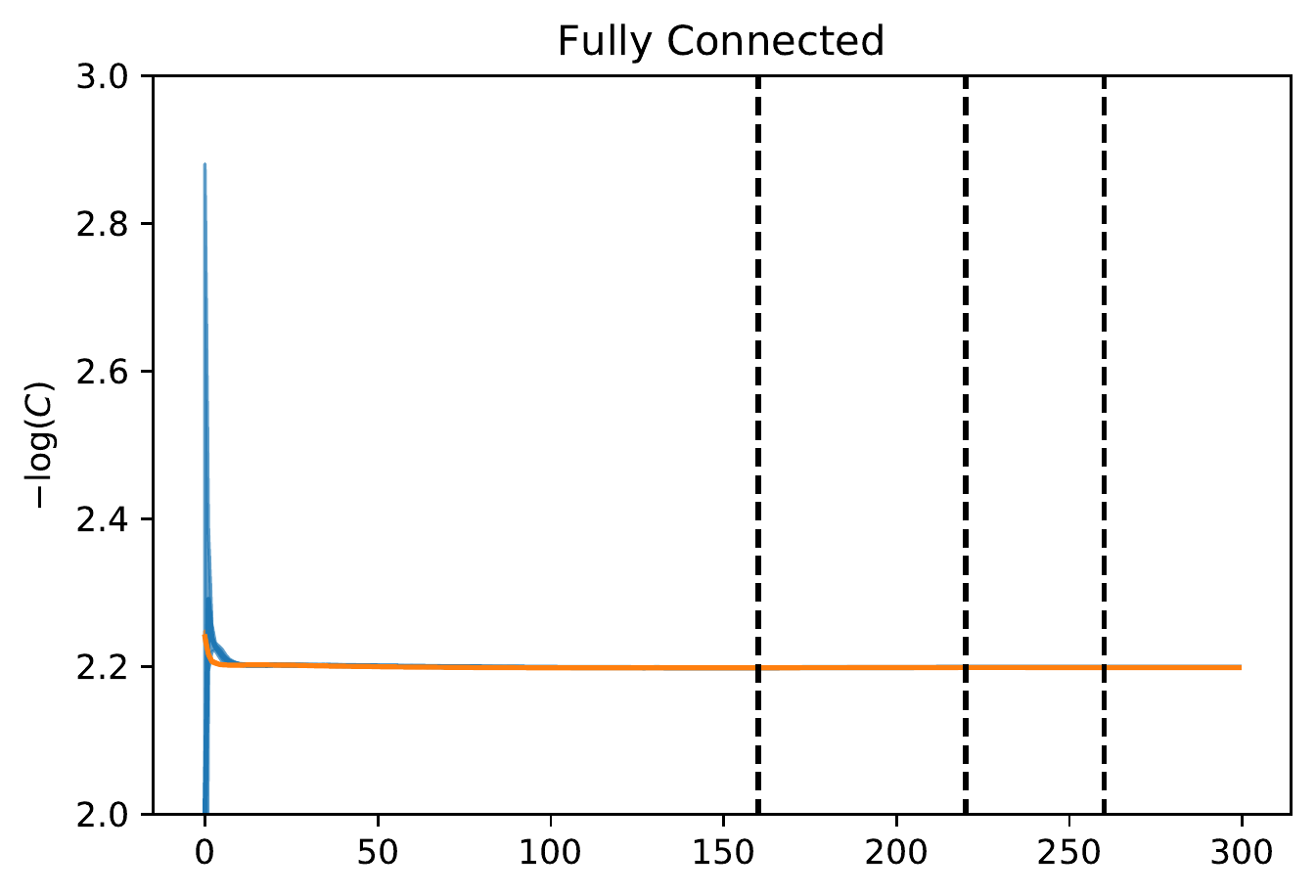}
    \end{subfigure}
    \\
    \begin{subfigure}{}
        \centering
    	\includegraphics[width=\figscale, height=1.3in]{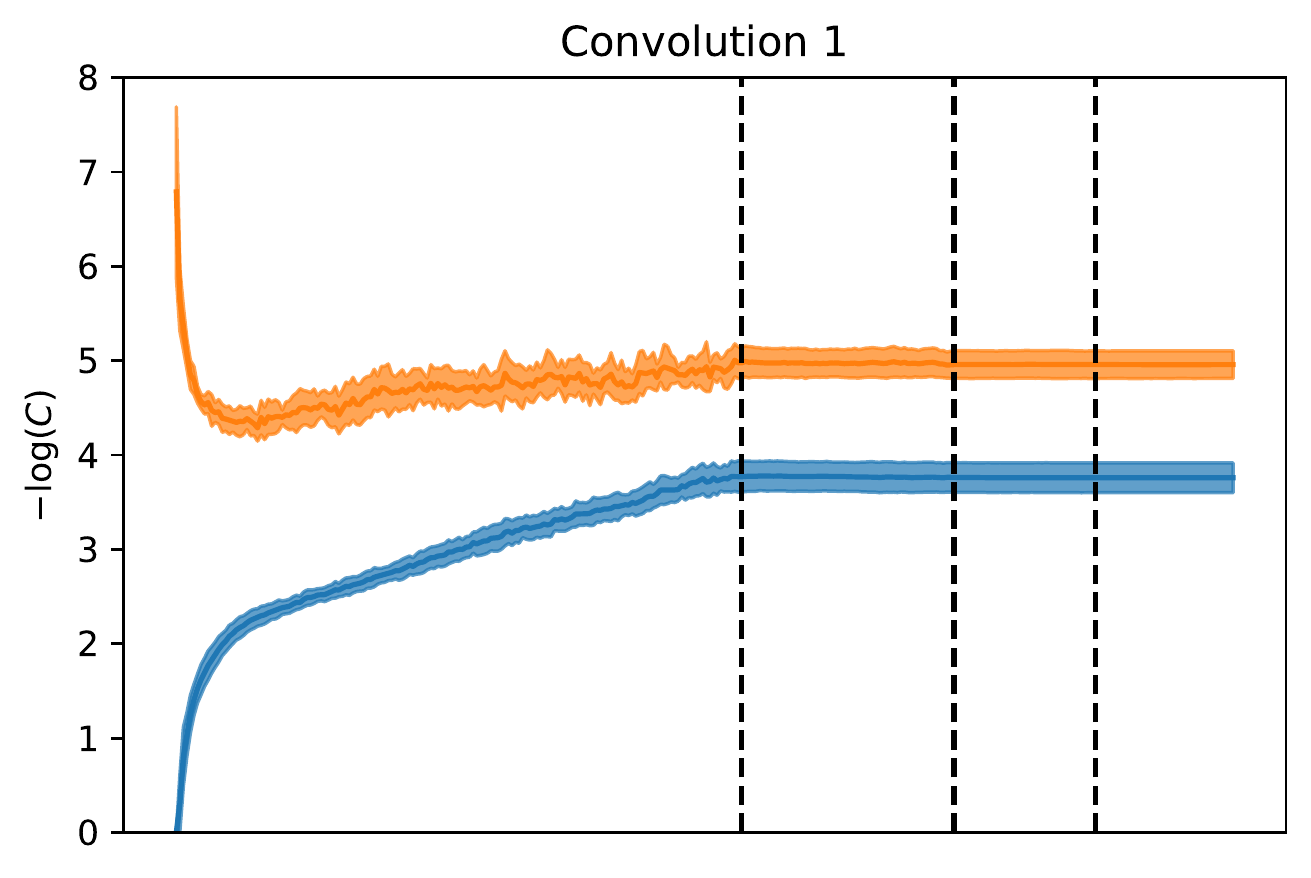}
    \end{subfigure}
    \hspace{\figspacescale}
    \begin{subfigure}{}
        \centering
    	\includegraphics[width=\figscale, height=1.3in]{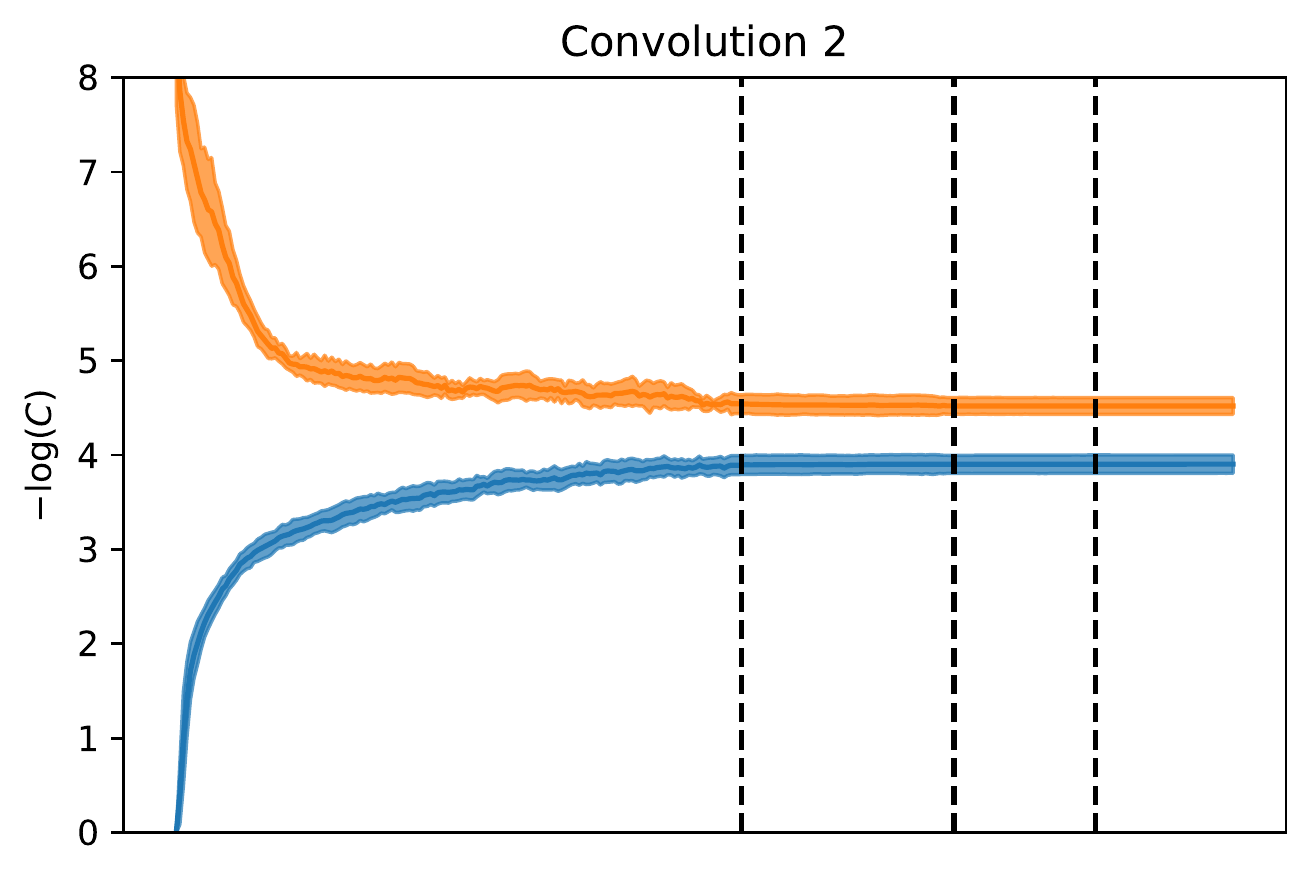}
    \end{subfigure}
    \hspace{\figspacescale}
    \begin{subfigure}{}
        \centering
    	\includegraphics[width=\figscale, height=1.3in]{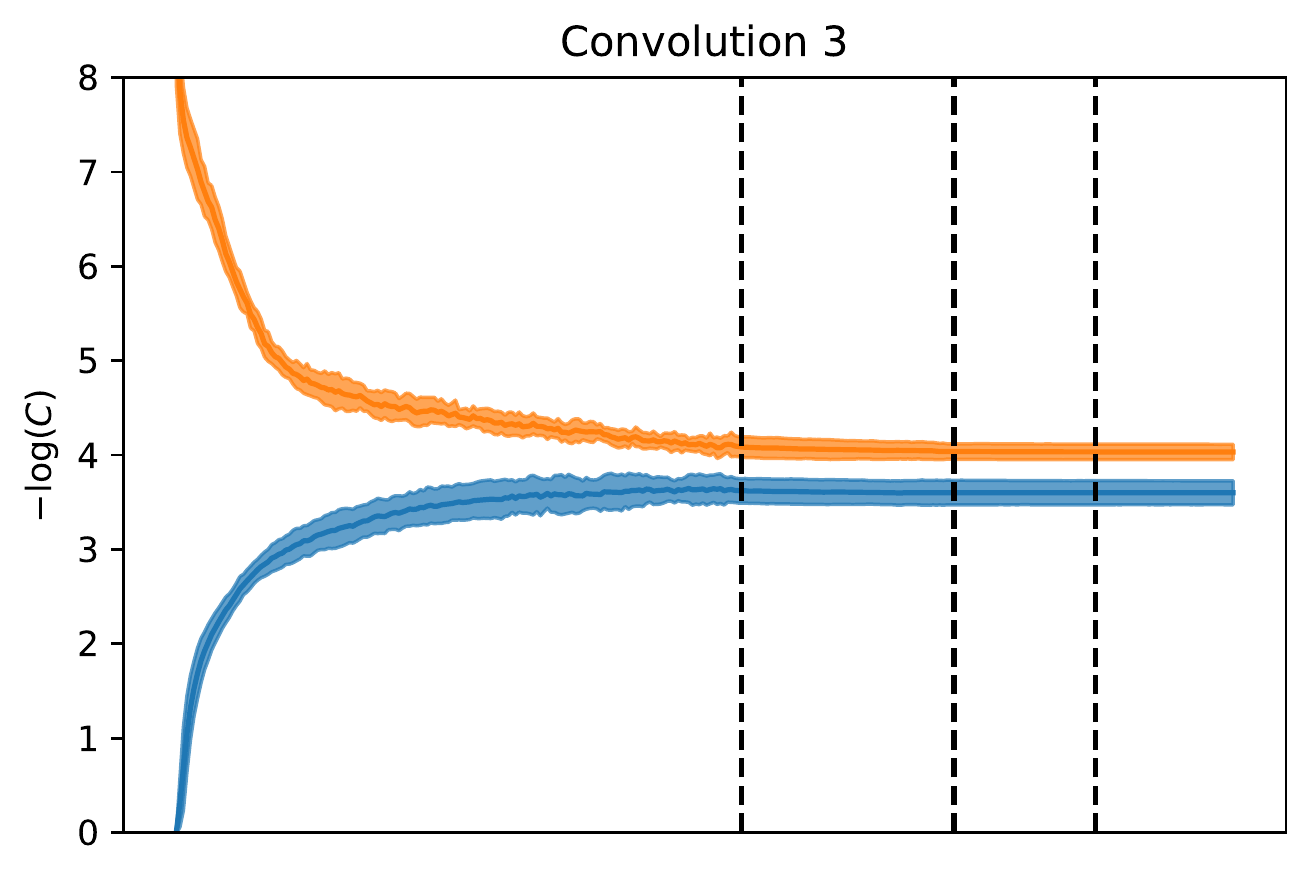}
    \end{subfigure}%
    \\
    \vspace{-0.1in}
    \begin{subfigure}{}
        \centering
    	\includegraphics[width=\figscale, height=1.3in]{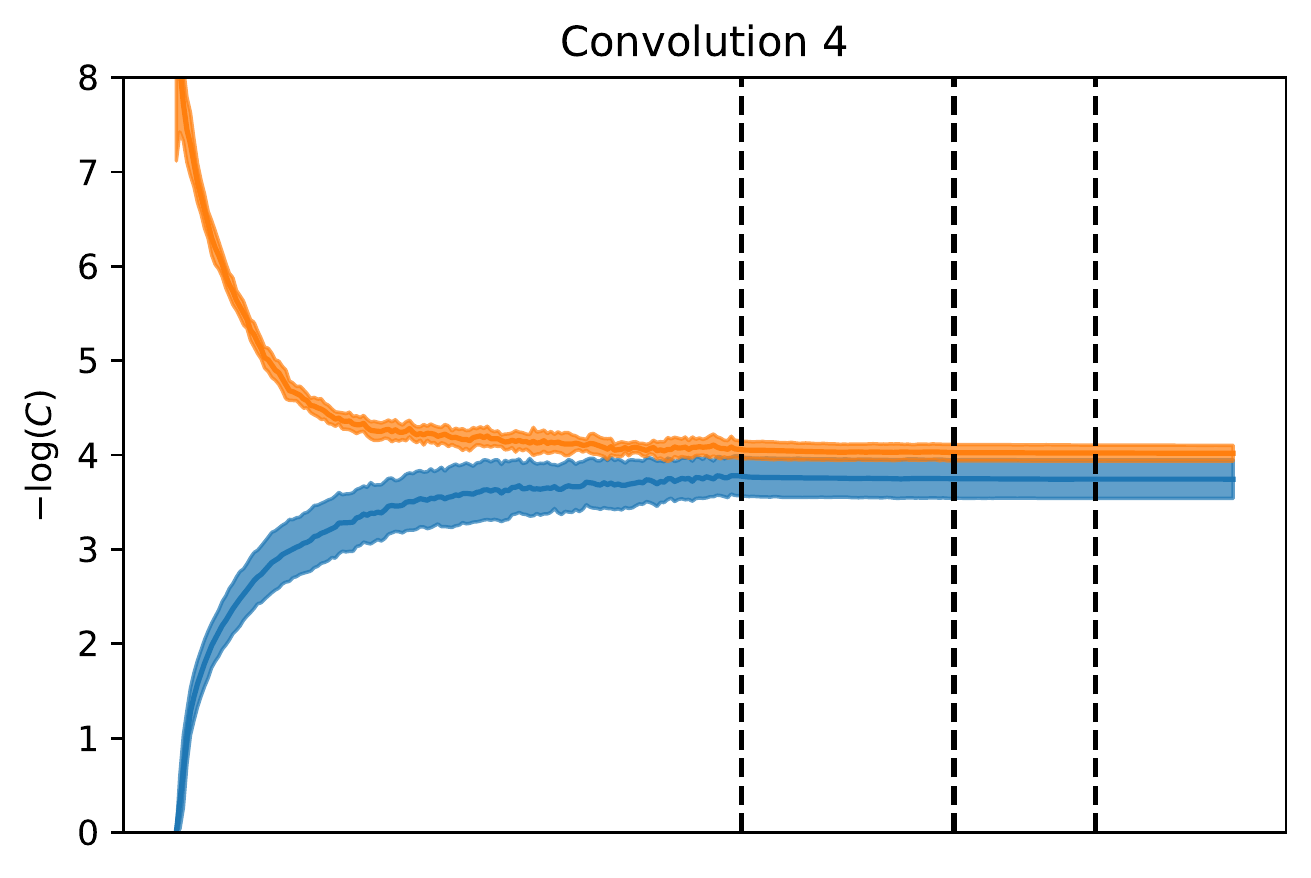}
    \end{subfigure}%
        \hspace{\figspacescale}
    \begin{subfigure}{}
        \centering
    	\includegraphics[width=\figscale, height=1.3in]{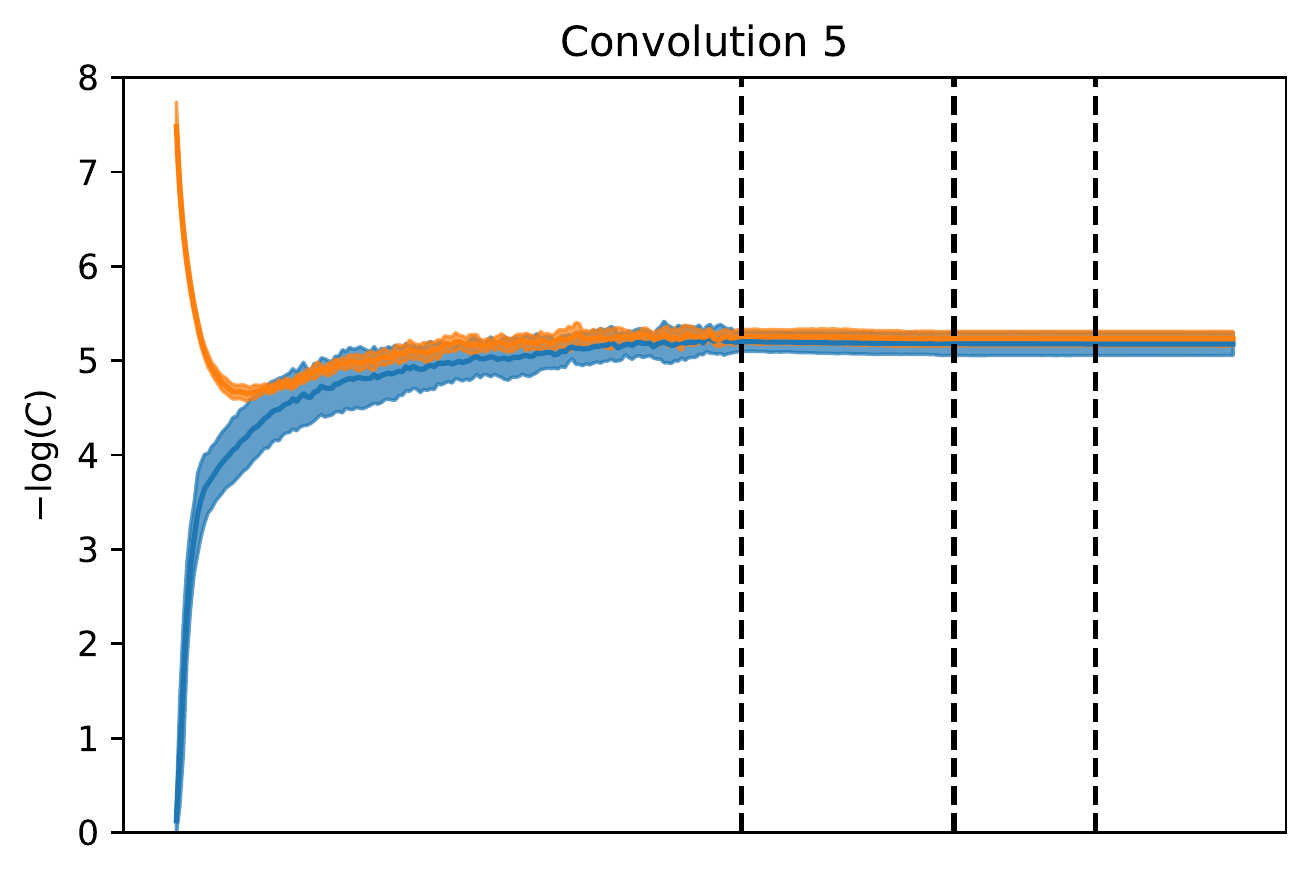}
    \end{subfigure}%
        \hspace{\figspacescale}
    \begin{subfigure}{}
        \centering
    	\includegraphics[width=\figscale, height=1.3in]{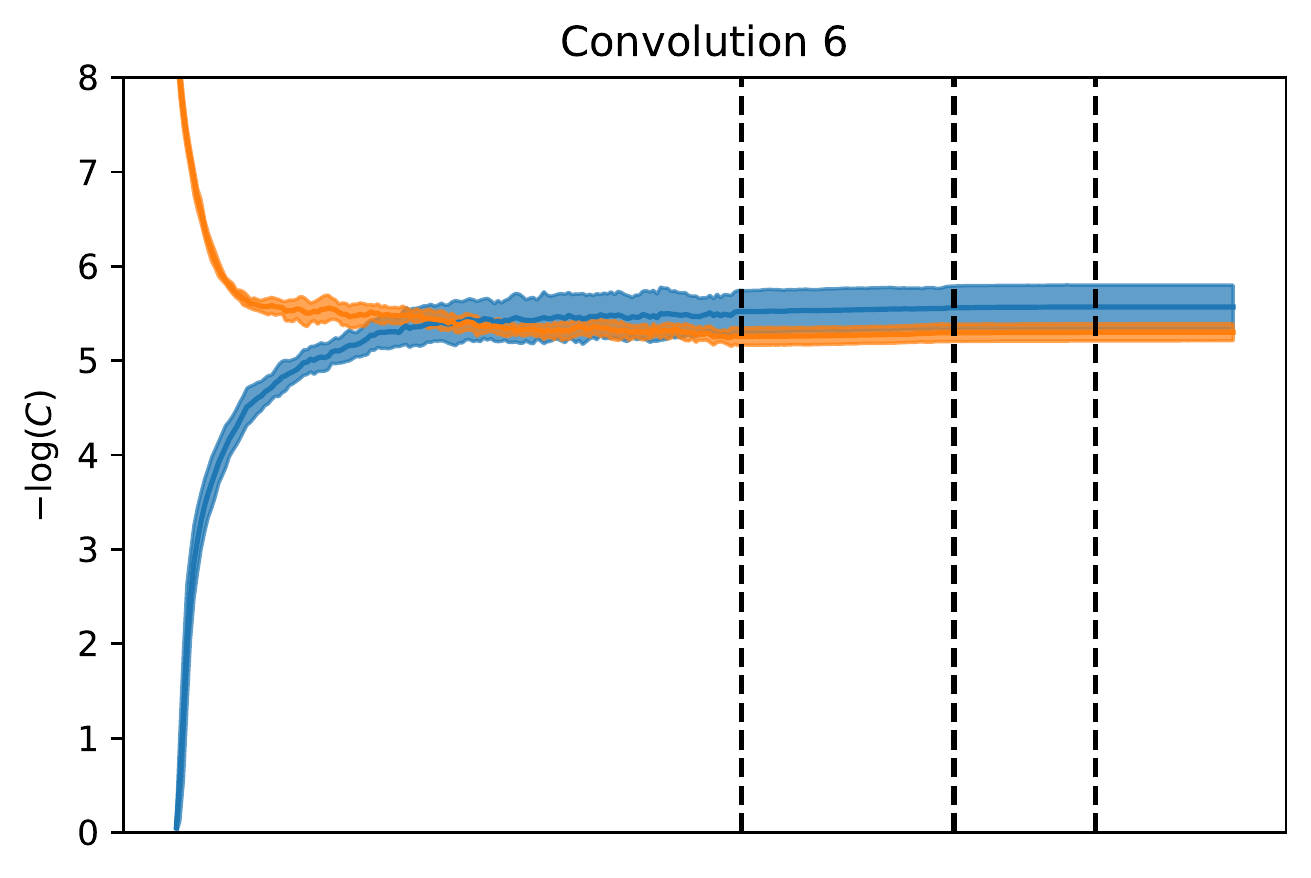}
    \end{subfigure}%
    \vspace{-0.1in}
    \\
    \begin{subfigure}{}
        \centering
    	\includegraphics[width=\figscale, height=1.3in]{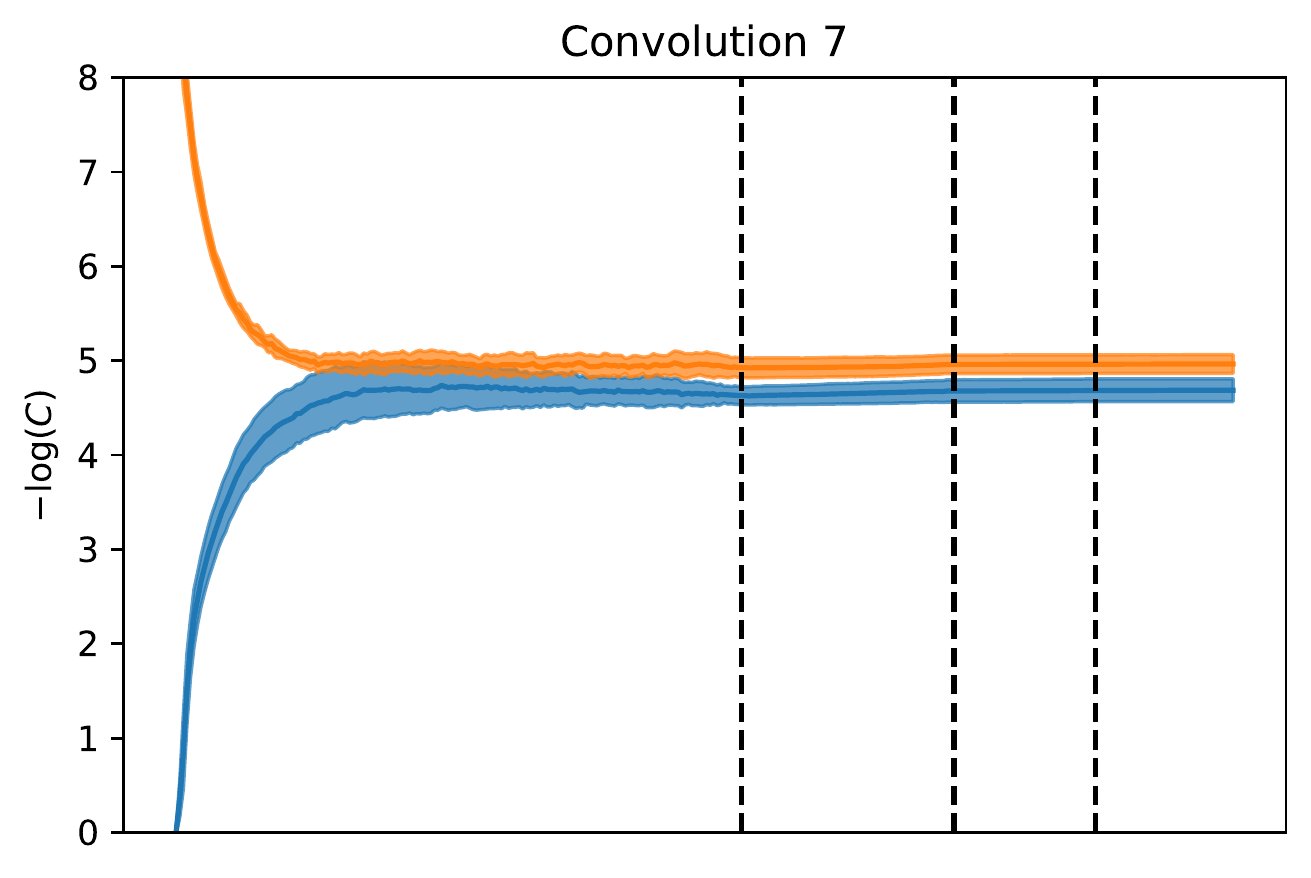}
    \end{subfigure}
    \hspace{\figspacescale}
    \begin{subfigure}{}
        \centering
    	\includegraphics[width=\figscale, height=1.3in]{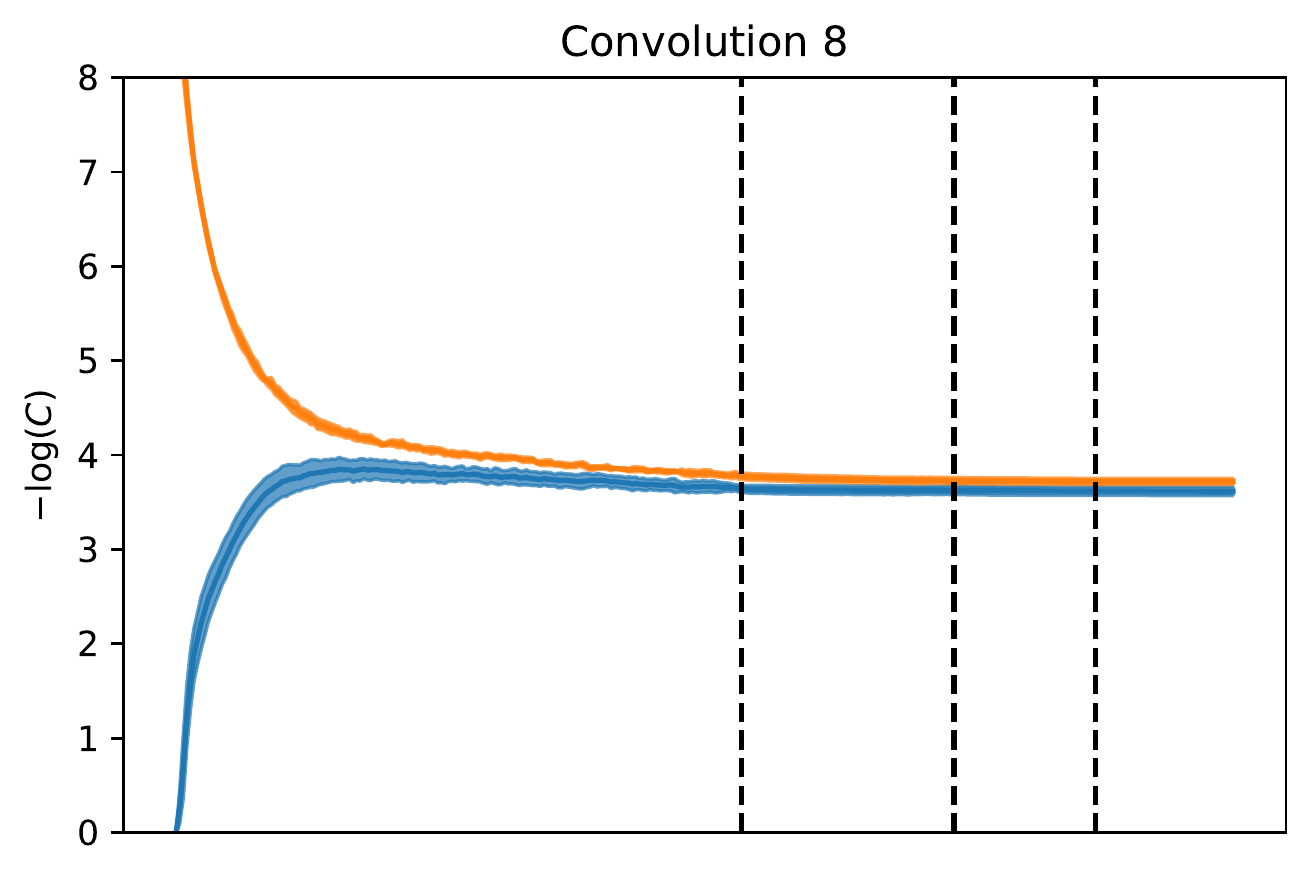}
    \end{subfigure}
    \hspace{\figspacescale}
    \begin{subfigure}{}
        \centering
    	\includegraphics[width=\figscale, height=1.3in]{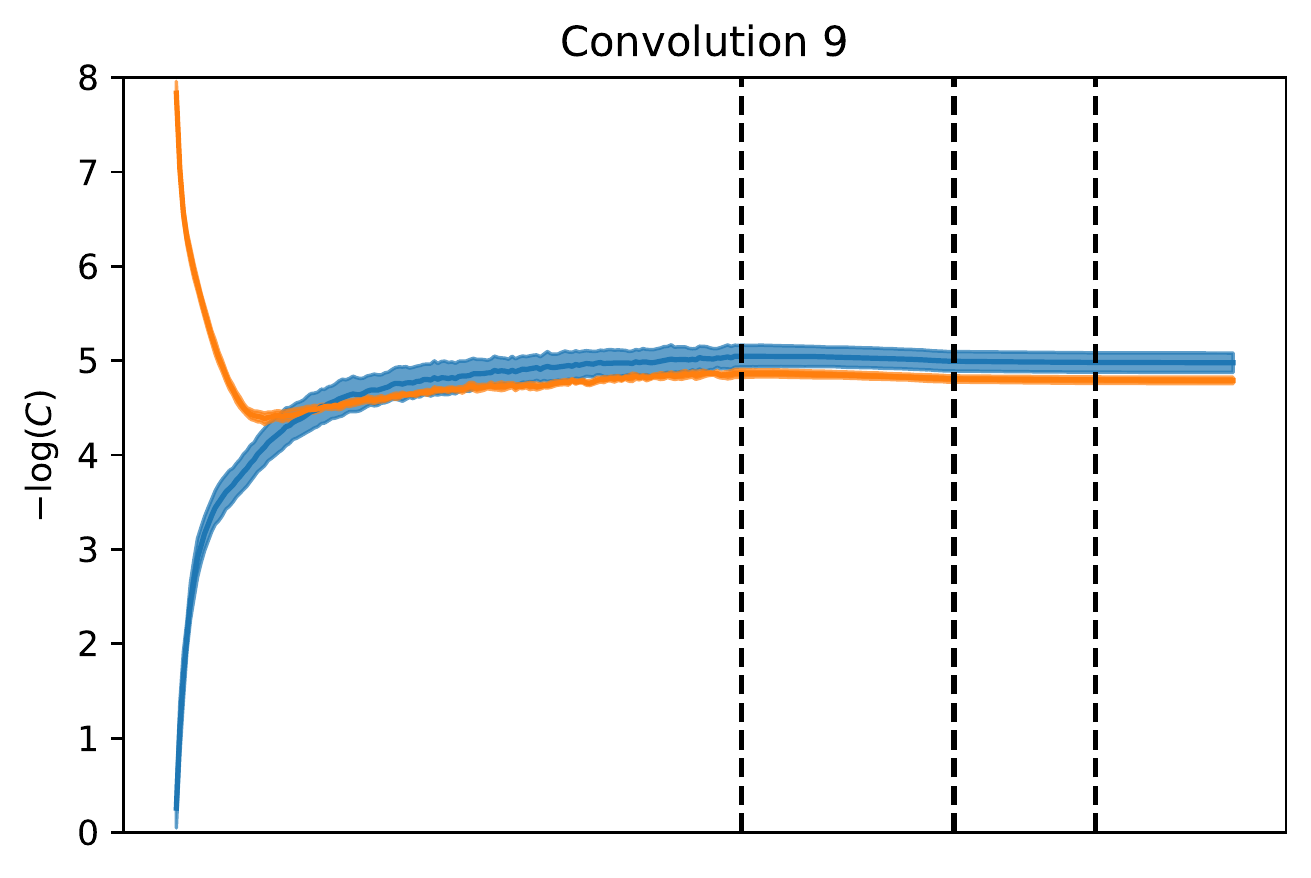}
    \end{subfigure}%
    \vspace{-0.1in}
    \\
    \begin{subfigure}{}
        \centering
    	\includegraphics[width=\figscale, height=1.3in]{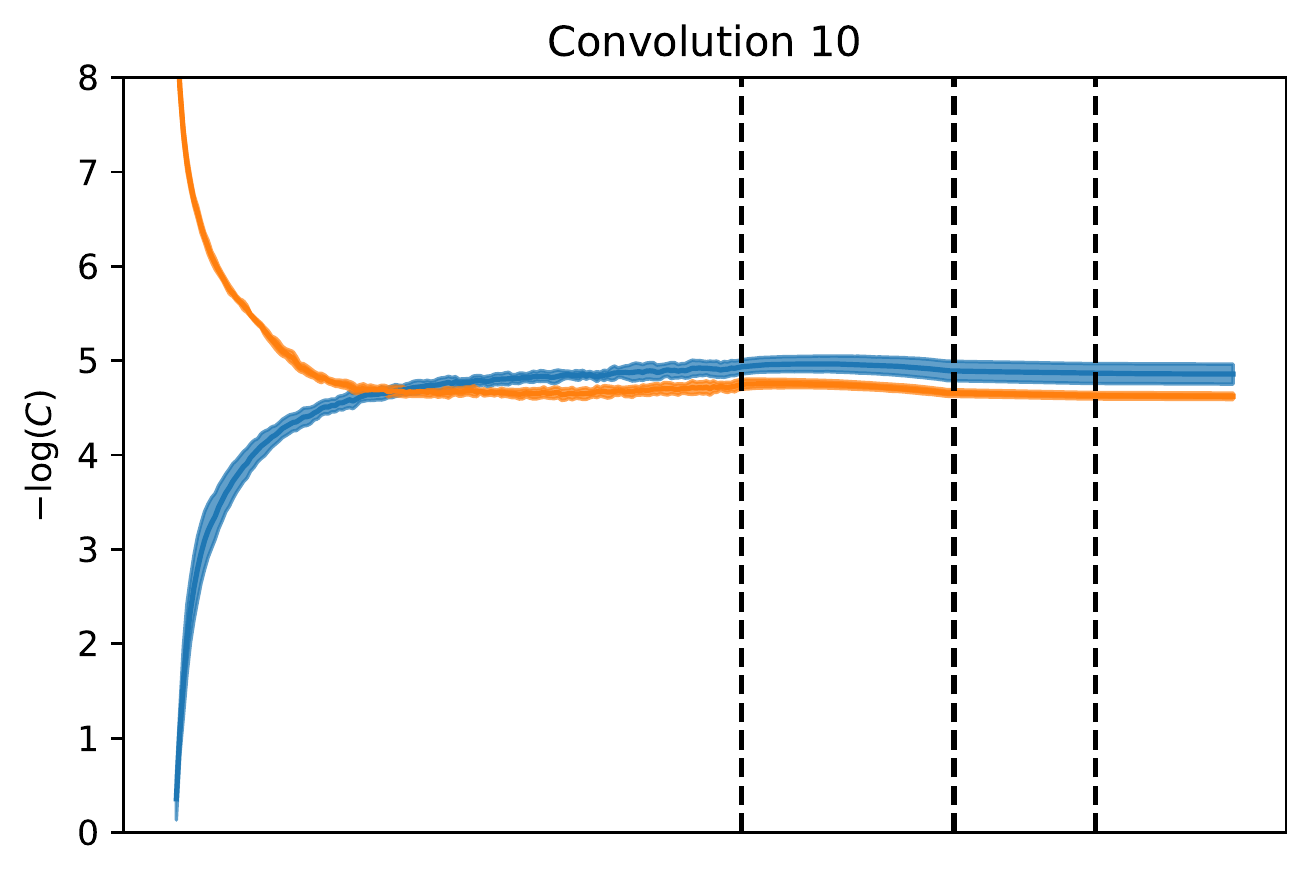}
    \end{subfigure}
    \hspace{\figspacescale}
    \begin{subfigure}{}
        \centering
    	\includegraphics[width=\figscale, height=1.3in]{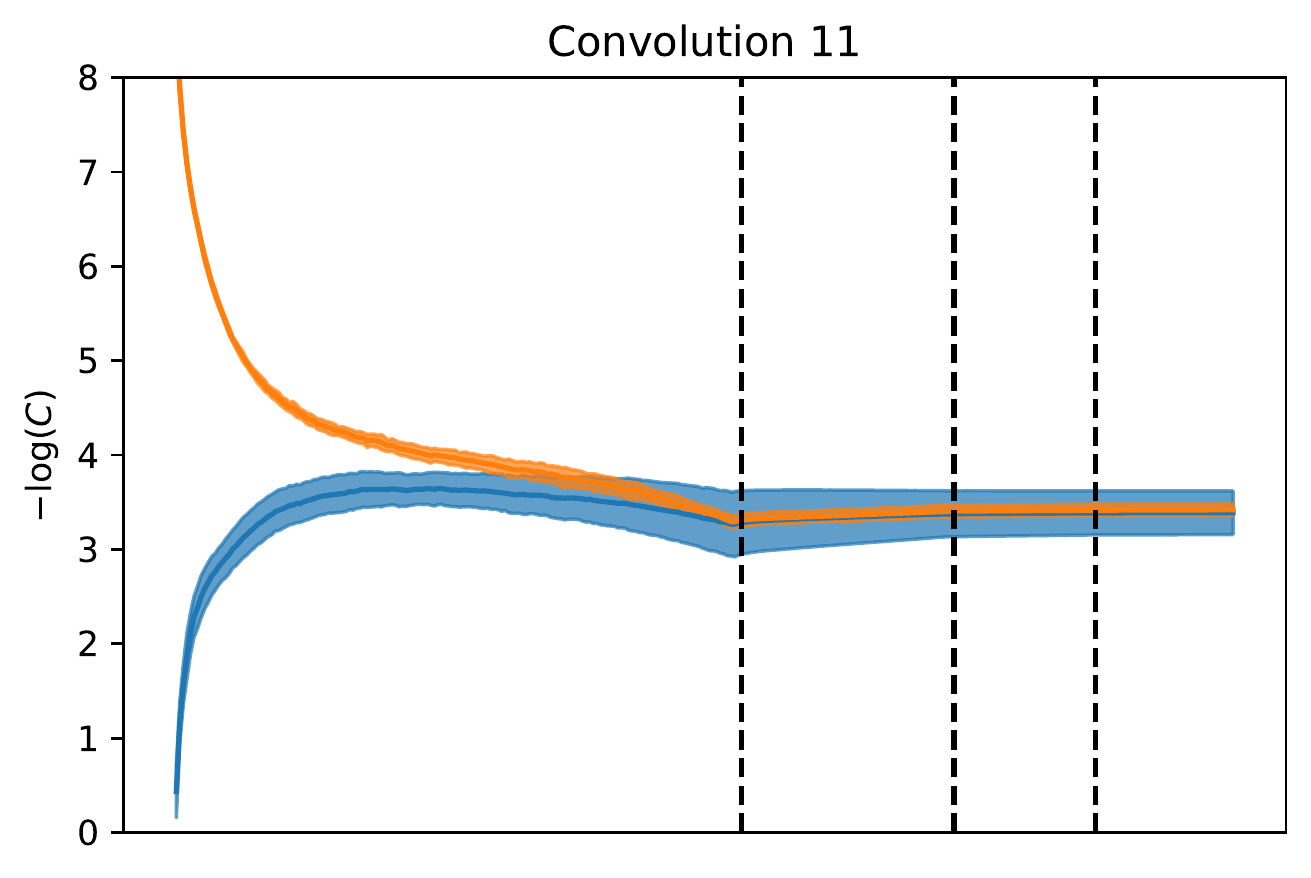}
    \end{subfigure}
    \hspace{\figspacescale}
    \begin{subfigure}{}
        \centering
    	\includegraphics[width=\figscale, height=1.3in]{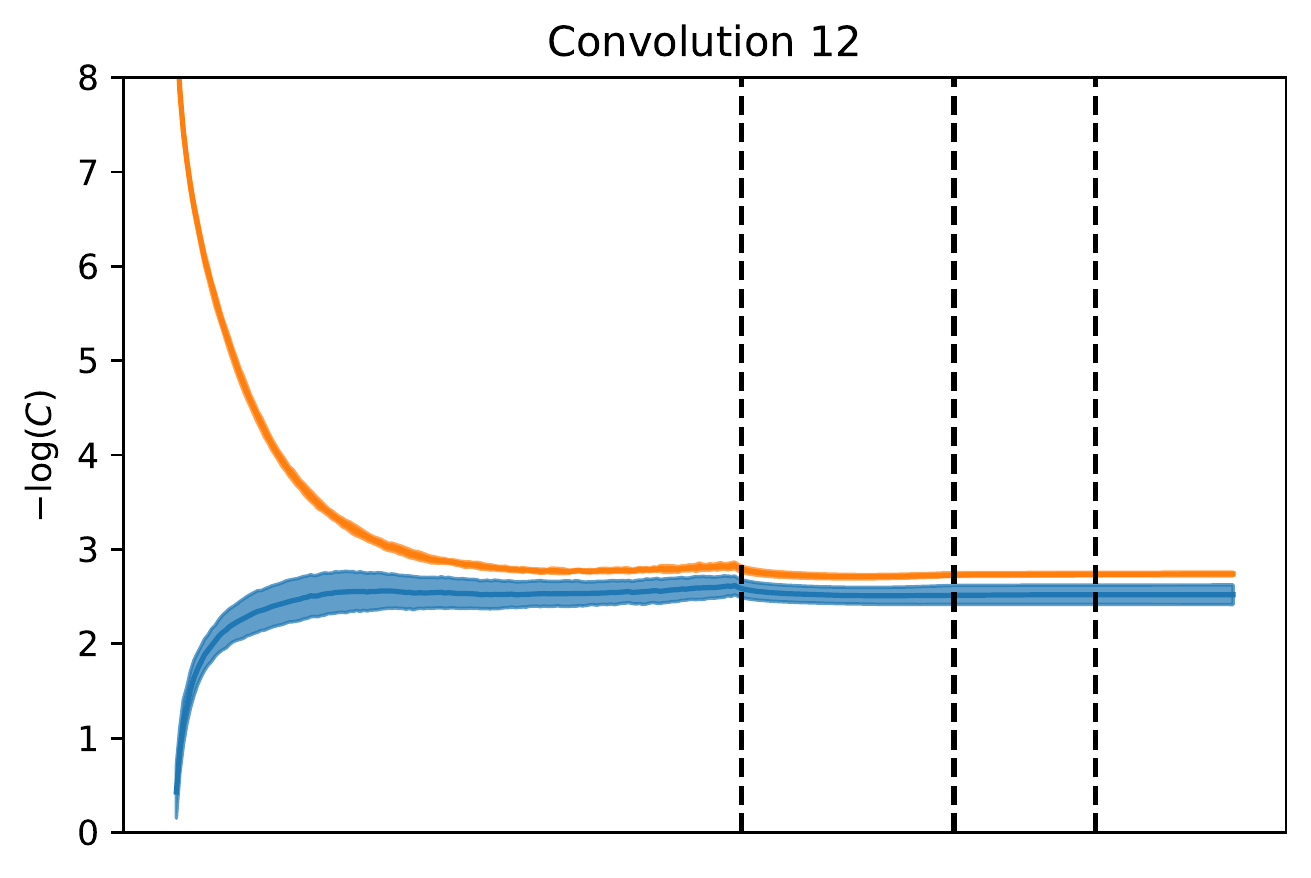}
    \end{subfigure}%
    \vspace{-0.1in}
    \\
    \begin{subfigure}{}
        \centering
    	\includegraphics[width=\figscale, height=1.45in]{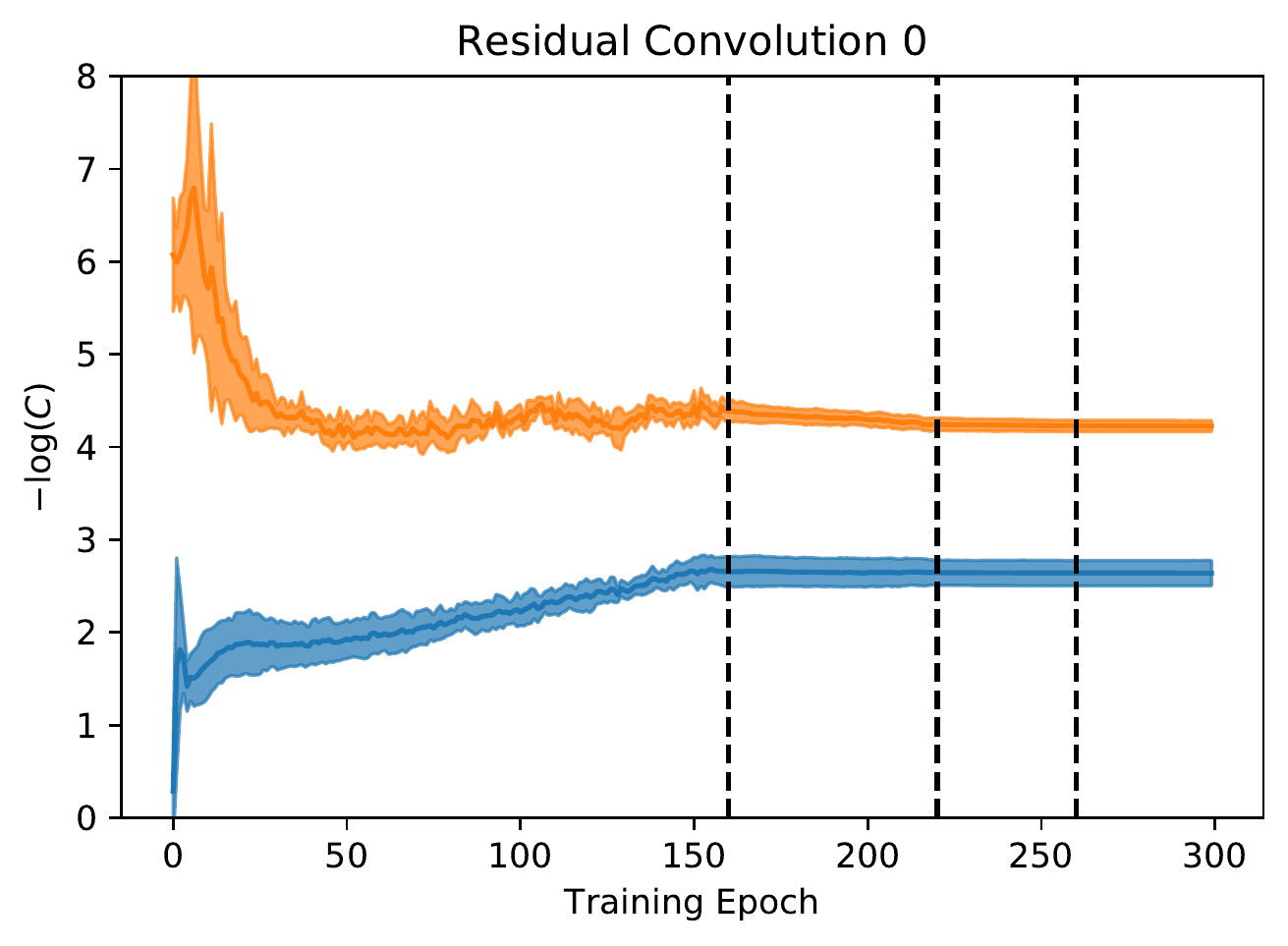}
    \end{subfigure}
    \hspace{\figspacescale}
    \begin{subfigure}{}
        \centering
    	\includegraphics[width=\figscale, height=1.45in]{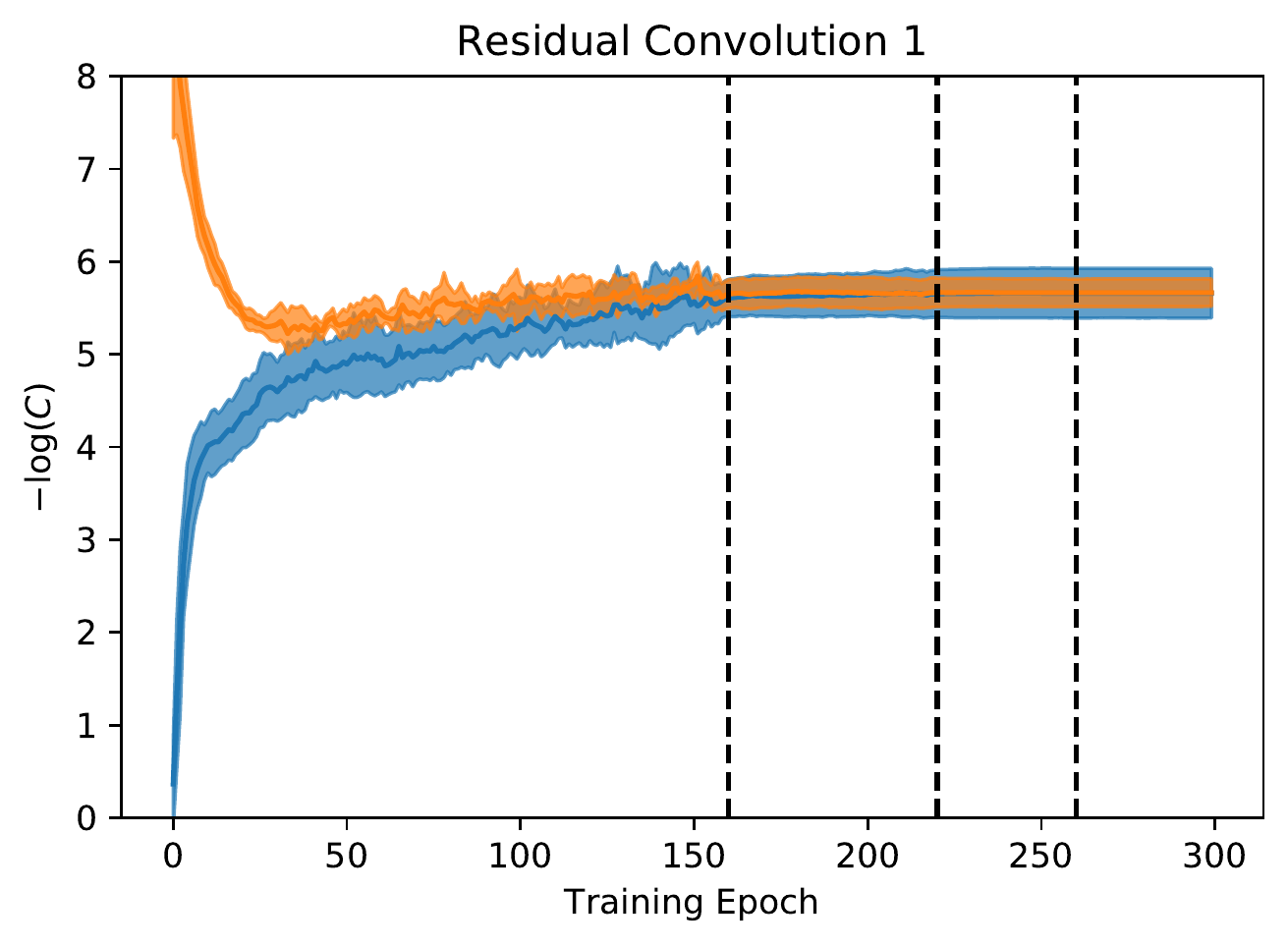}
    \end{subfigure}
    \hspace{\figspacescale}
    \begin{subfigure}{}
        \centering
    	\includegraphics[width=\figscale, height=1.45in]{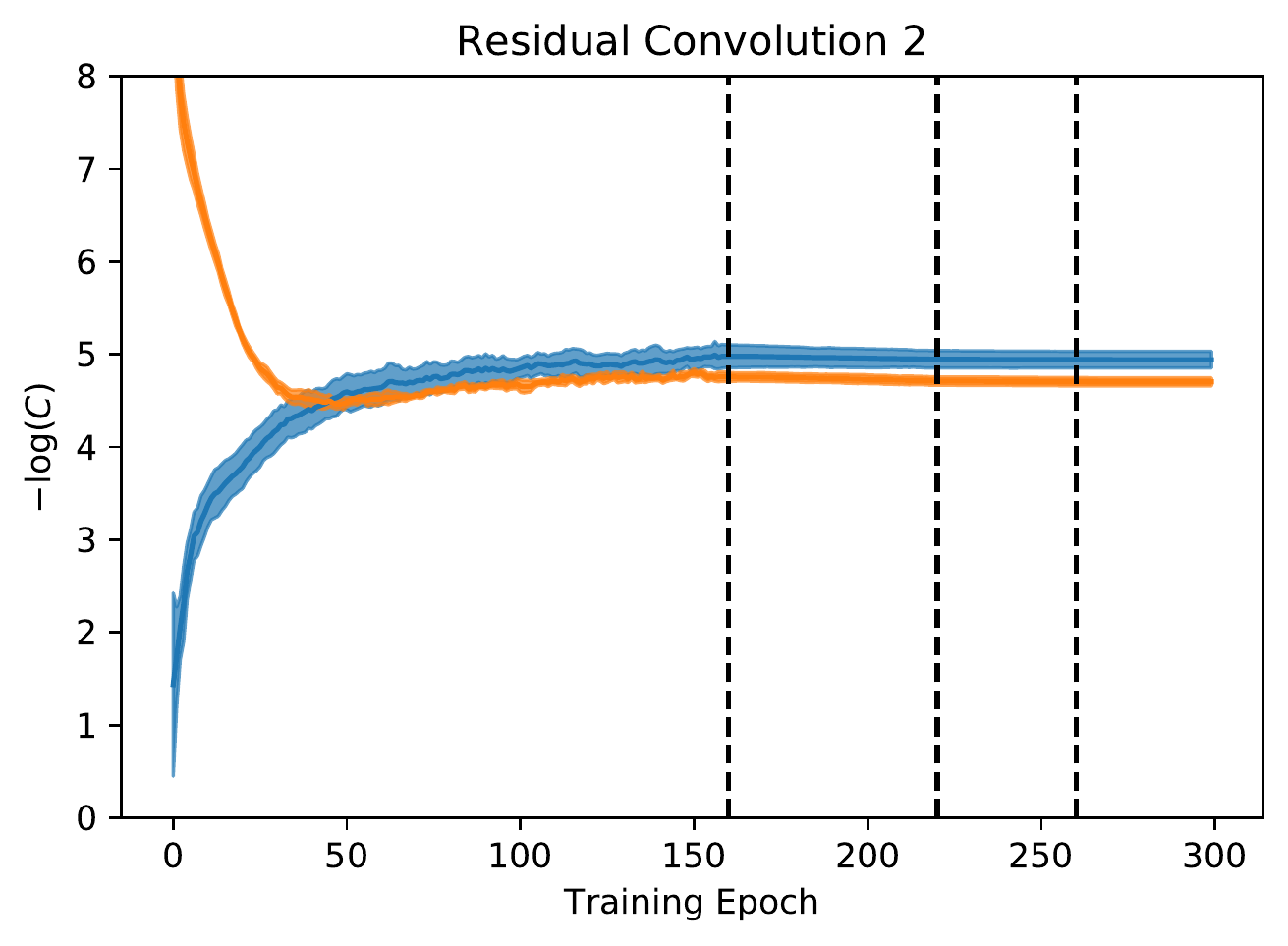}
    \end{subfigure}%
    \vspace{-0.1in}
    \\
    \caption{Natural logarithm of forward correlations for ConstNet during the entire training period. Average and error margin of $6$ seeds per curve. The forward correlations of ConstNet with $0$-init tend to converge to the same values measured for the same layer when ConstNet was initialized with independent, random initialization. Dashed lines mark the epochs in which the learning rate was lowered.}
    \label{fig:full_run_constnet}
\end{figure*}

\renewcommand{\figscale}{3.0in}
\renewcommand{\figspacescale}{-0.0in}

\begin{figure*}[ht!]
    
    \centering
    \begin{subfigure}{}
        \centering
    	\includegraphics[width=\figscale, height=2.0in]{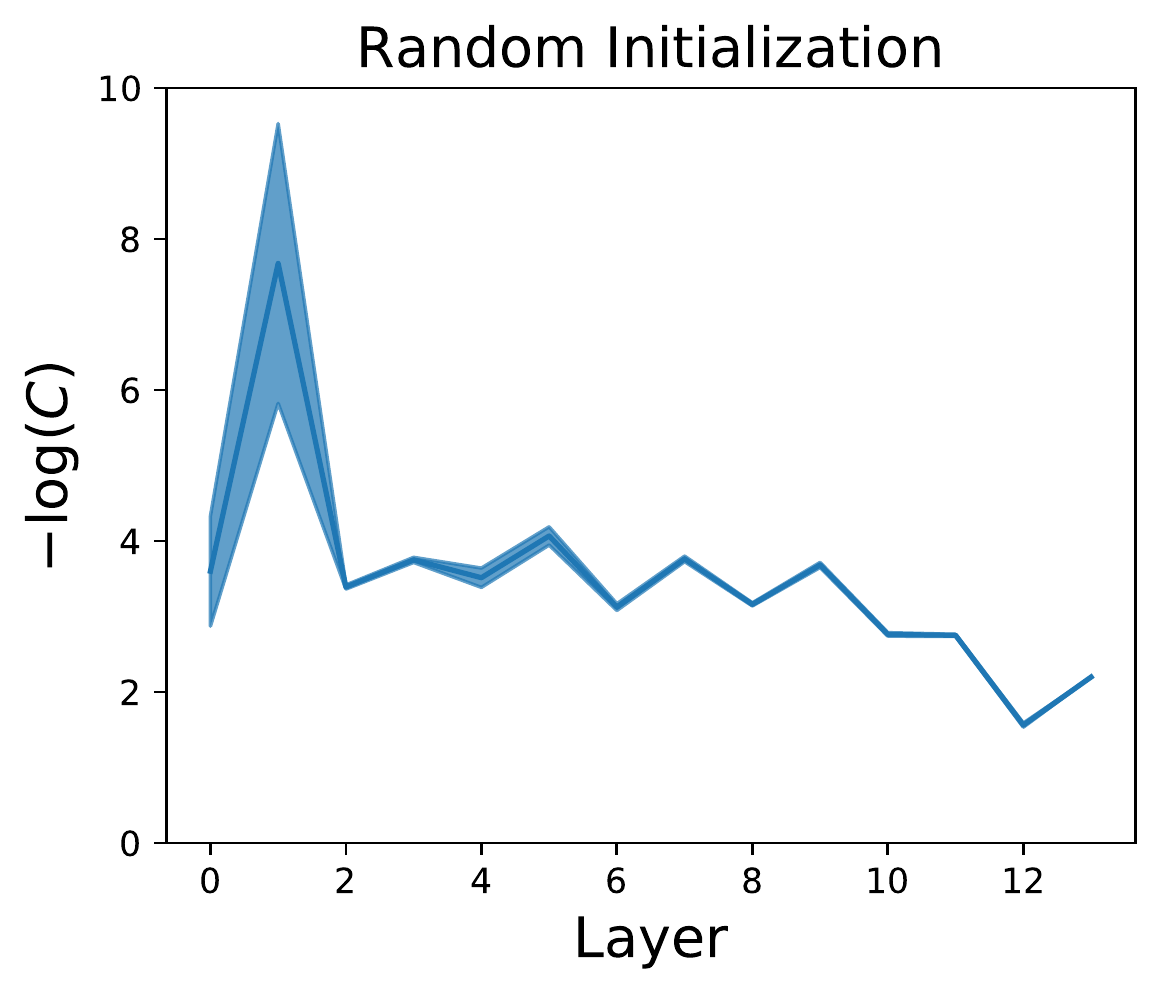}
    \end{subfigure}
    \\
    \centering
    \begin{subfigure}{}
        \centering
    	\includegraphics[width=\figscale, height=2.0in]{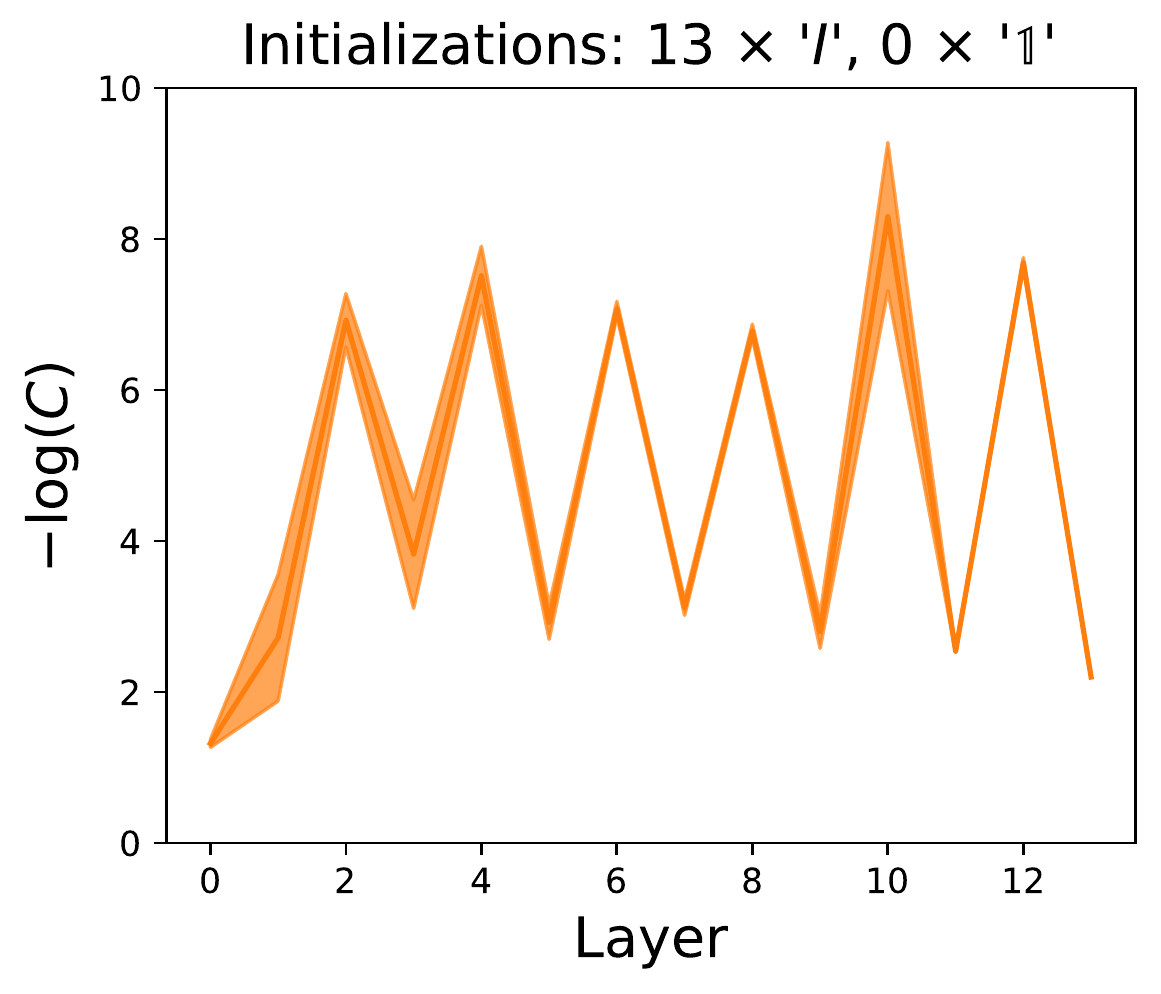}
    \end{subfigure}
    \begin{subfigure}{}
        \centering
    	\includegraphics[width=\figscale, height=2.0in]{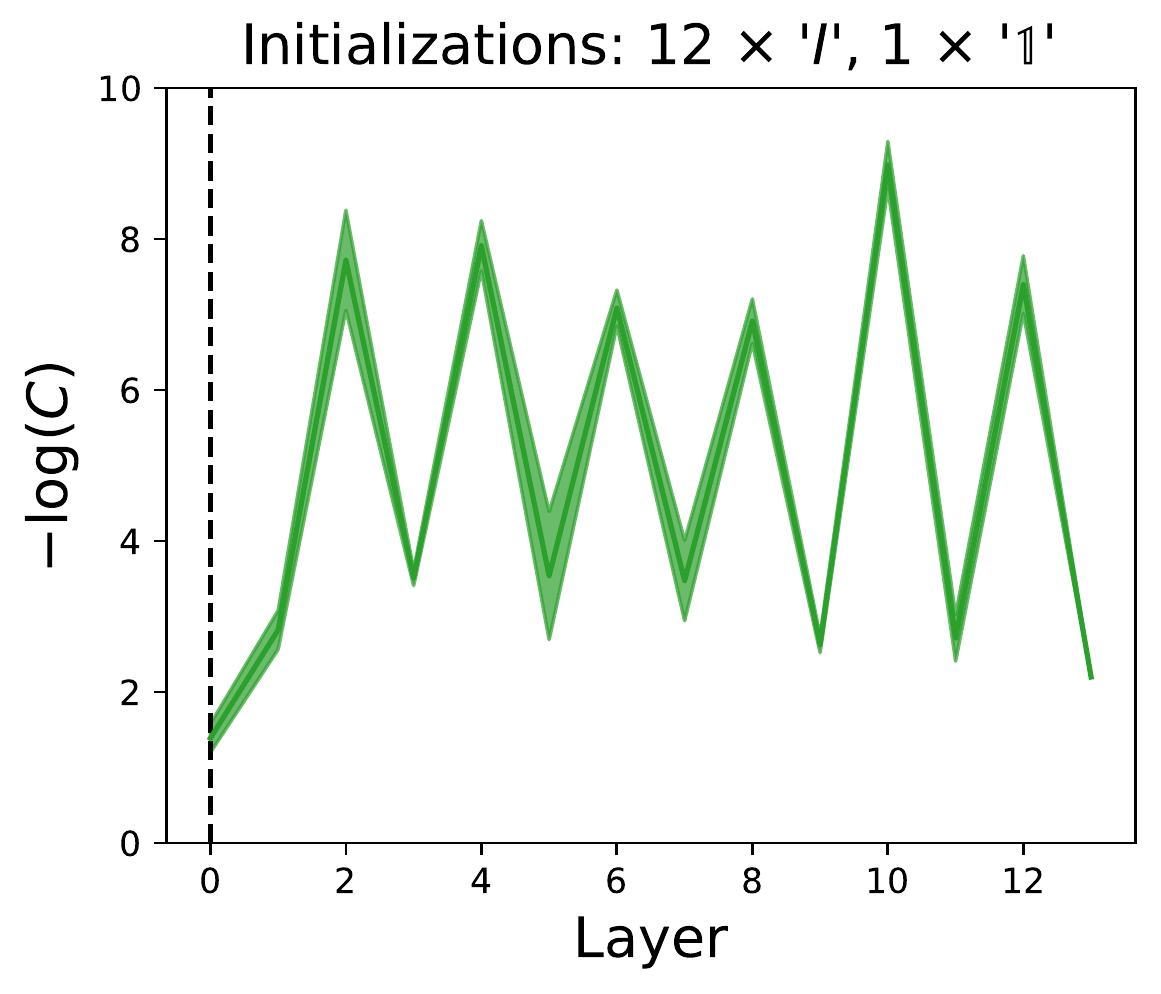}
    \end{subfigure}
    \\
    \centering
    \begin{subfigure}{}
        \centering
    	\includegraphics[width=\figscale, height=2.0in]{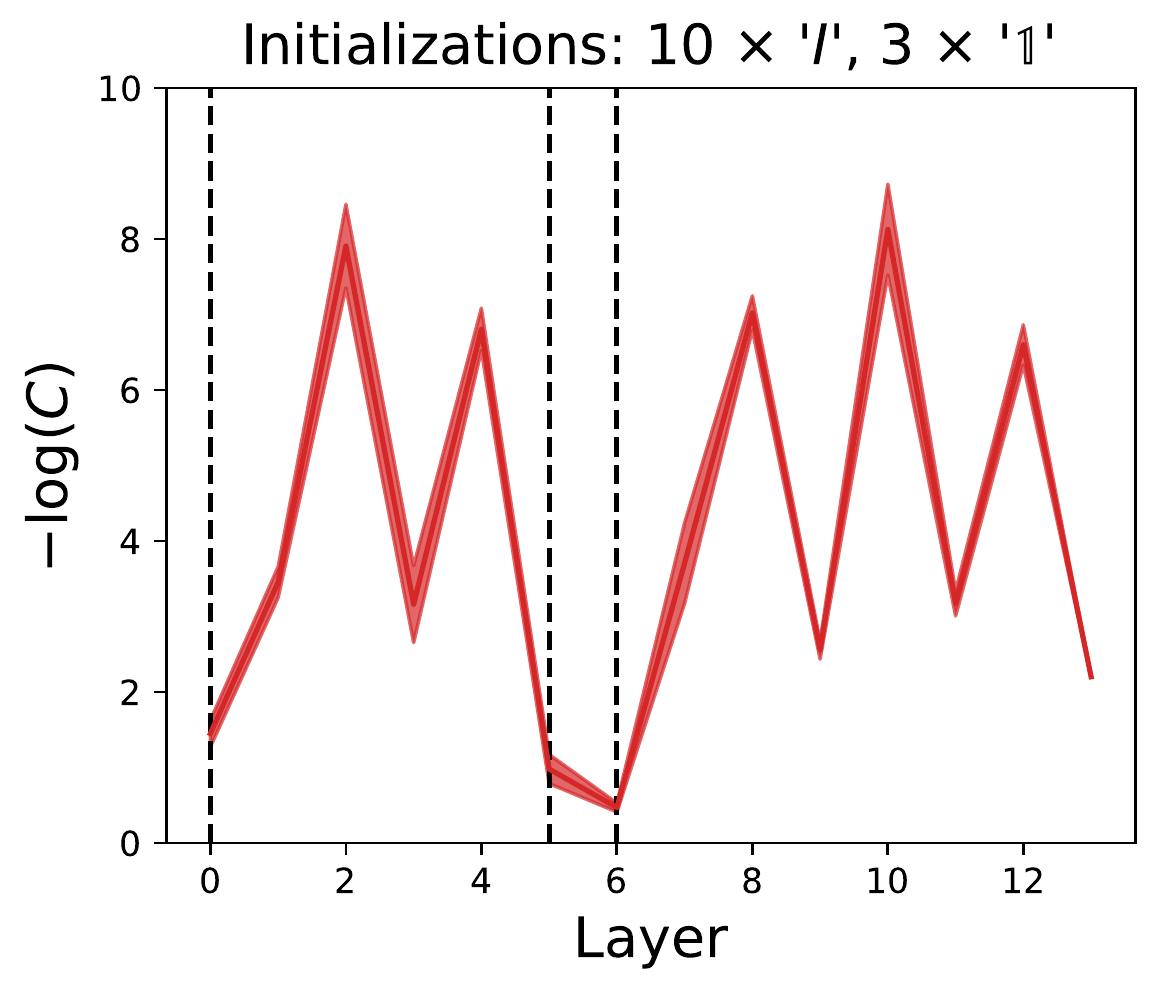}
    \end{subfigure}
    \begin{subfigure}{}
        \centering
    	\includegraphics[width=\figscale, height=2.0in]{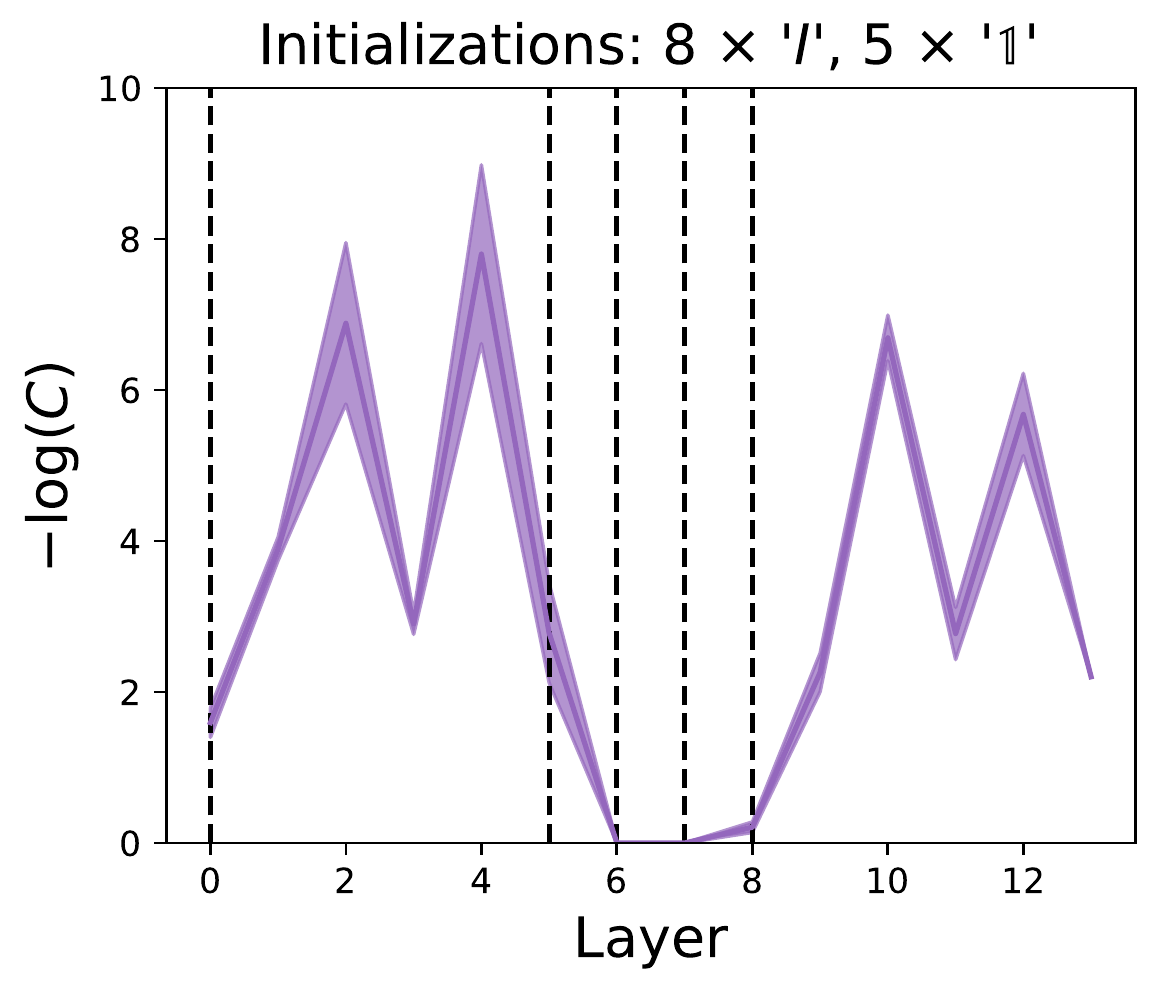}
    \end{subfigure}
    \\
    \centering
    \begin{subfigure}{}
        \centering
    	\includegraphics[width=\figscale, height=2.0in]{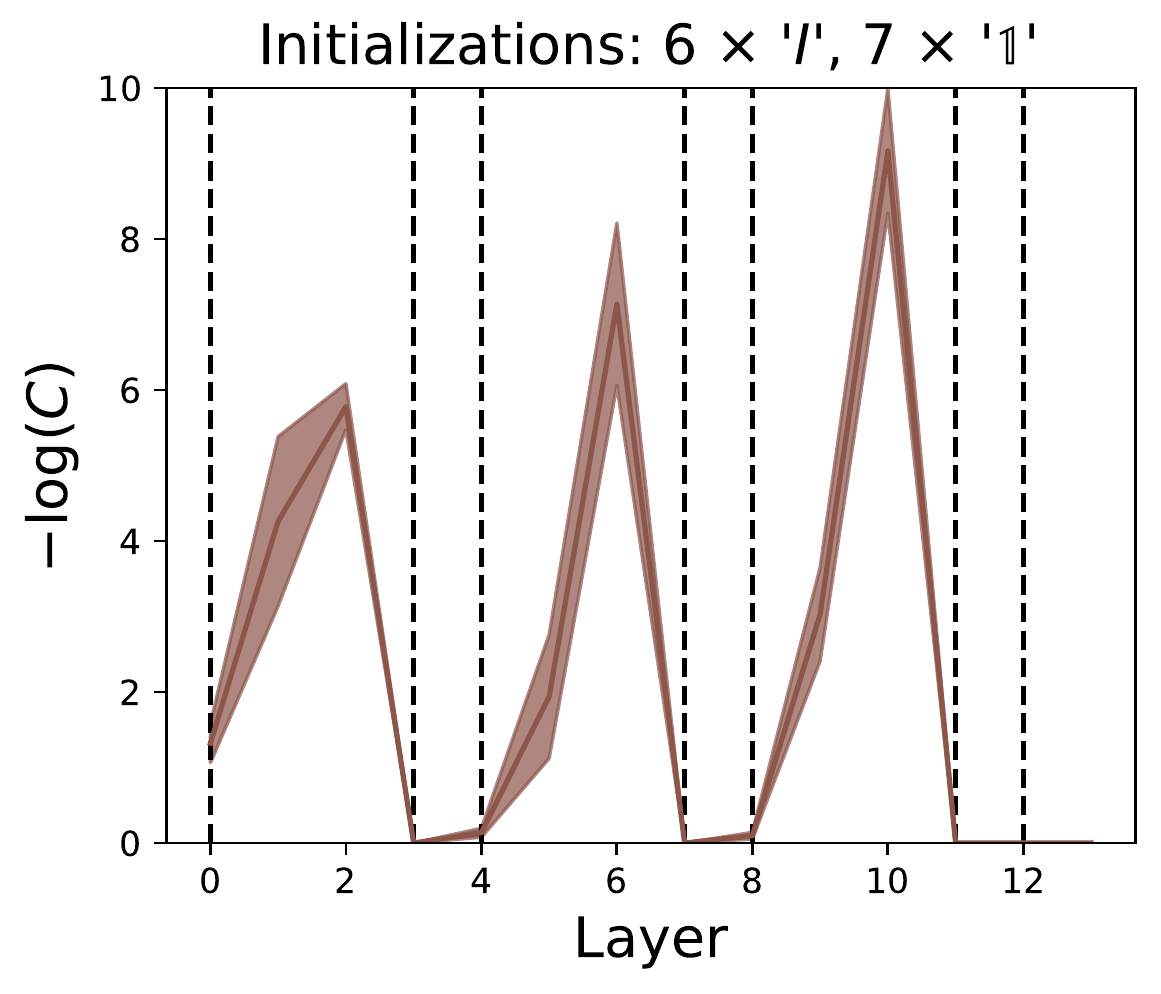}
    \end{subfigure}
    \begin{subfigure}{}
        \centering
    	\includegraphics[width=\figscale, height=2.0in]{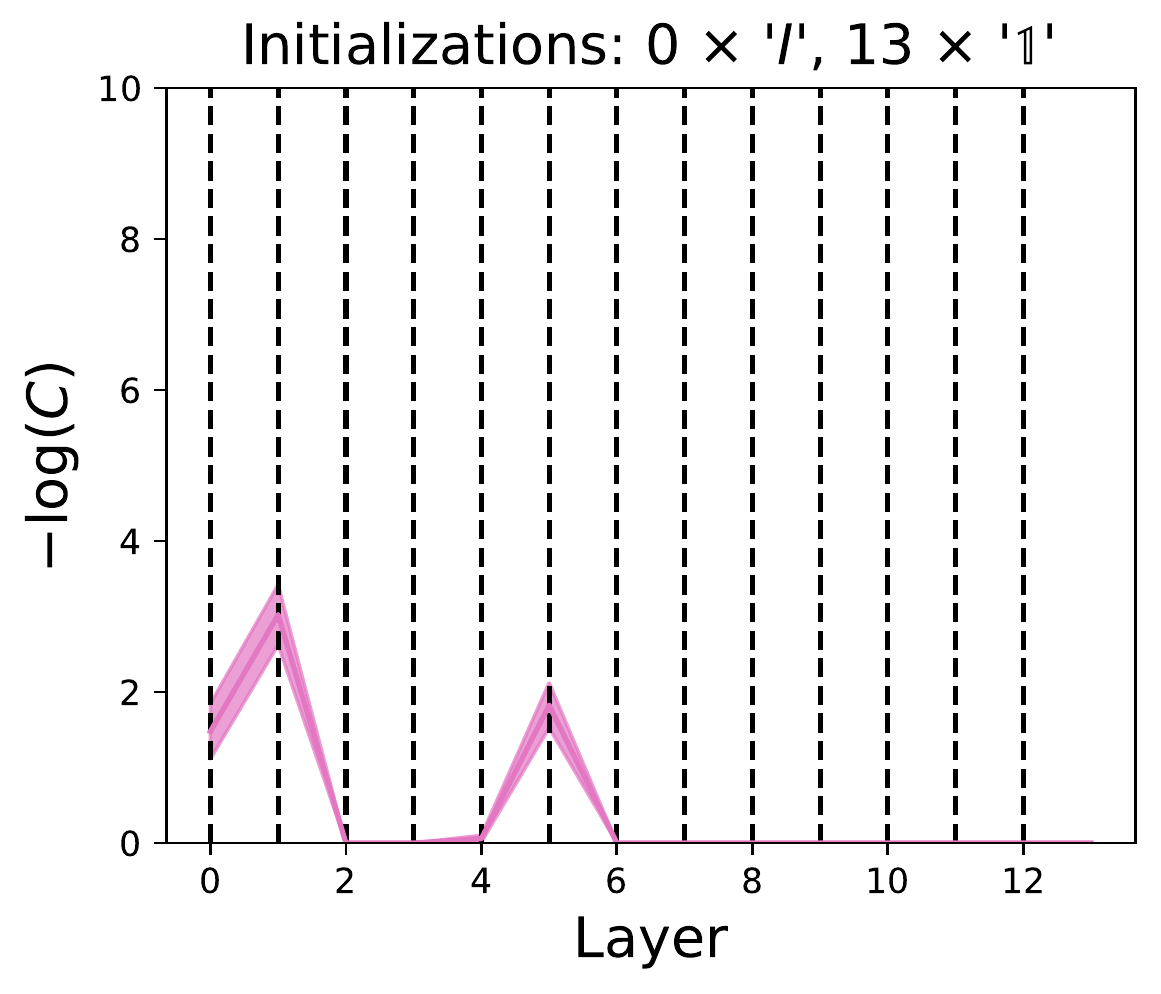}
    \end{subfigure}
    \\
    \caption{Natural logarithm of forward correlations in differently initialized leakyNets, at the end of training. Average and error margin of $4$ seeds per curve. Unlike what we saw in ConstNet, the correlations in leakyNets initialized with mixed '$I/\mathbb{1}$' do not converge to the same values seen in the case of i.i.d. random He initialization (which eventually achieved higher test accuracy). Their inability to efficiently break symmetry can be explained by tracking the propagation of the perturbations from the initial features, as discussed in appendix \ref{app:leaky_pert} and can be seen in figures \ref{fig:pertubation_prop},\ref{fig:pertubation_grad_prop}. Dashed lines mark the layers initialized with '$\mathbb{1}$'-init. Layer number 13 is the fully-connected layer, initialized to $0$ on all instances.} 
    \label{fig:leakyNet_Correlations}
\end{figure*}

\section{Perturbation Propagation} \label{app:leaky_pert}

We illustrate in an idealized setting the effect of the $I$ and '$\mathbb{1}$' initializations on propagation of a symmetry breaking signal. Recall that the tensors for these two initializations are given respectively by $\Delta \otimes I$ or $\Delta \otimes \frac{1}{n}\bm{1}\bm{1}^T$ where $\Delta$ is the delta kernel defined in eq. \ref{eq:delta_kernel}. As shown in appendix \ref{app:leaky_signal_prop}, both of these choices of initialization are indistinguishable with respect to standard signal propagation as it applies to the features at initialization when these are symmetric. 

Consider a feature tensor $\alpha\in\mathbb{R}^{S\times n}$ and a zero mean additive perturbation $\varepsilon\in\mathbb{R}^{S\times n}$ that breaks the symmetry between features. The source of such a perturbation can be non-deterministic GPU operations during parameter updates which will break symmetry after the first iteration of gradient descent. Applying a convolution gives
\[
\left[\Delta\otimes I\widehat{*}\left(\alpha+\varepsilon\right)\right]_{\gamma i}=\left[\alpha+\varepsilon\right]_{\gamma i}
\]
\[
\left[\Delta\otimes\frac{1}{n}\bm{1}\bm{1}^{T}\widehat{*}\left(\alpha+\varepsilon\right)\right]_{\gamma i}=\left[\alpha+\underset{j}{\sum}\frac{1}{n}\varepsilon_{\gamma j}\right]_{\gamma i}.
\]
We see that in the first instance the perturbation is propagated perfectly. In the second instance, if the perturbation components are independent, its norm will be reduced by about a factor of $\frac{1}{\sqrt{n}}$.

Propagation of the perturbation $\varepsilon$ facilitates the breaking of symmetry in subsequent layers, and hence its attenuation hampers symmetry breaking. Thus even though from the perspective of propagation of the symmetric features the $I$ and '$\mathbb{1}$' initializations are equivalent, the $I$ initialization facilitates symmetry breaking signal propagation.
Empirical evidence of this phenomenon is shown in figures \ref{fig:pertubation_prop} and \ref{fig:pertubation_grad_prop}. We measure the size of the symmetry breaking perturbation, given by the norm of the forward and backward features projected onto the complement of the all ones direction, divided by the feature norm itself for normalization. This is repeated for networks initialized with different combinations of the $I$ and '$\mathbb{1}$' initializations. Generally, it can be seen that the relative norm of the symmetry breaking component increases during training. We also find that the relative size of the perturbation decreases sharply when passing through a '$\mathbb{1}$' initialized layer, yet is relatively unaffected by an $I$ layer.

By measuring feature correlations as a function of layer, one can see the detrimental effect of the '$\mathbb{1}$' initialization on symmetry breaking even after training. These results are presented in figures \ref{fig:leakyNet_Correlations}. Networks with reduced symmetry breaking also achieve lower test accuracy as shown in table \ref{tbl:leakynet_results}. 

One can also conjecture based on these results that the combination of a zero initialization and skip connection as in ConstNet may be unique in the sense that the trainable parameters are initialized in a completely symmetric manner yet symmetry is easily broken. The '$\mathbb{1}$' initialization is symmetric but hampers symmetry breaking, while the $I$ initialization is not symmetric (and it only preserves symmetry from previous layers since in LeakyNet the initial layer of the network induces a symmetry between features). 

\section{Leaky ReLU Signal Propagation} \label{app:leaky_signal_prop}

For simplicity we consider an $L$ layer network of constant width at every layer and no batch normalization. The more general case with widening layers included can be handled in a similar manner to appendix \ref{app:constnet_sigprop}. Since batch normalization layers are applied after every block (which implements an identity map), their effect will simply be that after the first layer the normalized features are propagated.

We denote by $\alpha^{(0)}(x)\in\mathbb{R}^{S\times n},\alpha_{\gamma i}^{(0)}(x)=\alpha_{\gamma j}^{(0)}(x)$ the input to the first LeakyNet block. As in the case of ConstNet, the input features are completely symmetric. Recall that the weight tensors of the network take the form $\Delta \otimes I$ or $\Delta \otimes \frac{1}{n}\bm{1}\bm{1}^T$ where $\Delta$ is the delta kernel defined in eq. \ref{eq:delta_kernel}, and the second factor acts on the channel indices. Due to the symmetry of the incoming features,   
\[
\Delta\otimes I\widehat{*}\alpha^{(0)}(x)=\Delta\otimes\frac{1}{n}\bm{1}\bm{1}^{T}\widehat{*}\alpha^{(0)}(x)=\alpha^{(0)}(x).
\]
Defining by $\alpha^{(k)}(x)$ the features after $k$ layers (where each LeakyNet block is composed of two layers), and recalling that $\sigma^{\rho}(-\frac1{\rho} \sigma^{\rho}(x))=-x$, we find for every integer $k \leq L/4$
\begin{align*}
 \alpha^{(4k)}(x)&=\alpha^{(0)}(x)\\
 \alpha^{(4k+1)}(x)&=-\frac{1}{\rho}\sigma^{\rho}(\alpha^{(0)}(x)) \\
  \alpha^{(4k+2)}(x)&=-\alpha^{(0)}(x) \\
   \alpha^{(4k+3)}(x)&=-\frac{1}{\rho}\sigma^{\rho}(-\alpha^{(0)}(x)). \\
\end{align*}
Since we can express all features as simple functions of the initial features, norms of features and angles between them are trivially preserved (up to the application of single nonlinearity, which does not induce any dependence on the depth of the network).

Similarly, the backwards features cannot incur a dependence on depth, and their norms and angles between them are preserved up to the effect of a single non-linearity.

\renewcommand{\figscale}{3.0in}
\renewcommand{\figspacescale}{-0.0in}

\begin{figure*}[ht]
    \centering
    \textbf{Perturbation norm / Total norm}\par\medskip
    \begin{subfigure}{}
        \centering
    	\includegraphics[width=\figscale, height=2.0in]{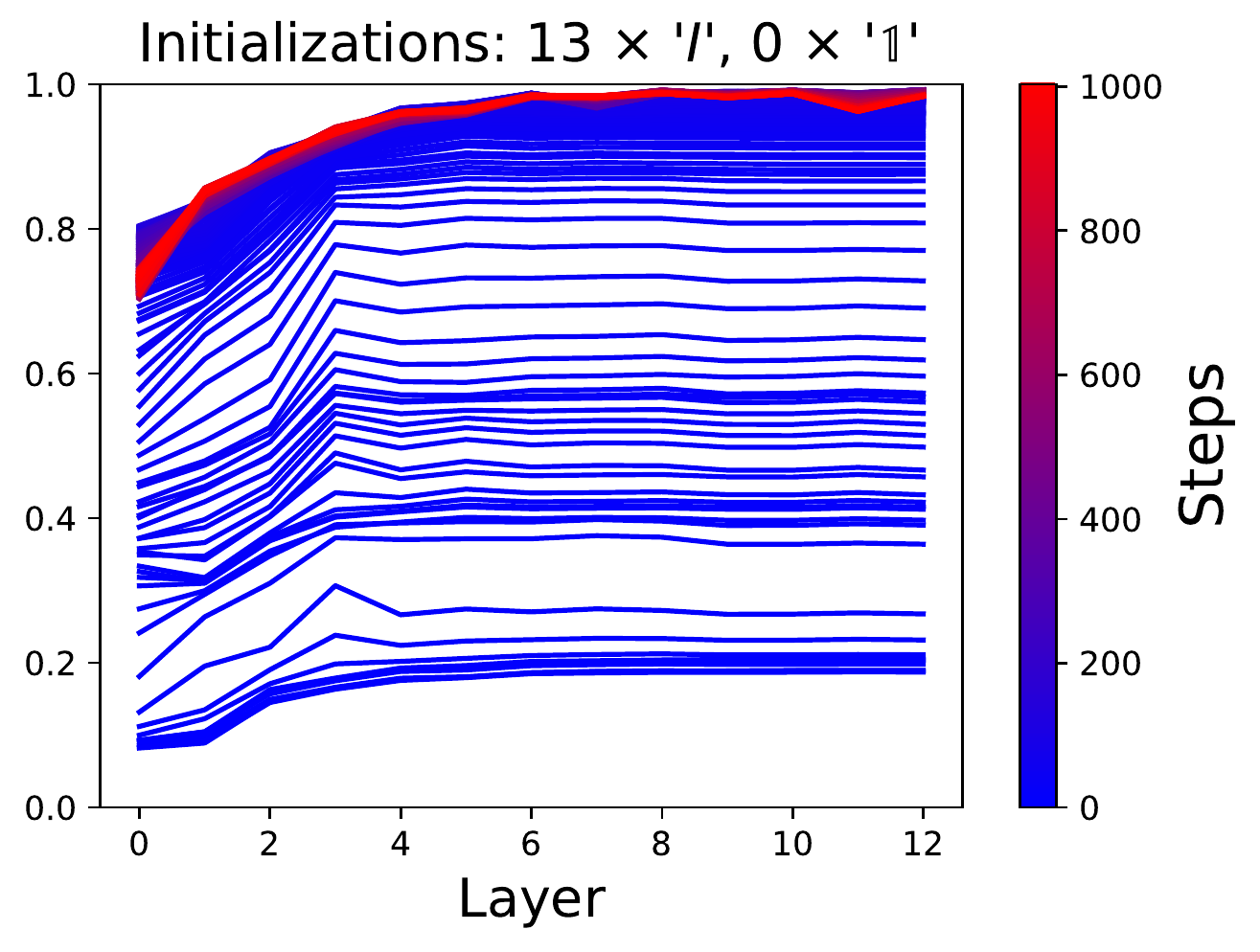}
    \end{subfigure}
    \begin{subfigure}{}
        \centering
    	\includegraphics[width=\figscale, height=2.0in]{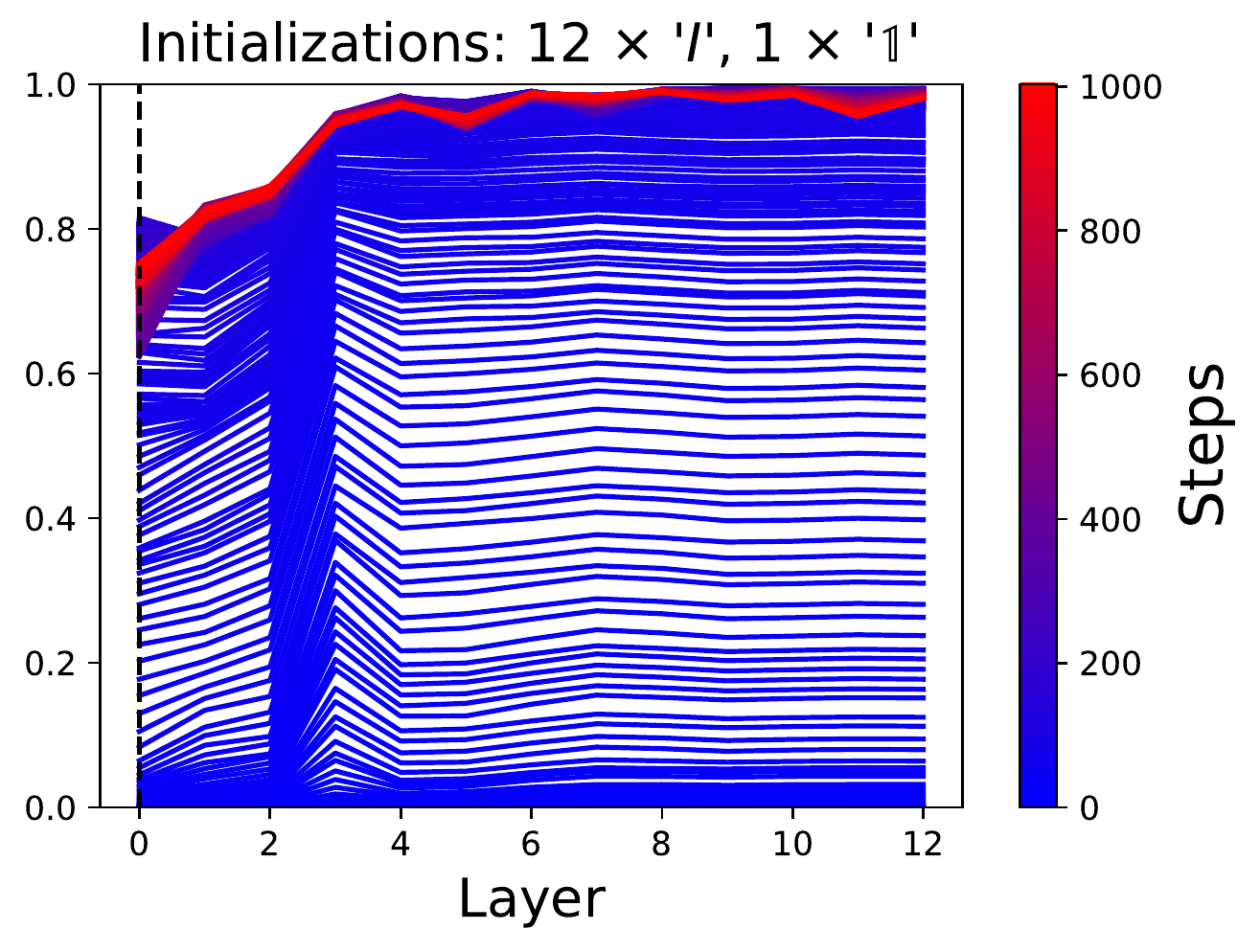}
    \end{subfigure}
    \\
    \centering
    \begin{subfigure}{}
        \centering
    	\includegraphics[width=\figscale, height=2.0in]{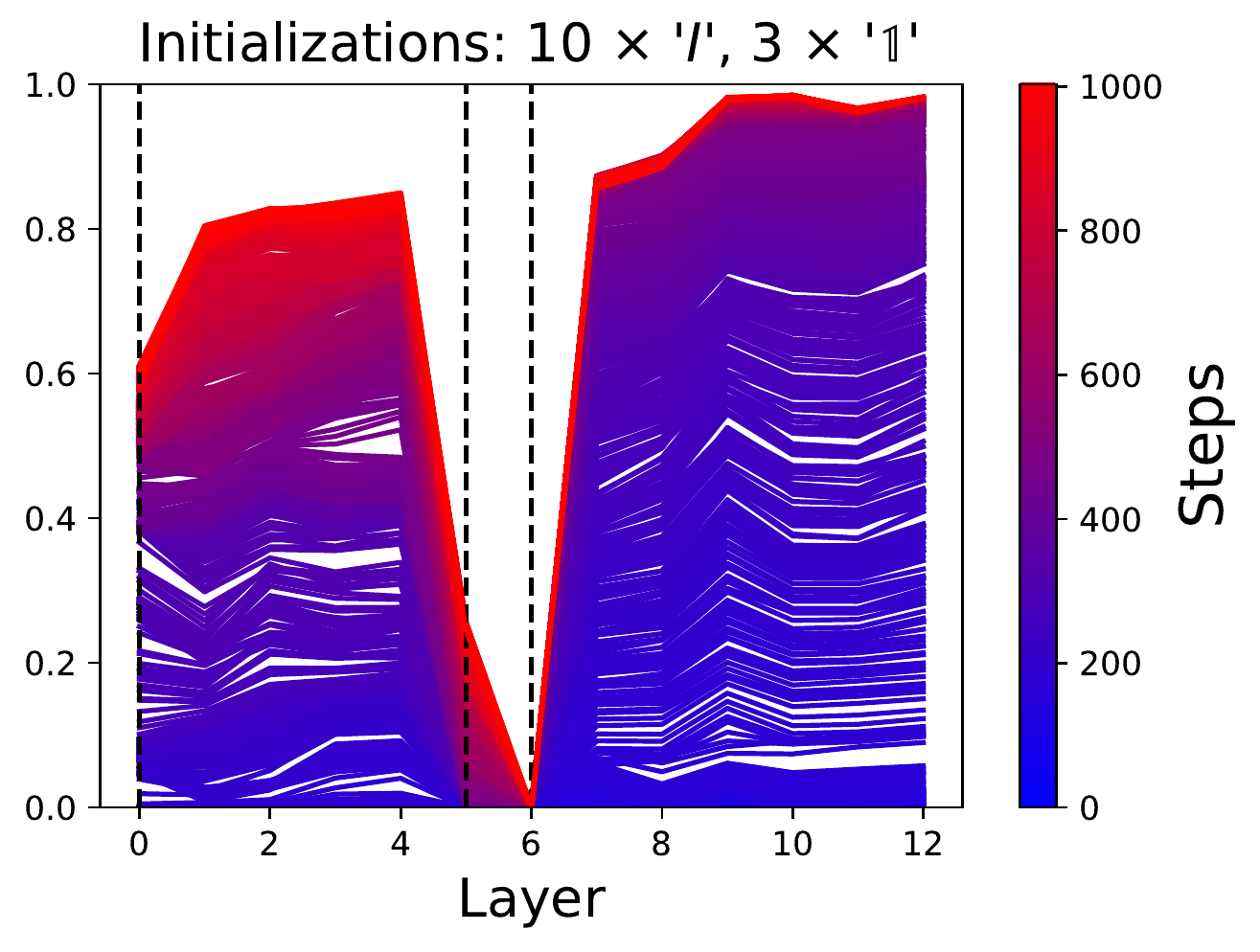}
    \end{subfigure}
    \begin{subfigure}{}
        \centering
    	\includegraphics[width=\figscale, height=2.0in]{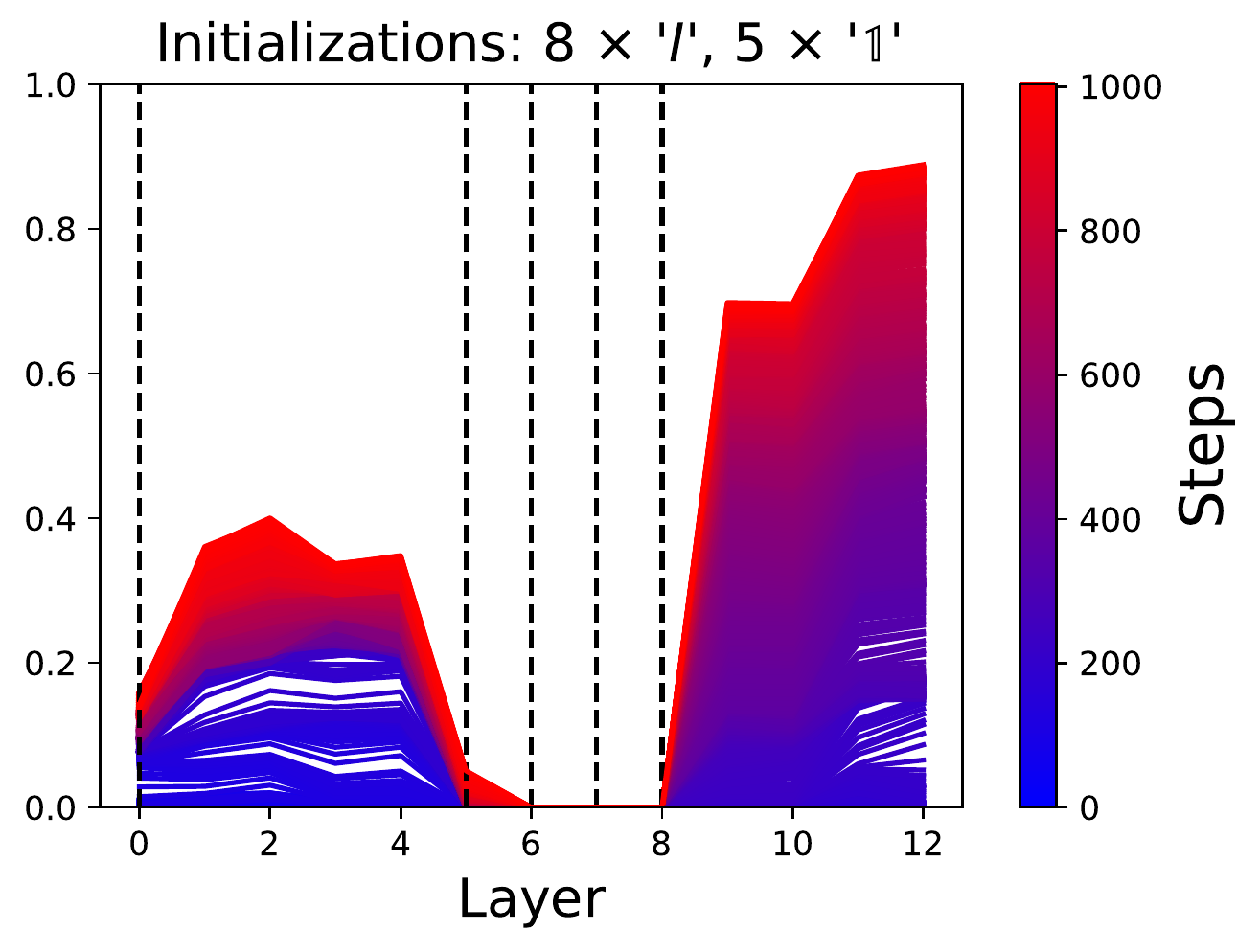}
    \end{subfigure}
    \\
    \centering
    \begin{subfigure}{}
        \centering
    	\includegraphics[width=\figscale, height=2.0in]{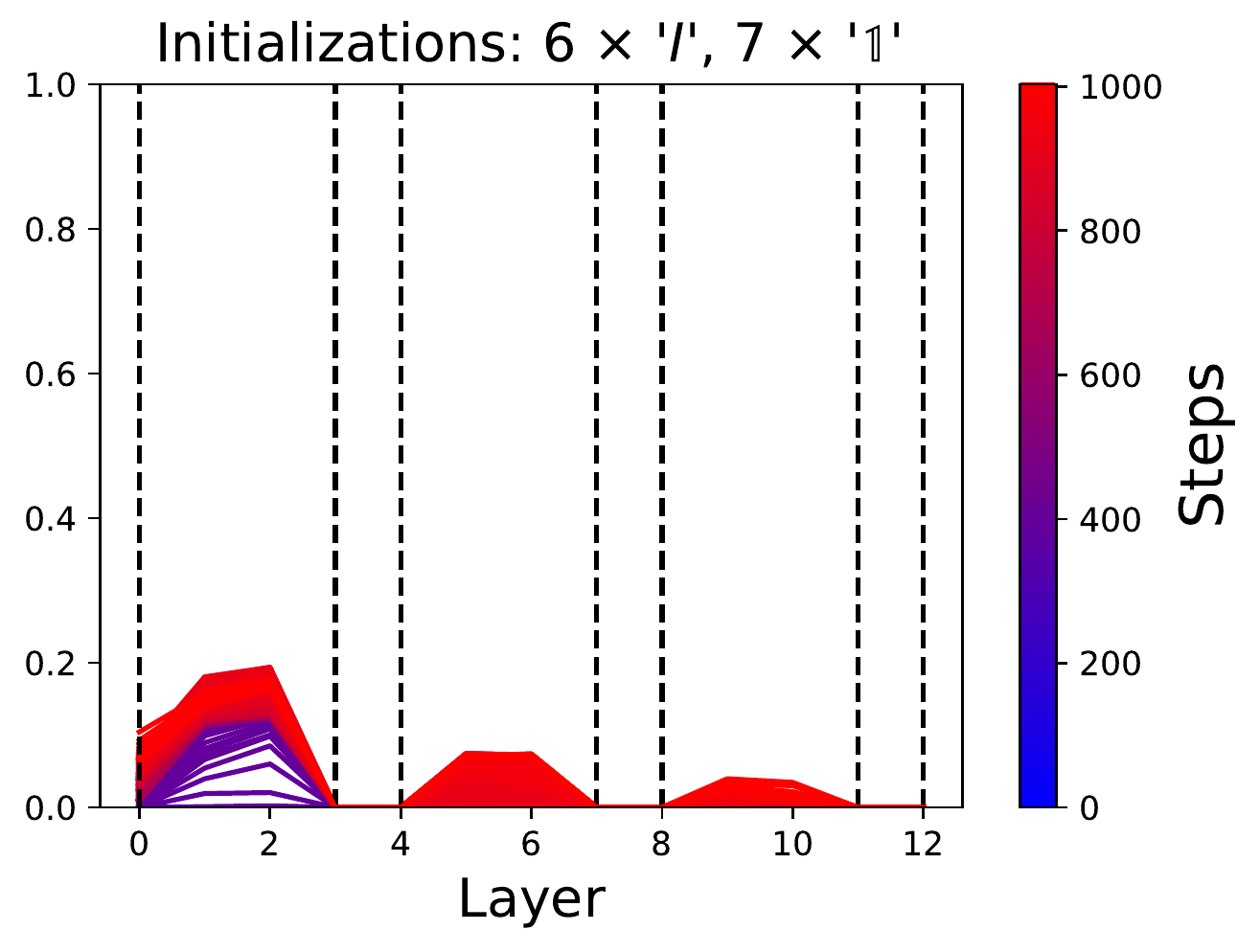}
    \end{subfigure}
    \begin{subfigure}{}
        \centering
    	\includegraphics[width=\figscale, height=2.0in]{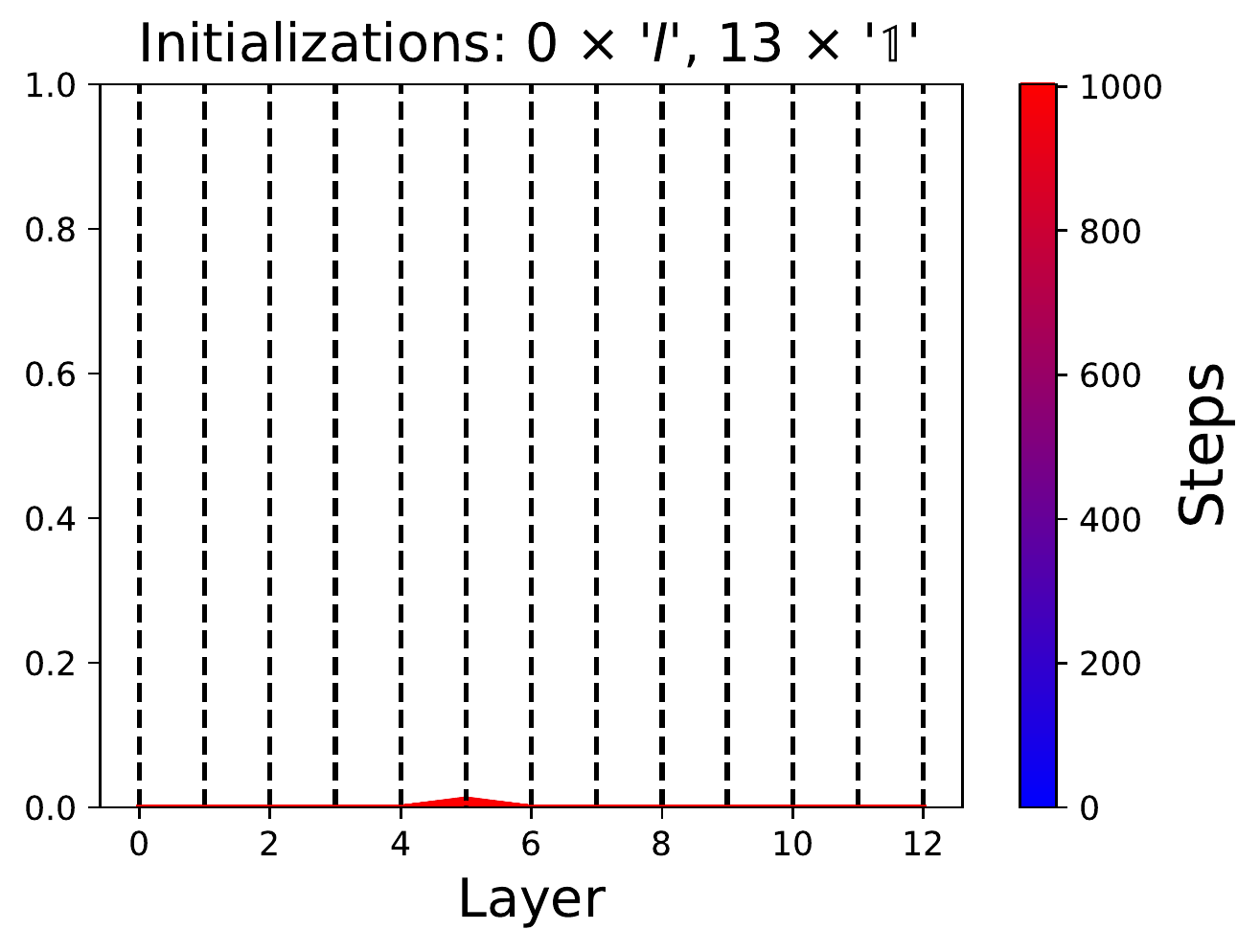}
    \end{subfigure}
    \\
    \caption{Symmetry breaking activation perturbation decays as it propagates forward through the network, if the $\mathbb{1}$ initialization was extensively used. The panels show the ratio between the perturbation norm $\left\Vert \Delta\otimes\left(I-\frac{1}{n}\bm{1}\bm{1}^{T}\right)\widehat{*}\alpha^{(\ell)}(x)\right\Vert _{F}$  to the total norm $\left\Vert \alpha^{(\ell)}(x)\right\Vert _{F}$, for every layer $\ell$ and different initializations of leakyNet. Each panel also shows the evolution of the ratio at the first $1000$ steps of training, as indicated by the color-map. Dashed lines mark the layers that were initialized with '$\mathbb{1}$'-init (while the other layers were initialized with $I$).} 
    \label{fig:pertubation_prop}
\end{figure*}

\begin{figure*}[ht]
    \centering
    \textbf{Gradient Perturbation norm /  Gradient Total norm}\par\medskip
    \begin{subfigure}{}
        \centering
    	\includegraphics[width=\figscale, height=2.0in]{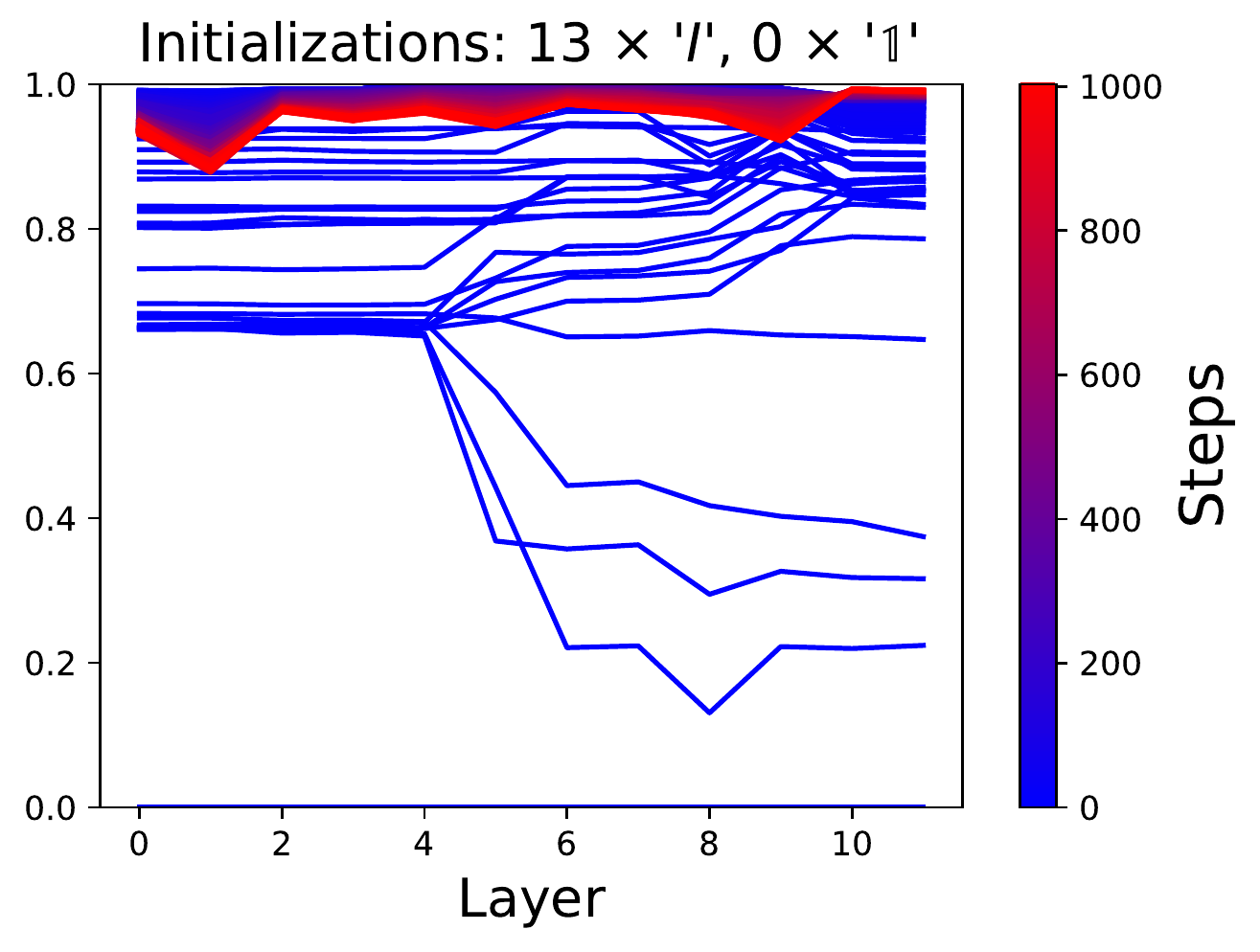}
    \end{subfigure}
    \begin{subfigure}{}
        \centering
    	\includegraphics[width=\figscale, height=2.0in]{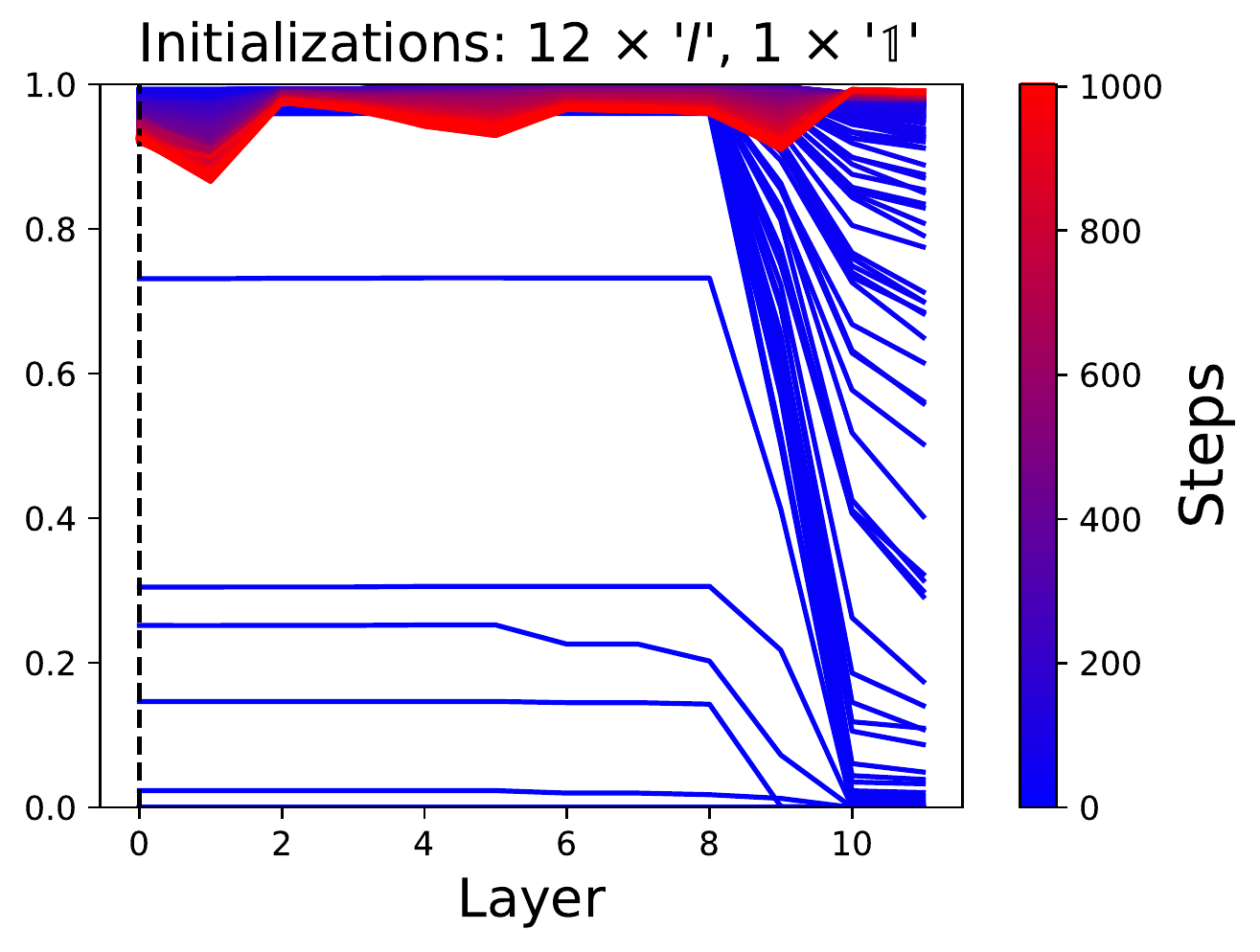}
    \end{subfigure}
    \\
    \centering
    \begin{subfigure}{}
        \centering
    	\includegraphics[width=\figscale, height=2.0in]{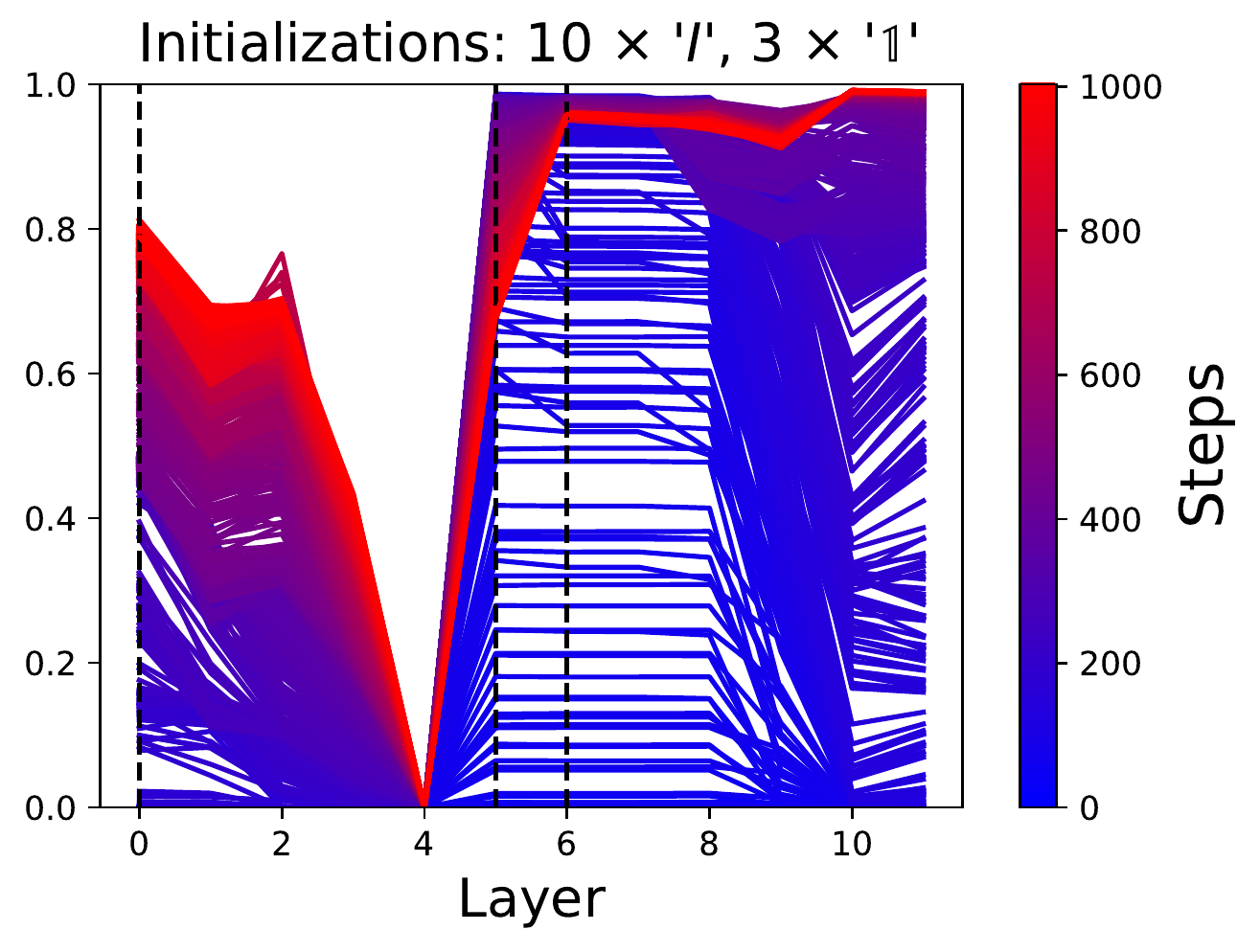}
    \end{subfigure}
    \begin{subfigure}{}
        \centering
    	\includegraphics[width=\figscale, height=2.0in]{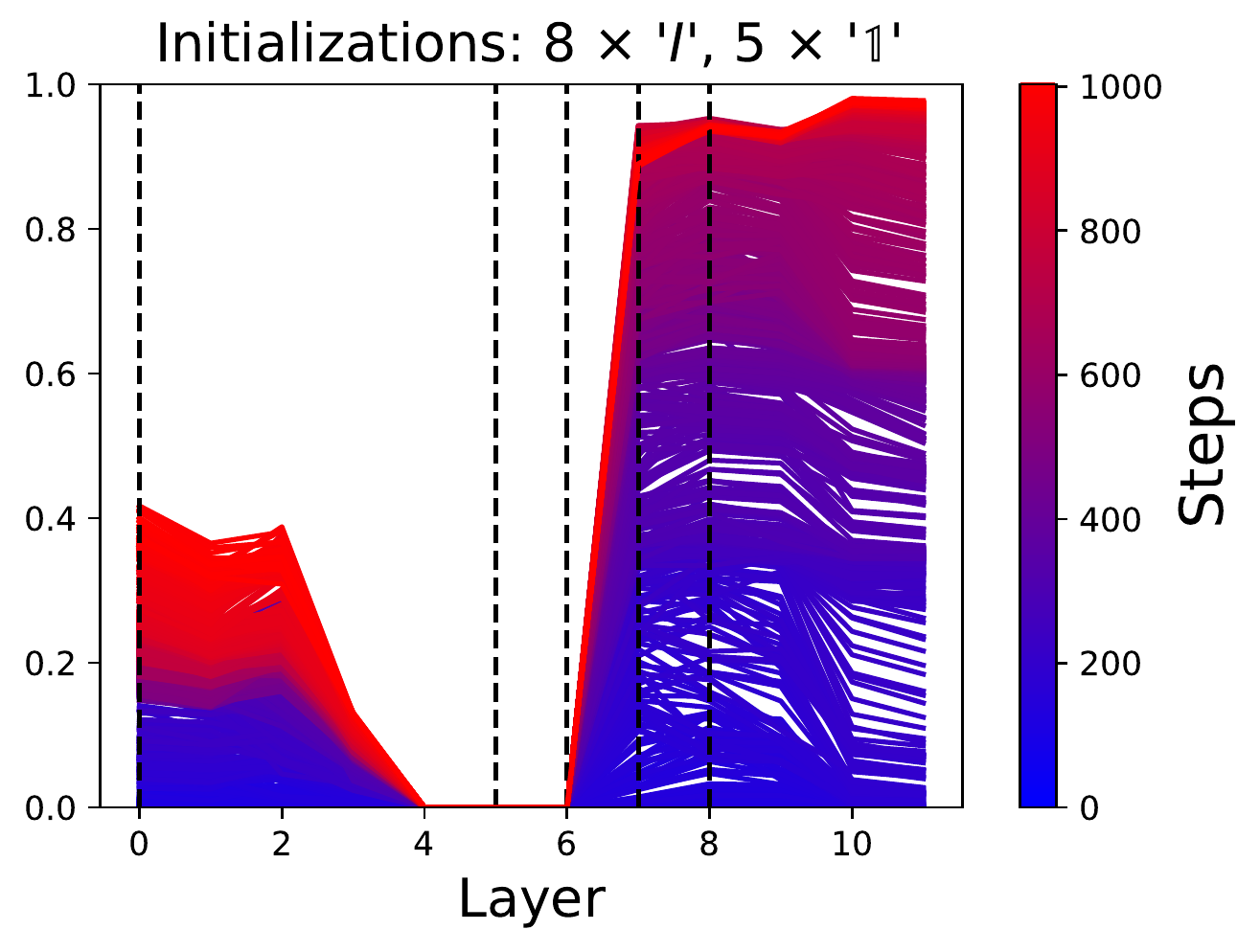}
    \end{subfigure}
    \\
    \centering
    \begin{subfigure}{}
        \centering
    	\includegraphics[width=\figscale, height=2.0in]{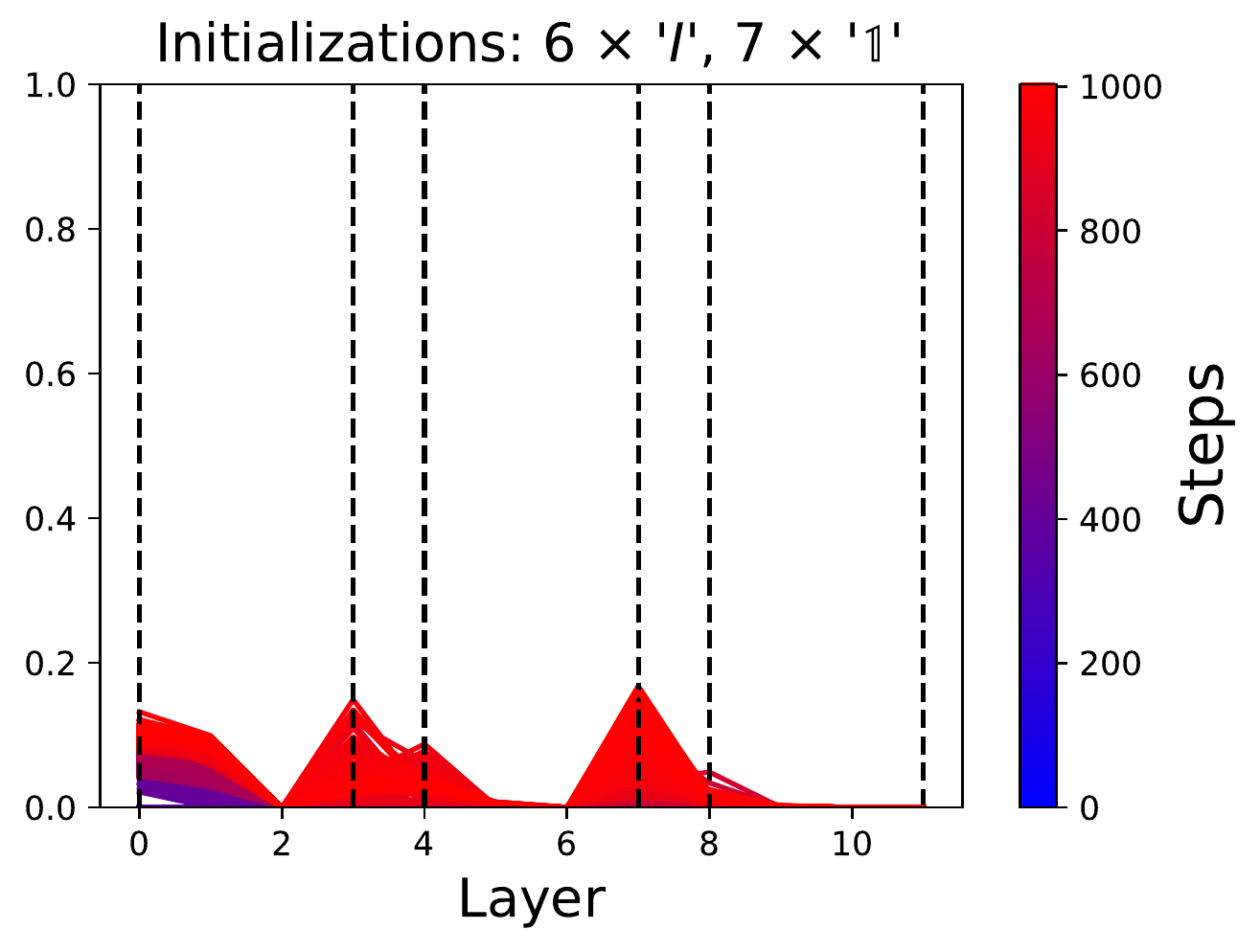}
    \end{subfigure}
    \begin{subfigure}{}
        \centering
    	\includegraphics[width=\figscale, height=2.0in]{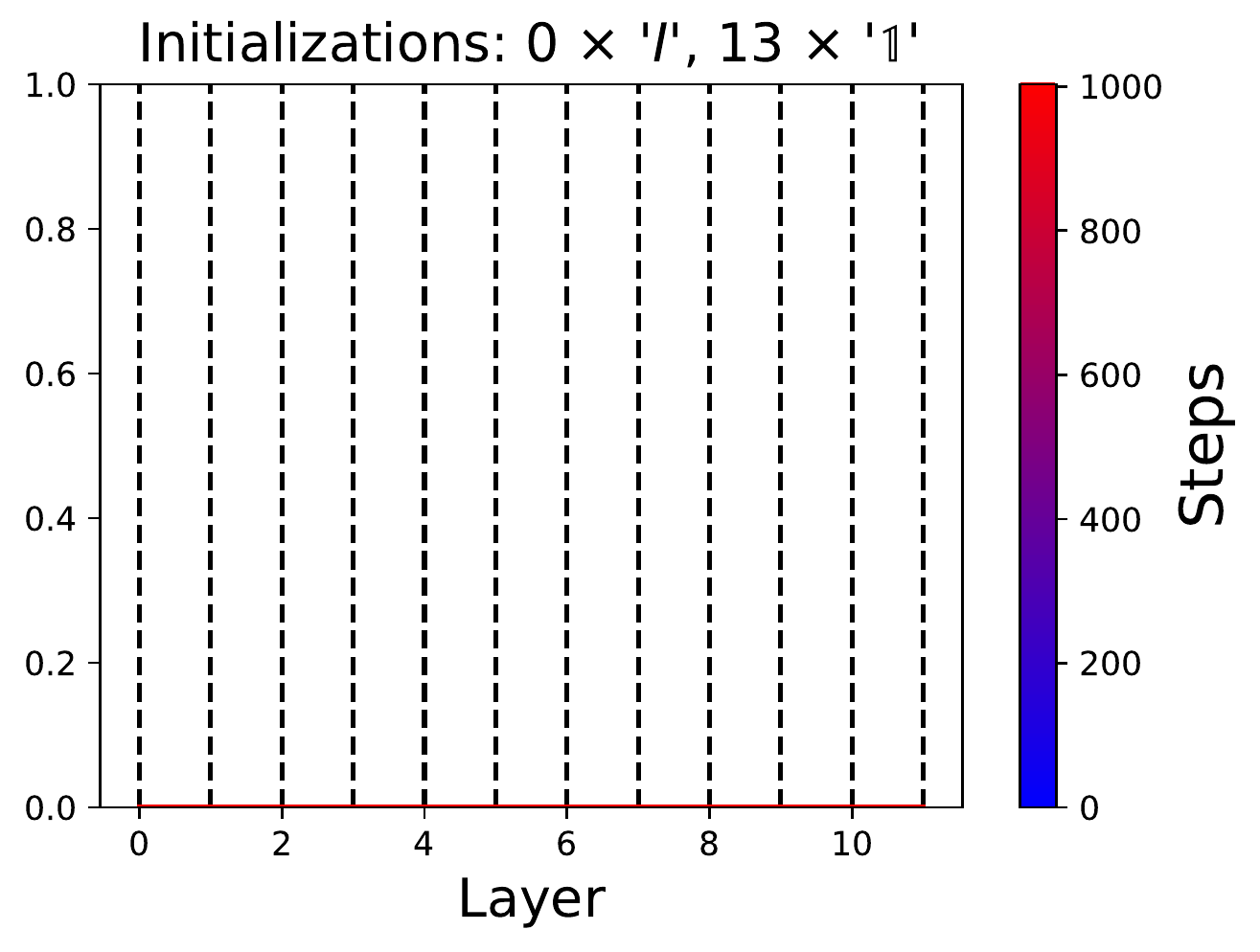}
    \end{subfigure}
    \\
    \caption{Symmetry breaking gradient perturbation decays as it propagates backwards through the network, due to the effect of the $\mathbb{1}$ initialization. As in figure \ref{fig:pertubation_prop}, we measure the ratio of the gradient magnitude that is 'perturbed' from the mean-gradient over all channels. We thus measure the ratio between $\left\Vert \Delta\otimes\left(I-\frac{1}{n}\bm{1}\bm{1}^{T}\right)\widehat{*} \frac{\partial\mathcal{L}(x,y)}{\partial\alpha^{(\ell)}(x)}\right\Vert _{F}$  to  $\left\Vert \frac{\partial\mathcal{L}(x,y)}{\partial\alpha^{(\ell)}(x)}\right\Vert _{F}$. The perturbation in the gradients will directly influence the updates of the weight tensors, causing symmetry break (decrease forward correlation). In networks with symmetrical initialization ($\mathbb{1}$-init), it takes longer for the perturbation to grow-- in both of the cases where the entire network was initialized with averaging initialization, the network remained symmetrical during the $1000$ steps presented in this figure. Like before, dashed lines mark the layers that were initialized with '$\mathbb{1}$'-init (while the other layers were initialized with $I$).} 
    \label{fig:pertubation_grad_prop}
\end{figure*}

\section{Comparison between symmetry breaking mechanisms}\label{sup:comparing_symbreak}

As seen in table \ref{tbl:constnet_results}, the hardware noise is a sufficient symmetry breaking mechanism for a ConstNet model initialized with identical features to perform as well as a similar model initialized with random initialization. It is, nevertheless, also apparent that not all breaking mechanisms perform equally well: When using $1\%$ dropout as the sole symmetry breaking mechanism, there remains a gap of more than $1\%$ between the model final accuracy to the accuracy of a randomly initialized model. The reason for dropout's failure is simple: On a standard setup, dropout is not being applied on the first convolutional layer, and it is therefore unable to break symmetry in this layer. Applying dropout on all layers does close this gap.

Another concern, is that due to our reliance hardware noise, it remains unclear how the same models will perform when used over different hardware. In table \ref{tbl:symbreaking_comparison}, we compare the accuracy of ConstNet models initialized with identical features and varying learning rates, using different GPUs. Our results show that the choice of hardware is indeed significant. Furthermore, they provide an insight concerning the effects of network quantization: Unlike the 32-bit floating point models tested before, models quantized to 16-bit, do not train successfully without the addition of dropout, as a complementary symmetry breaking mechanism. This result can be useful for the study of network-quantization, as it highlights a property of quantization that had not been studied --- 32bit and 16bit training are typically expected to perform equally well for standard image-classification tasks. In this experiment, we used 200 epochs for training, which is sub-optimal ($\sim\%0.5$ drop baseline accuracy). Consequently, we can expect lower accuracy in cases where the symmetry breaking process is slow.

\begin{table}[b]
\begin{center}
 \begin{tabular}{| c | c | c | c | c| c|}
 \hline
Hardware           & Data-type & \makecell{Computation \\ Noise}  & \multicolumn{3}{c|}{\makecell{Drop Rate:}\vspace{-0.3cm}}  \\ [1.0ex]
 \cline{4-6}
                   &           &                 &   $0\%$        &   $0.1\%$ &   $1\%$   \\ [1.0ex]
                   
 \hline
GeForce GTX-1080  & FP32       &        \cmark   &   \green{*94.09\% }      &   \lgreen{94.83\%} &  \lgreen{*94.8\%}  \\ 
 \hline
GeForce GTX-1080  & FP32       &        \xmark   &   \red{24.96\%}      &   \orange{92.72\%} &  \orange{92.99\%}  \\ 
 \hline
GeForce RTX-2080  & FP32       &        \cmark   &   \red{*77.81\%}      &   \lgreen{94.78\%} &  \lgreen{94.82\%}  \\
\hline
GeForce RTX-2080  & FP32       &        \xmark   &   \red{24.80\%}      &   \orange{92.69\%} &  \orange{92.58\%}  \\ 
\hline
GeForce RTX-2080  & FP16       &        \cmark   &   \red{38.61\%}      &   \lgreen{94.58\%} &  \lgreen{94.62\%}  \\ 
 \hline
GeForce RTX-2080  & FP16       &        \xmark   &   \red{22.55\%}      &   \orange{91.99\%} &  \orange{92.55\%}  \\ 
 \hline

\end{tabular}
\end{center}
\caption{Experiment results (Test Accuracy) of training a ConstNet over the CIFAR10 dataset, with 200 epochs, $0$ initialization ensuring identical features, and varying symmetry-breaking mechanisms. Dropout/ Computation noise by themselves are insufficient for successful training: the symmetry breaking in this case is too slow, and it come in the expense of the training process. Additionally, we can see a big gap between the effects of computation noise of different GPUs. When utilizing a 16-bit floating-point hardware architecture for a FP16 model, the hardware noise is insufficient and the training fails. *These results are discussed in more detail in appendix \ref{sup:lim}.}
\label{tbl:symbreaking_comparison}
\end{table}

\afterpage{

\subsection{Limitations of symmetry breaking during training}\label{sup:lim}
As we can see in table \ref{tbl:symbreaking_comparison}, the negative effects of constant initialization are further highlighted in this experiment. In particular, we can see that the accuracy of $0$ initialized ConstNet without dropout drops by more than $\%1.5$ in comparison to its result for 300 epochs, as presented in table \ref{tbl:constnet_results}. A more thorough examination of the results shows the reason for this performance drop: On several cases ($1/3$ of the seeds tested), the hardware computation mechanism managed to break symmetry eventually, but spent too many epochs doing it. The distribution of the results for several cases can be seen in figure \ref{fig:histogram}. When running the same model with the exact same configuration on a different GPU (Nvidia RTX), the degradation in performance is much more significant, with more than $1/2$ of the experiments ending in failure ($<80\%$).

\begin{figure*}[t]
    \centering
    \begin{subfigure}
        \centering
        \includegraphics[width=2.8in]{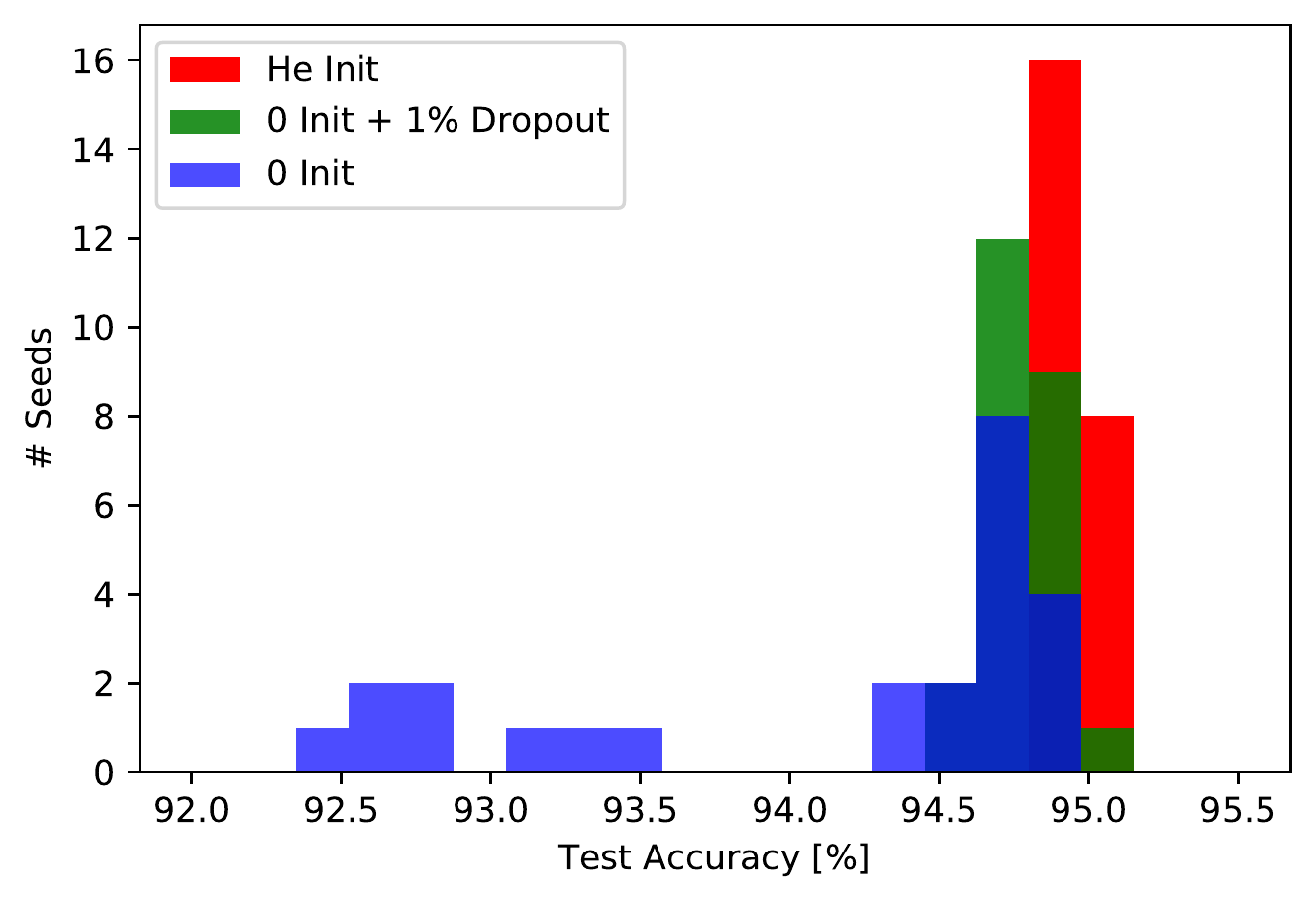}
    \end{subfigure}
    \begin{subfigure}{}
        \centering
        \includegraphics[width=2.8in]{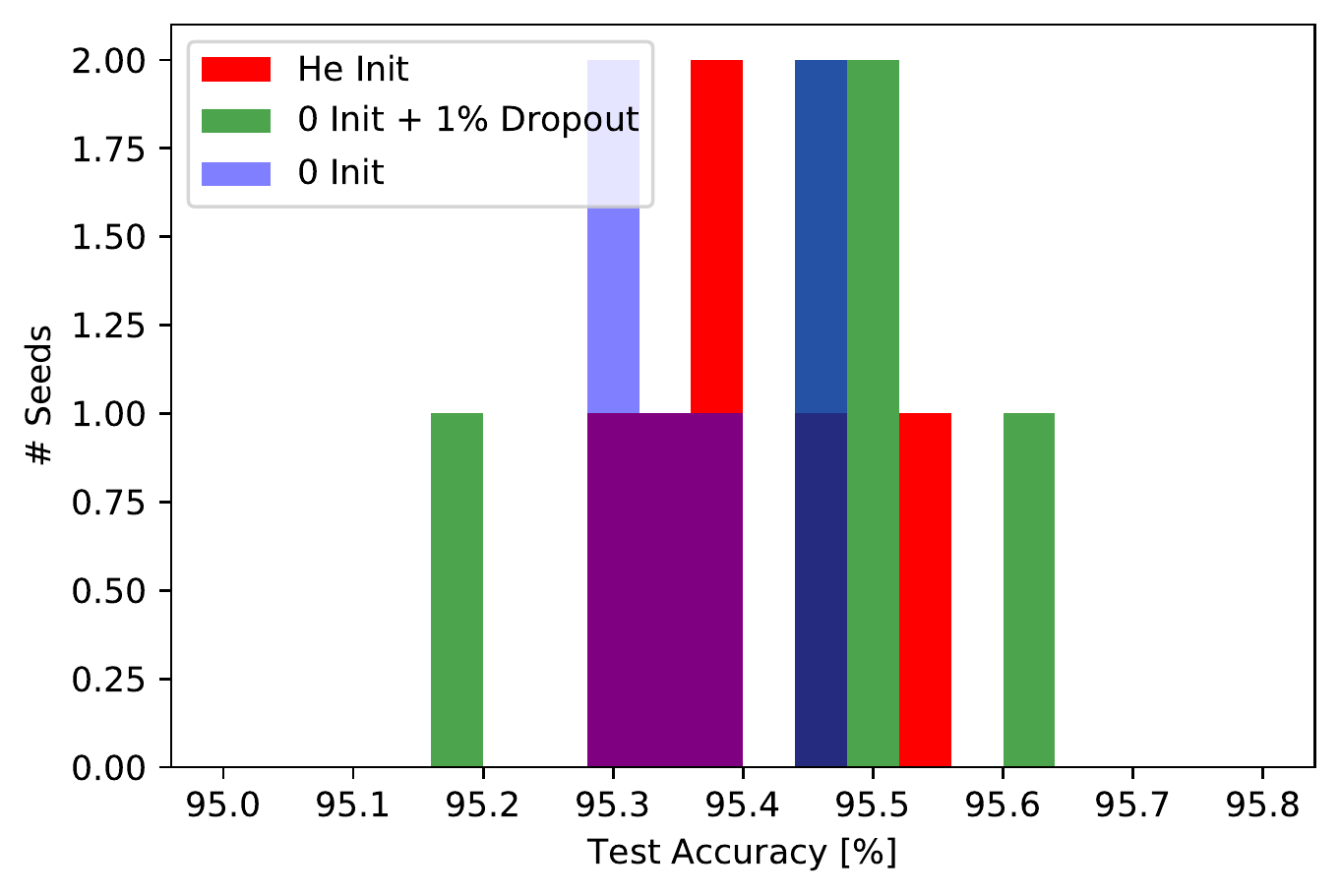}
    \end{subfigure}
    
    \caption{Distribution of results (Test accuracy) for training ConstNet with $200$ epochs (left panel) and $300$ epochs (right panel). Training ConstNet with hardware noise as the sole symmetry breaking mechanism is unreliable: On several instances, the symmetry breaking process was too slow for the model to fully recover during a $200$ epochs training.}
    \label{fig:histogram}
\end{figure*}

\begin{table}[b]
\begin{center}
 \begin{tabular}{| c | c | c | c | c| c|}
  \hline

 Data-set &   \multicolumn{4}{c|}{Test accuracy [\%]}   \\ [1.0ex]
\cline{2-5}
          &   \multicolumn{2}{c|}{Vanilla}    & \multicolumn{2}{c|}{Mixup  $(\alpha = 1.0)$}  \\ [1.0ex]
\cline{2-5}
           & '$0$'-Init          & i.i.d-Init      & '$0$'-Init          &    i.i.d-Init   \\ [1.0ex]
 \hline
Cifar10    & $94.82 \pm 0.14$    &$94.90 \pm 0.06$  & $96.22 \pm 0.08$   & $96.14 \pm 0.01$   \\ 
 \hline
Cifar100   &  $77.02 \pm 0.14$   & $76.42 \pm 0.20$ & $79.08 \pm 0.12$   &  $79.17 \pm 0.10$  \\
 \hline
CINIC   & $87.37 \pm 0.02$  &  $87.50 \pm 0.01$ &  $89.09 \pm 0.05$ &  $89.21 \pm 0.01$ \\
 \hline

\end{tabular}
\end{center}
\caption{\textbf{Generalization of the results}: Comparison of '$0$' and i.i.d initializations of ConstNet, for varying data-sets. In all runs, we used the training parameters as detailed in appendix \ref{sup:constnet_detailed}: specifically, we ran 2 seeds per configuration, with non- deterministic computation, and dropout with a drop rate of $1\%$. For all the data-sets and for all configurations, ConstNet achieved high accuracies, independent on the method of initialization (And thus, independent on the number of unique features at initialization).}
\label{tbl:generality}
\end{table}

\section{Additional results for LeakyNet}

In section \ref{sec:leakyNet}, we presented LeakyNet, and examined the effect of mixing averaging ('$\mathbb{1}$') and identity ('$I$') initializations, by initializing arbitrary layers using different initializations. As complementary results, we were also interested to see how the results may be affected by using a linear mixture of the different initializations --- so each of the $12$ layers is initialized to  $\gamma I + (1-\gamma) \mathbb{1}$ instead. Our results indicate that for any value of $\gamma$ larger than $0$, the training is equally successful, but is still significantly worse than training with random initialization. In this case, we can see that the forward correlations for each layer at the end of training are alternately similar/dissimilar, as can be seen in figure \ref{fig:gammaRun}.

\begin{figure}[h!]
    \centering
    \begin{subfigure}
        \centering
        \includegraphics[width=3.0in,height=1.85in]{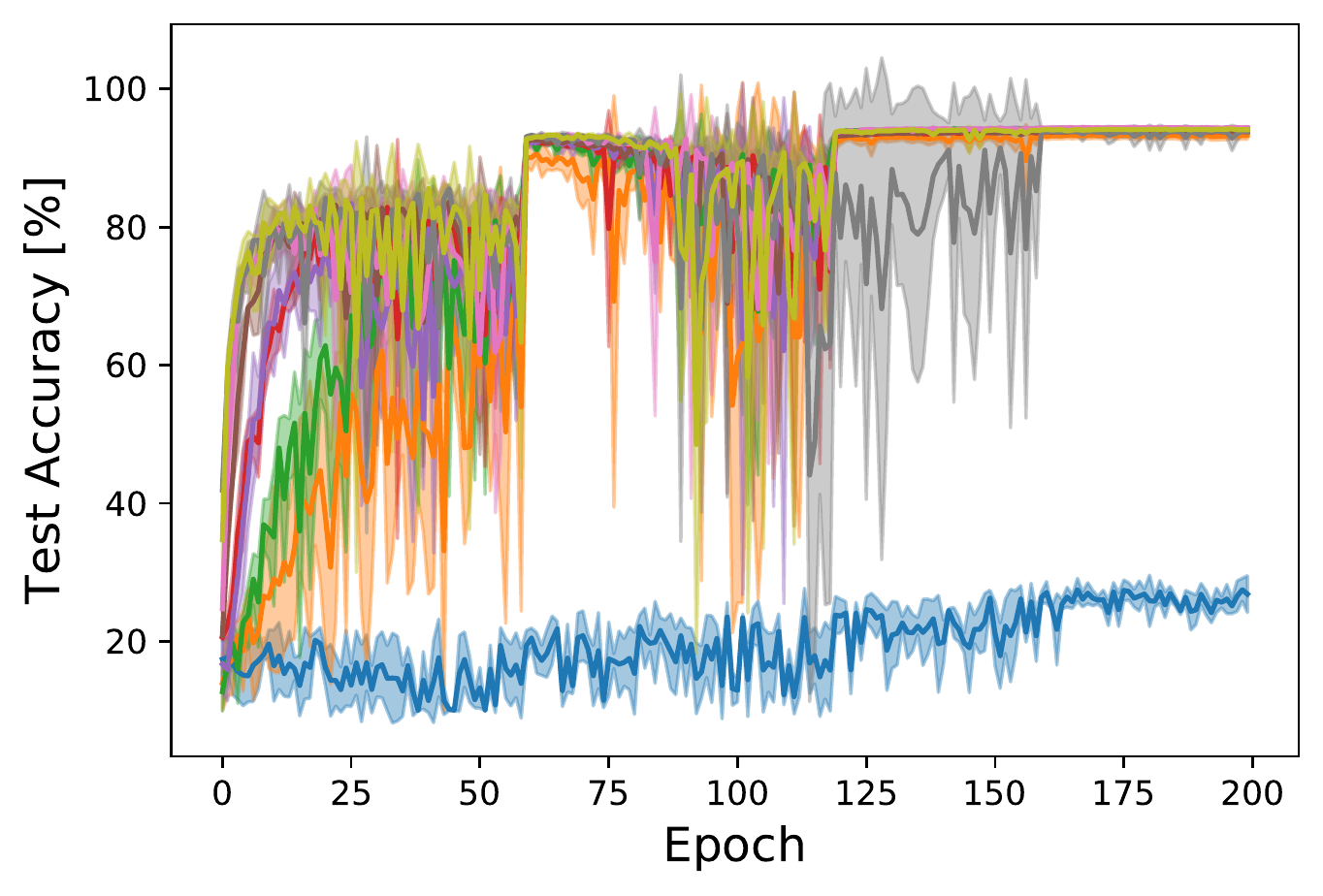}
    \end{subfigure}
    \begin{subfigure}{}
        \centering
    	\includegraphics[width=3.4in, height=1.89in]{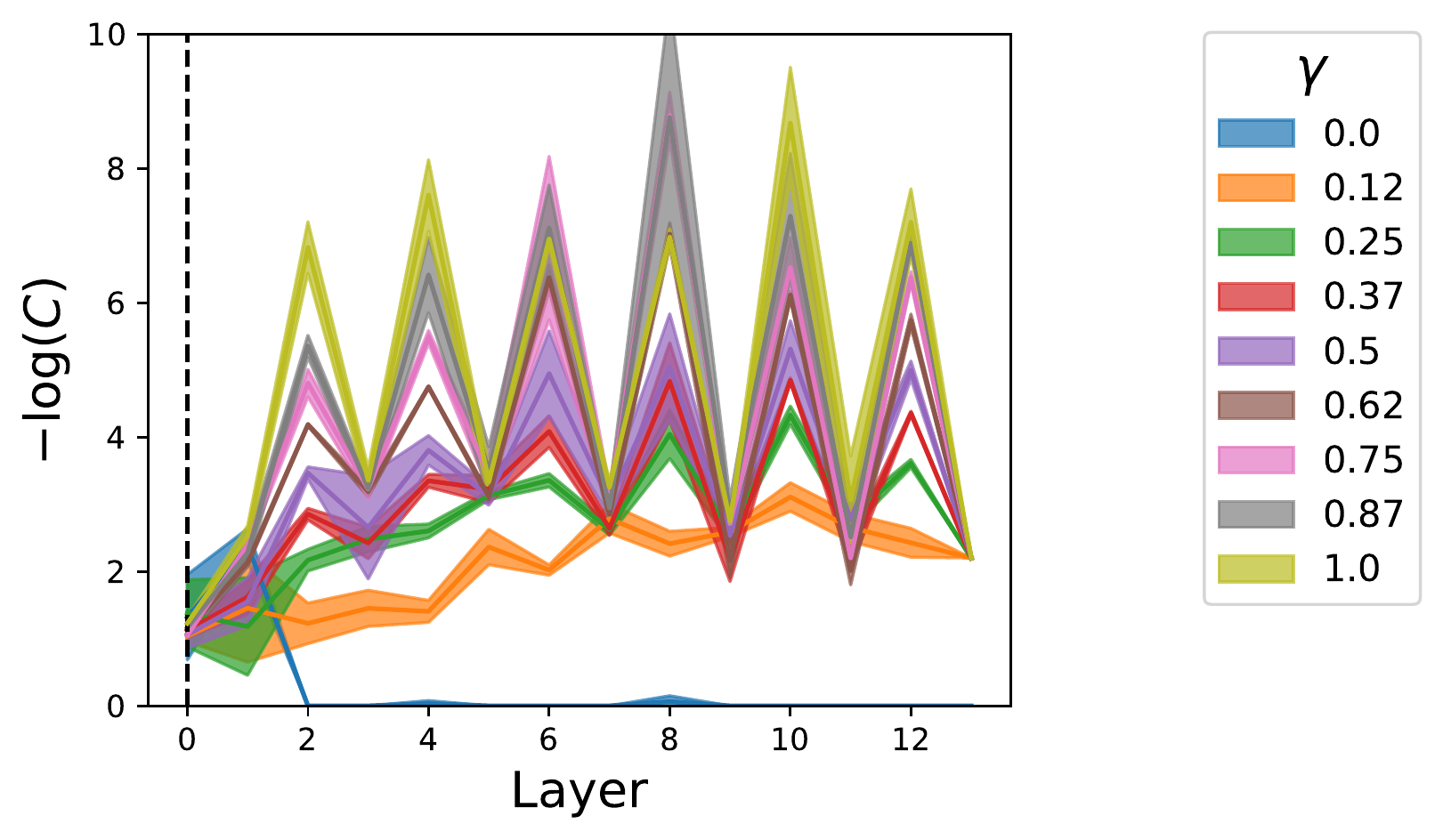}
    \end{subfigure}
    
    \caption{LeakyNets with layers initialized to a linear combination of $\mathbb{1}$ and '$I$' initializations (3 seeds per configuration). The final accuracy was similar for all runs with $\gamma > 0$. The right panel describes the final forward correlation for all layers. With layer 0 being the initial convolution, only the odd layers have similar forward correlation values. The forward correlations of the even layers at the end of the run are highly dependent on their values at initialization.}
    \label{fig:gammaRun}
\end{figure}

\begin{figure}[b!]
    \centering
    \begin{subfigure}
        \centering
        \includegraphics[width=3.0in]{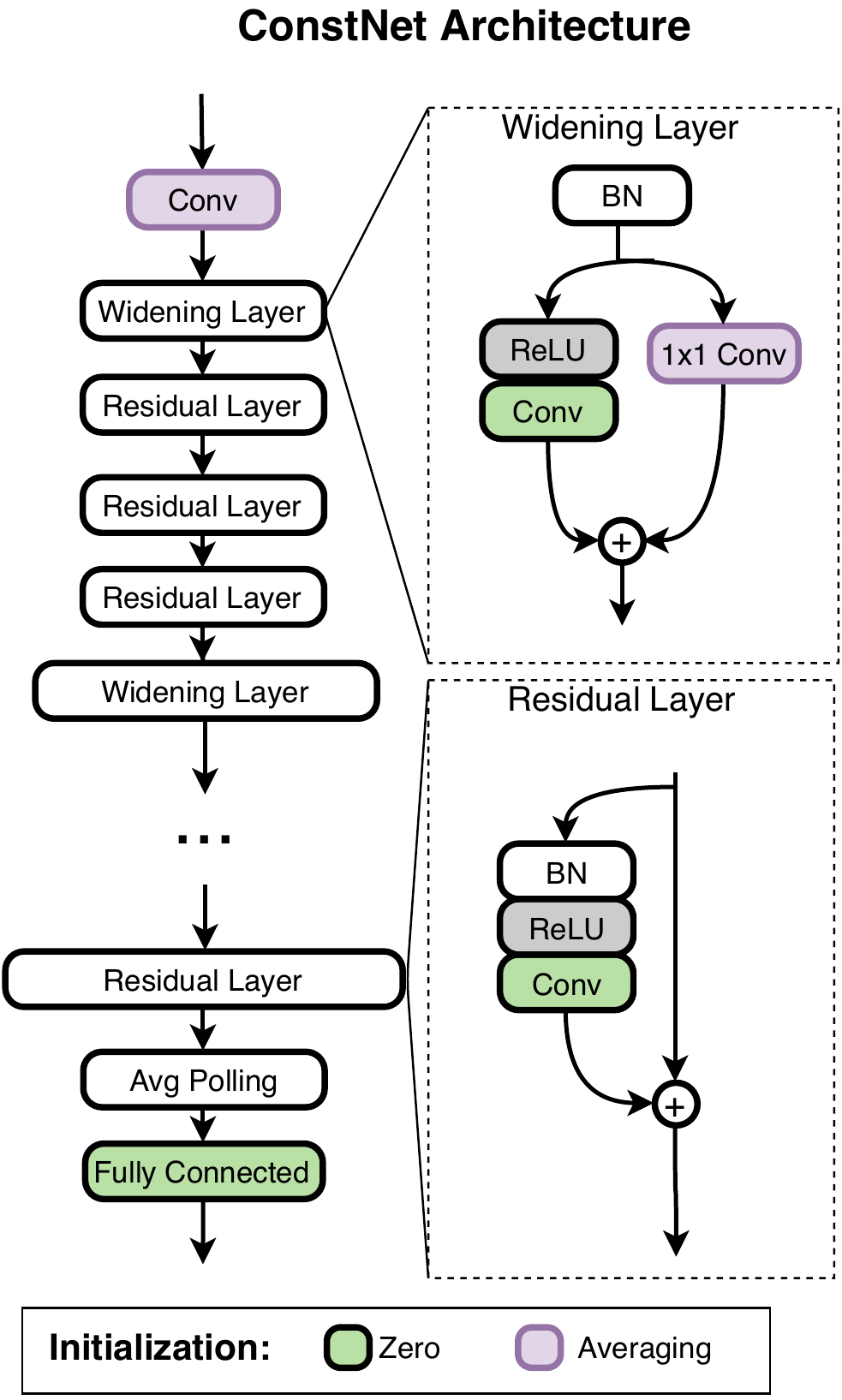}
    \end{subfigure}
    \caption{Detailed architecture of ConstNet.}
    \label{fig:constnetArch}
\end{figure}

}

\end{document}